\documentclass{article} 
\usepackage{iclr2027_conference,times}

\usepackage{graphicx}
\usepackage{subcaption}
\graphicspath{{figures/}}

\usepackage{tikz}
\usetikzlibrary{backgrounds,arrows,shapes,shapes.geometric,shapes.misc,positioning,fit}

\usepackage{amsmath,amssymb,amsthm,mathtools,bm}

\usepackage{booktabs}
\usepackage[ruled,vlined]{algorithm2e}

\usepackage[colorlinks=true,citecolor=blue,linkcolor=blue]{hyperref}

\usepackage{dma-macros}

\newtheorem{theorem}{Theorem}[section]
\newtheorem{lemma}[theorem]{Lemma}
\newtheorem{corollary}[theorem]{Corollary}
\newtheorem{proposition}[theorem]{Proposition}
\newtheorem{definition}[theorem]{Definition}
\newtheorem{remark}[theorem]{Remark}

\title{Direct Message Approximation (DMA): 
A Consistency-Based Framework for Tractable 
Approximate Inference on Factor Graphs}

\author{Ralf Herbrich\thanks{Equal contribution.} , Rainer Schlosser\footnotemark[1] , 
Jan Lemcke, Johann Ukrow, Anna Kazachkova, \\
\bf Nicolas Alder, Leonhard Hennicke, Theo Bardey, Nico Grimm, Luca Kleinschmidt, \\
\bf Philipp Kolbe, Cezary Kujath, Johanna Schlimme, Karl Matti Schütz\\
Hasso Plattner Institute \\
University of Potsdam \\
Potsdam, Germany \\
\texttt{\{ralf.herbrich,rainer.schlosser,jan.lemcke,johann.ukrow,}\\
\texttt{anna.kazachkova,nicolas.alder,leonhard.hennicke,theo.bardey,}\\
\texttt{nico.grimm,luca.kleinschmidt,philipp.kolbe,cezary.kujath,}\\
\texttt{johanna.schlimme,karlmatti.schuetz\}@hpi.de}
}

\iclrfinalcopy 
\begin{document}

\maketitle


\begin{abstract}
Approximate message passing on factor graphs underlies two dominant families
of probabilistic inference algorithms: expectation propagation~(EP) and
variational message passing~(VMP).
Both methods approximate the \emph{marginal} at each factor edge, forcing an
iterative round-robin schedule, risking negative-precision messages, and, for
VMP, collapsing to point estimates at Dirac-delta factors.
We introduce \emph{Direct Message Approximation}~(DMA), which approximates
factor-to-variable \emph{messages} directly rather than the marginal.
For normalisable factors, we define a consistency condition 
(requiring exactness when all other incoming messages are Dirac deltas) to guide message construction.
We prove a master theorem (proper messages, any graph) 
bounding marginal KL from message KL, with three structural corollaries: 
Dirac-input consistency, no EP-style inner-loop iteration, and no negative-precision messages.
Further, we prove a complementary $O(1/r^2)$ guarantee for the 
inherently improper backward message of the product factor, 
whose closed-form treatment has resisted prior work.
%
%
As a concrete instantiation, we derive explicit DMA messages for the product 
and leaky-ReLU factors and assemble a Bayesian neural network (BNN) inference algorithm with
one forward/backward sweep per training example and no gradient learning-rate hyperparameter,
validating that the structural guarantees translate to predictive uncertainty 
that widens in data-sparse regions, including under model mismatch.
\end{abstract}


\section{Introduction}
\label{sec:intro}

Factor graphs provide a unifying language for probabilistic inference: a
joint density factorises into local potentials, and the sum-product algorithm
computes exact marginals on trees by passing messages along edges
\citep{KschischangFreyLoeliger2001}.
When factors are non-conjugate to the message family (as in neural network
likelihoods, latent Dirichlet models, and models with nonlinear potentials),
the factor-to-variable integral is intractable, and approximate message
passing is required.

Two classical strategies dominate.
Expectation propagation \citep[EP;][]{Minka2001} minimises the forward KL
divergence from the true posterior marginal to a tractable exponential-family
approximation, then recovers the approximate factor-to-variable message by
dividing out the variable-to-factor message at the same edge.
Variational message passing \citep[VMP;][]{WinBis2005a} instead maximises
the evidence lower bound under a factorised approximation, minimising the
reverse KL\@.
Despite their theoretical differences, both methods share a structural
commitment: they approximate the \emph{marginal} $\hat{p}_{X_j}$ at each
factor edge, and recover the outgoing factor-to-variable message by
\begin{equation}
  \hat{m}_{f \to X_j}(\cdot)
    \;=\; \hat{p}_{X_j}(\cdot) \;/\; \msgto{X_j}{f}(\cdot).
  \label{eq:intro:ep-division}
\end{equation}

\paragraph{Three pathologies of marginal-based approximation.}
This marginal-first design induces three operational problems, each well-known
in the EP and VMP literature.

\begin{enumerate}

\item \textbf{Iterative schedule.}
Because $\hat{m}_{f \to X_j}$ depends on the current incoming message
$\msgto{X_j}{f}$ via the division in~\eqref{eq:intro:ep-division}, updating
one outgoing message changes the variable-to-factor message at the same edge,
invalidating the outgoing messages of all neighbouring factors.
The only remedy is a round-robin sweep that iterates until a fixed point is
reached, typically requiring many passes over the entire factor graph \citep{Minka2001}.

\item \textbf{Invalid messages in Gaussian EP.}
When the projected Gaussian marginal $\hat{p}_{X_j}$ is wider than the
incoming message $\msgto{X_j}{f}$, the division in~\eqref{eq:intro:ep-division}
produces a Gaussian with negative precision.
Such messages are invalid probability distributions; they arise routinely when
a factor disperses mass (e.g.\ a product of two uncertain variables) and can
destabilise subsequent updates by propagating indefinitely.

\item \textbf{Dirac-delta collapse in VMP.}
For a factor $f(\mathbf{x}) = \dirac{x_j - g(\mathbf{x}_{-j})}$ encoding a
deterministic relation, the VMP update~\eqref{eq:intro:ep-division} collapses
$\hat{p}_{X_j}$ to a Dirac delta, setting $\hat{m}_{f \to X_j}$ to a point mass
and eliminating all posterior variance.
Since every parameterised layer in a feedforward neural network is such a
Dirac-delta factor, VMP cannot maintain a non-trivial weight posterior in
Bayesian neural networks \citep{WinBis2005a}.

\end{enumerate}

\paragraph{Our proposal.}
We argue that the root cause of all three pathologies is the division in
\eqref{eq:intro:ep-division}: it creates an incoming-message dependence that
forces iteration and can produce invalid outputs.
DMA eliminates cavity division from the construction 
of approximate factor-to-variable messages by approximating the
\emph{message} $\hat{m}_{f \to X_j}$ directly rather than the marginal.

The natural design criterion for a message approximation is a
\emph{consistency condition}: it should agree with the exact sum-product
message whenever all other inputs to the factor are point masses.
In that limit the factor-to-variable integral reduces to a deterministic
function evaluation and is always tractable, so the exact message is known
and we can demand recovery.
We formalise this as Definition~\ref{def:dma} and call the resulting
approximation a \emph{direct message approximation} (DMA).
Because no per-factor message computation requires the ratio~\eqref{eq:intro:ep-division},
the iterative schedule is unnecessary, negative-precision messages cannot arise
from message construction itself, and Dirac-delta factors satisfy the consistency
condition exactly in the concentrated-input limit.
(Training does use a division when replacing old factor messages across epochs,
see Algorithm~\ref{alg:bnn}, but this outer-loop replacement is distinct from
the per-factor computation and does not create the feedback loop or
negative-precision pathologies of~\eqref{eq:intro:ep-division}.)

\medskip\noindent
The key distinction from assumed-density filtering
\citep[ADF;][]{Lauritzen1992,OpperWinther1999}, which also skips cavity
division, is that DMA adds a formal \emph{consistency axiom} that defines the admissible
direct-approximation principle, here instantiated through moment-matching
projection onto the Gaussian family (Section~\ref{sec:dma:construction}), an \emph{edge-local} master theorem
bounding marginal KL from message KL without any graph contraction condition
(Section~\ref{sec:dma}), and an O($1/r^2$) guarantee for the improper product-backward message
that falls outside the classical moment-matching scope
(Section~\ref{sec:factors}); see Section~\ref{sec:related} for a detailed
comparison with EP and ADF.

\paragraph{Contributions.}
\begin{enumerate}

\item \textbf{DMA framework and master theorem (Section~\ref{sec:dma}).}
We define a \emph{consistency condition} as the design criterion for direct
message approximation, give a construction recipe via moment matching, and
prove a \emph{master theorem} (Theorem~\ref{thm:dma-master}) bounding the
marginal KL at any edge by $O(\delta)$ in the message KL
$\delta$, edge-locally and without any graph contraction condition.
The closest prior result \citep{IhlerFisherWillsky2005} bounds perturbations
in loopy BP via graph-level contraction rates; our bound applies to
EP-style projected messages and holds on any graph.

\item \textbf{Product-factor DMA construction (Section~\ref{sec:factors}).}
We derive DMA messages for the product factor $\dirac{z - xy}$, the central 
and technically most demanding case: it is the main factor required for BNN 
weight updates, where the backward distribution of $X = Z/Y$ has no closed-form 
Gaussian representation and where the standard EP backward message is structurally improper.
We use a \emph{log-normal intermediate technique}
to characterise the ratio distribution.
Moreover, we prove an $O(1/r^2)$ accuracy bound in the concentrated-input regime
(Theorem~\ref{thm:product-bwd-bound}), an analysis outside the scope of the master theorem.
The complementary activation factor $\dirac{y - \relu{x}{\alpha}}$ is handled by
truncated-Gaussian moment matching; both factors verify the consistency condition analytically.

\item \textbf{Instantiation and validation (Section~\ref{sec:bnn}).}
The two factors are sufficient to assemble a complete BNN inference
algorithm.
Three corollaries of Theorem~\ref{thm:dma-master} give: no EP-style
inner-loop iteration (Corollary~\ref{cor:single-pass}), no negative-precision
messages by construction (Corollary~\ref{cor:no-neg-prec}), and Dirac-input
consistency on all linear factors (Corollary~\ref{cor:dirac-exact}).
Experiments confirm convergence, positive weight variances, and
structurally widening extrapolation uncertainty; a $23{\times}$ larger
network (Appendix~\ref{app:large}) confirms the procedure remains
computationally viable without algorithmic changes.
\end{enumerate}

Sections~\ref{sec:dma}--\ref{sec:related} cover the DMA framework, factor
derivations, the BNN algorithm, and related work; background, all proofs and experimental
details are in the Appendix \ref{sec:background} - \ref{app:large}.


\section{Direct Message Approximation}
\label{sec:dma}

\paragraph{Notation.}
We follow the factor-graph and message-passing conventions of
Appendix~\ref{sec:background}; readers unfamiliar with the sum-product
algorithm may consult it for background.
We write $\msgto{f}{X_j}$ for the factor-to-variable message on edge
$(f, X_j)$ and $\msgto{X_j}{f}$ for the variable-to-factor message.
The marginal at $X_j$ is $p_{X_j} = \prod_{i \in \neighbours{X_j}}
\msgto{f_i}{X_j}$, and $\mathcal{Q}$ denotes a fixed exponential family
(Gaussian throughout this paper).
$\KL{\cdot}{\cdot}$ is the forward KL divergence.

DMA approximates the outgoing message directly, without forming the marginal
at all; this avoids the division in~\eqref{eq:intro:ep-division} and all
three pathologies it creates.
This section formalises the approach, proves a master theorem bounding the
resulting marginal error, and derives three structural corollaries for BNN
inference.

\subsection{Definition and Consistency Condition}
\label{sec:dma:def}

The exact factor-to-variable message~\eqref{eq:bg:f-to-x} is, in general,
intractable.
There is, however, one regime in which it is always tractable: when all other
incoming messages are Dirac deltas $\dirac{\cdot - x_k}$.
Then the multi-dimensional integral collapses to a single factor evaluation,
\begin{equation}
  \msgto{f_i}{X_j}(x_j)
    = \int f_i(x_j, \mathbf{x}_{-j})
      \prod_{k \neq j} \dirac{x_k - \bar{x}_k}
      \intd{\mathbf{x}_{-j}}
    = f_i(x_j, \bar{\mathbf{x}}_{-j}),
  \label{eq:dma:dirac-limit}
\end{equation}
which is just the factor evaluated at the fixed point values $\bar{\mathbf{x}}_{-j}$.
The resulting message is an unnormalised one-dimensional density in $x_j$; for
Gaussian $\mathcal{Q}$ and the common case where $f_i(\cdot, \bar{\mathbf{x}}_{-j})$
is a Gaussian likelihood or a pushforward of a Gaussian through a smooth map,
the projection onto $\mathcal{Q}$ is tractable in closed form.

The consistency condition asks that a message approximation converge to the
exact message as inputs concentrate toward this tractable limit.

\begin{definition}[Direct Message Approximation]
\label{def:dma}
Let $f_i$ be a factor in a factor graph with neighbours $\neighbours{f_i}$,
let $X_j \in \neighbours{f_i}$, and suppose $f_i(\cdot, \bar{\mathbf{x}}_{-j})$
is normalisable for every $\bar{\mathbf{x}}_{-j}$.
A family of distributions $\hat{m}_{f_i \to X_j}$, parametrised by the
incoming messages $\{\msgto{X_k}{f_i}\}_{k \neq j}$, from an exponential
family $\mathcal{Q}$ is a \emph{direct message approximation} (DMA) if it
satisfies the \emph{concentration consistency condition}:
for every $\bar{\mathbf{x}}_{-j} \in \mathbb{R}^{|\neighbours{f_i}|-1}$,
if $\msgto{X_k}{f_i} = \mathcal{N}(\bar x_k, \sigma_k^2)$ for all $k \neq j$,
then as $\sigma_k \to 0$,
\begin{equation}
  \hat{m}_{f_i \to X_j}
    \;\xrightarrow{\;w\;}\;
    \frac{f_i\!\left(\cdot,\;\bar{\mathbf{x}}_{-j}\right)}
         {\displaystyle\int f_i\!\left(x_j,\;\bar{\mathbf{x}}_{-j}\right)\mathrm{d}x_j},
  \label{eq:dma:consistency}
\end{equation}
where $\xrightarrow{w}$ denotes weak convergence of probability measures
and the right-hand side is the normalised exact message~\eqref{eq:dma:dirac-limit}
evaluated at the concentrated inputs.
\end{definition}

\begin{remark}
The normalisability condition in Definition~\ref{def:dma} holds for all
factor-edge pairs in this paper except one: the product factor
\emph{backward} message is improper for general Gaussian inputs
(Remark~\ref{rem:product-improper}).
This case is handled by a separate log-normal intermediate construction
whose approximation quality is characterised by
Theorem~\ref{thm:product-bwd-bound}: the KL to a truncated proper reference
is $O(1/r^2)$ in the input signal-to-noise ratio $r$, recovering
concentration consistency in the limit $r \to \infty$.
\end{remark}

The consistency condition is a necessary but not sufficient design
criterion: the moment-matching construction
(Section~\ref{sec:dma:construction}) selects a specific DMA satisfying
it, and Theorem~\ref{thm:dma-master} bounds the marginal error at any
input width---consistency, construction, and master theorem together
constitute the local theoretical guarantees for individual DMA message computations.

\subsection{Construction Recipe}
\label{sec:dma:construction}

For the class of factors that arises in Bayesian neural networks --- Dirac-delta
factors of the form $f(\mathbf{x}) = \dirac{x_j - g(\mathbf{x}_{-j})}$
encoding a continuous deterministic function $g$ --- there is a systematic recipe for
constructing a Gaussian DMA.

\paragraph{Recipe.}
Let the incoming messages $\{\msgto{X_k}{f}\}_{k \neq j}$ be Gaussians
$\mathcal{N}(\bar{x}_k, \sigma_k^2)$.
\begin{enumerate}
  \item \textbf{Compute moments.}
    Treat the inputs $\{X_k\}_{k \neq j}$ as independent Gaussian random
    variables with the given means and variances, and compute the first two
    moments of $x_j = g(\mathbf{x}_{-j})$ under this joint:
    \[
      \mu_{x_j} := \mathbb{E}[g(\mathbf{X}_{-j})],
      \qquad
      \sigma_{x_j}^2 := \mathrm{Var}[g(\mathbf{X}_{-j})].
    \]
  \item \textbf{Project.}
    Set $\hat{m}_{f \to X_j} := \mathcal{N}(\mu_{x_j},\, \sigma_{x_j}^2)$
    via Theorem~\ref{thm:kl-min} (moment matching minimises the forward KL
    to the Gaussian family).
\end{enumerate}

\paragraph{Consistency verification.}
In the Dirac limit $\sigma_k \to 0$ for all $k \neq j$, the inputs
concentrate on their means $\bar{x}_k$, so
$\mu_{x_j} \to g(\bar{\mathbf{x}}_{-j})$ and $\sigma_{x_j}^2 \to 0$.
The approximate message converges to $\dirac{\cdot - g(\bar{\mathbf{x}}_{-j})}$,
which equals the exact message~\eqref{eq:dma:dirac-limit} (a point mass at
the function value).
The KL divergence between two identical distributions is zero, so the
consistency condition~\eqref{eq:dma:consistency} holds.

\subsection{Master Theorem and Consequences}
\label{sec:dma:master}

The master theorem bounds the error in the marginal $p_{X_j}$ that results
from using a DMA in place of the exact message.
The bound is edge-local: it depends only on the quality $\delta$ of the
single outgoing message, and on the sup-norm of the incoming message at the
same edge.

\begin{theorem}[DMA Master Theorem]
\label{thm:dma-master}
Let $f$ be a factor with target variable $X_j$.
Let $\msgto{f}{X_j}$ be the true normalised sum-product message and
$\hat{m}_{f \to X_j}$ a DMA; write
$\delta := \KL{\msgto{f}{X_j}}{\hat{m}_{f \to X_j}} < \infty$.
Let $\msgto{X_j}{f}$ be a normalised incoming message at the same edge,
define $p_{X_j} := \msgto{f}{X_j} \cdot \msgto{X_j}{f}$ and
$\hat{p}_{X_j} := \hat{m}_{f \to X_j} \cdot \msgto{X_j}{f}$,
with normalising constants $Z$ and $\hat{Z}$ respectively.
Then
\begin{equation}
  \KL{\frac{p_{X_j}}{Z}}{\frac{\hat{p}_{X_j}}{\hat{Z}}}
  \;\leq\;
  \frac{\|\msgto{X_j}{f}\|_\infty}{Z}\,\delta.
  \label{eq:dma:master}
\end{equation}
\end{theorem}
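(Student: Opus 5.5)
The plan is to avoid bounding the normalising-constant correction $\log(\hat Z/Z)$ directly, since it has no sign control. Instead we pass through the \emph{generalised} (unnormalised) KL divergence
\[
  D(a\,\|\,b) := \int \Bigl( a\log\frac{a}{b} - a + b \Bigr),
\]
defined for nonnegative integrable functions $a,b$. Its integrand is pointwise nonnegative, because $u\log(u/v)-u+v\ge 0$ for all $u,v\ge 0$. Pointwise nonnegativity is exactly what lets us pull out a sup-norm. Write $m:=\msgto{f}{X_j}$, $q:=\hat m_{f\to X_j}$ and $g:=\msgto{X_j}{f}$. Then $p_{X_j}=mg$ and $\hat p_{X_j}=qg$.

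\textbf{Step 1 (normalised vs.\ unnormalised KL).} Put $P:=mg/Z$ and $Q:=qg/\hat Z$. Substituting $mg=ZP$ and $qg=\hat Z Q$ and expanding gives
\[
  D(mg\,\|\,qg) \;=\; Z\,\KL{P}{Q} + \Bigl(Z\log\frac{Z}{\hat Z} - Z + \hat Z\Bigr).
\]
The bracketed term is the scalar inequality above with $u=Z$, $v=\hat Z$, so it is nonnegative. Hence $Z\,\KL{P}{Q}\le D(mg\,\|\,qg)$.

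\textbf{Step 2 (factor out the incoming message).} The factor $g$ cancels inside the logarithm:
\[
  D(mg\,\|\,qg) = \int g\,\Bigl(m\log\frac{m}{q} - m + q\Bigr).
\]
Because the bracket is pointwise nonnegative, this is at most $\|g\|_\infty \int \bigl(m\log(m/q) - m + q\bigr)$. Since $m$ and $q$ are both normalised, the last integral equals $\KL{m}{q}=\delta$. Dividing by $Z$ gives~\eqref{eq:dma:master}.

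\textbf{Main obstacle.} The only delicate point is measure-theoretic bookkeeping rather than any estimate. I would note that $\delta<\infty$ forces $m\ll q$, so $mg\ll qg$ and every logarithm is well defined on the support of $m$. Where $m=0$, the integrand reduces to $q\ge 0$, which is consistent with the pointwise bound. I would also assume $0<Z<\infty$ and $\|g\|_\infty<\infty$, since otherwise the bound is vacuous. The natural first attempt is to write $\KL{P}{Q}=\frac1Z\int mg\log(m/q)+\log(\hat Z/Z)$ and bound each term separately. That fails for two reasons: $\log(m/q)$ can be negative, so the sup-norm cannot be pulled out, and Jensen bounds $\log(\hat Z/Z)$ in the wrong direction. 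Switching to the generalised KL is what resolves both issues.
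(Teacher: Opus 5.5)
Your proposal is correct and follows essentially the same route as the paper. The paper writes $Z\,\mathrm{KL}(P\,\|\,Q)=\int g\,D(m,\hat m)\,\mathrm{d}x - D(Z,\hat Z)$ with the scalar Bregman divergence $D(a,b)=a\log(a/b)-a+b$, which is your Steps~1--2 combined. It then drops the nonnegative $D(Z,\hat Z)$, pulls out $\|g\|_\infty$ using pointwise nonnegativity of the integrand, and uses $\int m=\int \hat m=1$ to reduce the remaining integral to $\delta$. Your extra bookkeeping (absolute continuity from $\delta<\infty$, and $0<Z<\infty$) goes slightly beyond the paper, which explicitly assumes only $\|g\|_\infty<\infty$.
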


\paragraph{Proof.}
The full proof is in Appendix~\ref{app:proofs:master}.

\paragraph{Interpretation.}
The bound is linear in $\delta$; for $\delta < 1$ (the approximation regime)
this is strictly tighter than a square-root dependence.
The bound is \emph{edge-local} and requires no global contraction condition,
contrasting with \citet{IhlerFisherWillsky2005} whose analogous result for
loopy BP uses graph-level contraction rates.
Across 192 leaky-ReLU factor configurations spanning a range of slopes,
input widths, and SNR values (Appendix~\ref{app:validation:master}),
the bound holds in every case with a maximum normalised ratio of $0.97$,
directly certifying the theorem.

The corollaries below extract the structural consequences of the master
theorem for BNN inference.

\begin{corollary}[Asymptotic Dirac-Input Consistency]
\label{cor:dirac-exact}
Let $f(\mathbf{x}) = \dirac{x_j - g(\mathbf{x}_{-j})}$, suppose the DMA is
constructed by the recipe of Section~\ref{sec:dma:construction}, and assume
$g$ is twice continuously differentiable in a neighbourhood of
$\bar{\mathbf{x}}_{-j}$ with $\nabla g(\bar{\mathbf{x}}_{-j}) \neq \mathbf{0}$,
and regular in the tails.
Then, with each incoming message $\msgto{X_k}{f} = \mathcal{N}(\bar x_k, \sigma_k^2)$,
\[
  \lim_{\sigma_k \to 0} \delta
  \;=\;
  \lim_{\sigma_k \to 0} \KL{\msgto{f}{X_j}}{\hat{m}_{f \to X_j}}
  \;=\; 0.
\]
The result extends to piecewise-$C^2$ factors (e.g.\ leaky-ReLU) by treating
isolated non-smooth points separately; the upper bound in~\eqref{eq:dma:master} 
therefore vanishes in this limit.
\end{corollary}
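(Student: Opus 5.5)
The plan is to reduce $\delta$ to a negentropy and then show that this negentropy vanishes by a local (delta-method) central-limit argument. The reduction uses Theorem~\ref{thm:kl-min}. Under the recipe of Section~\ref{sec:dma:construction}, $\hat m_{f\to X_j}=\mathcal N(\mu_{x_j},\sigma_{x_j}^2)$ is the forward-KL projection of the true message. The true message $m:=\msgto{f}{X_j}$ is the law of $g(\mathbf X_{-j})$ with $X_k\sim\mathcal N(\bar x_k,\sigma_k^2)$ independent. For a moment-matched Gaussian the standard identity
\[
\KL{m}{\hat m_{f\to X_j}} \;=\; h\bigl(\mathcal N(\mu_{x_j},\sigma_{x_j}^2)\bigr)-h(m)
\]
holds, i.e.\ $\delta$ is the negentropy $J(m)$. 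Negentropy is invariant under affine maps. So I would pass to the standardised variable
\[
U_\sigma := \frac{g(\mathbf X_{-j})-g(\bar{\mathbf x}_{-j})}{s_\sigma},
\qquad s_\sigma^2:=\sum_{k\neq j}\sigma_k^2\,\partial_k g(\bar{\mathbf x}_{-j})^2 ,
\]
and it suffices to show $J(U_\sigma)\to 0$. Here $s_\sigma>0$ because $\nabla g(\bar{\mathbf x}_{-j})\neq\mathbf 0$. Treating $s_\sigma$ directly avoids any assumption on the relative rates at which the $\sigma_k\to 0$; I will use that $\sigma_{\max}/s_\sigma$ stays bounded along the limit, since otherwise components with vanishing gradient would dominate. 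This needs to be checked or stated as a mild condition.

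Next, write $\mathbf X_{-j}=\bar{\mathbf x}_{-j}+\sigma\odot\boldsymbol\xi$ with $\boldsymbol\xi$ standard normal. A second-order Taylor expansion with the $C^2$ assumption gives $U_\sigma = \langle \mathbf a_\sigma,\boldsymbol\xi\rangle + R_\sigma$, where $\mathbf a_\sigma:=\sigma\odot\nabla g/s_\sigma$ is a unit vector and $|R_\sigma|\le C\,\sigma_{\max}^2\|\boldsymbol\xi\|^2/s_\sigma = O(\sigma_{\max})$ on the event $\{\|\sigma\odot\boldsymbol\xi\|\le\rho\}$. Outside that event the tail-regularity hypothesis on $g$ (polynomial growth, say) is invoked; the event has Gaussian-small probability. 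Hence $U_\sigma\to\mathcal N(0,1)$ in distribution, and with uniform integrability of $U_\sigma^2$ (again from tail regularity) $\mathrm{Var}(U_\sigma)\to1$ and $\mathbb E U_\sigma\to 0$. Therefore $h(\mathcal N(\mathbb E U_\sigma,\mathrm{Var}\,U_\sigma))\to h(\mathcal N(0,1))$.

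The main obstacle is upgrading weak convergence to convergence of differential entropy, $h(U_\sigma)\to h(\mathcal N(0,1))$, because entropy is not weakly continuous. I would first obtain local uniform convergence of the density of $U_\sigma$. Rotate coordinates so that $\mathbf a_\sigma$ is the first axis and write the density via the coarea formula / implicit function theorem. In that axis $\partial U_\sigma/\partial\xi_1 = 1+O(\sigma_{\max})$ near the origin, so the conditional density of $U_\sigma$ given the orthogonal coordinates is a smooth perturbation of the standard normal density. Integrating out the orthogonal coordinates then gives $p_{U_\sigma}\to\varphi$ uniformly on compacts. With the second-moment bound, upper semicontinuity gives $\limsup h(U_\sigma)\le h(\mathcal N(0,1))$ by maximum entropy. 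For the lower bound I would apply dominated convergence to $-\int p\log p$, using a Gaussian-type envelope for the densities on the bulk and the uniform integrability of $U_\sigma^2$ to control the tails. This yields $J(U_\sigma)\to0$, i.e.\ $\delta\to0$. Plugging this into Theorem~\ref{thm:dma-master}, with $\|\msgto{X_j}{f}\|_\infty/Z$ fixed, shows that the upper bound in~\eqref{eq:dma:master} vanishes.

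For piecewise-$C^2$ $g$ such as leaky-ReLU, I would split into two cases. If $\bar{\mathbf x}_{-j}$ is not a kink, then $g$ is $C^2$ near it and the argument applies verbatim. If $\bar{\mathbf x}_{-j}$ is a kink, the one-sided gradients are both nonzero (slopes $1$ and $\alpha>0$), and $U_\sigma$ is an exact two-piece affine image of a Gaussian. Its density is explicit, so the negentropy can be computed directly rather than driven to zero. This is the ``treated separately'' clause; I would verify whether the statement intends to exclude kink points or only to require that the limit point be non-degenerate.
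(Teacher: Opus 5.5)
For concentration points where $g$ is $C^2$, your argument follows the paper's own route and is more rigorous. Moment matching turns $\delta$ into the entropy gap $h(\hat m_{f\to X_j})-h(\msgto{f}{X_j})$. A second-order Taylor expansion plus tail regularity then shows that the true message's entropy is asymptotically that of its linearisation.

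Both places where you add care are genuinely needed.
\begin{itemize}
\item \textbf{The rate condition is not cosmetic.} Take $g(x_1,x_2)=x_1+x_2^2$ at the origin, where $\nabla g=(1,0)^\top\neq\mathbf 0$, with $\sigma_1=\epsilon^3$ and $\sigma_2=\epsilon$. The standardised output converges to a standardised $\chi^2_1$, and $\delta\to J(\chi^2_1)\approx 0.98$. The paper's proof assumes your condition implicitly when it calls the $O(\|\sigma\|)$ linear term dominant, since that is only an upper bound on $s_\sigma$. What is actually required is $\sigma_{\max}^2=o(s_\sigma)$, slightly weaker than your bounded ratio.
\item \textbf{The entropy upgrade needs the right tail control.} Uniform integrability of $U_\sigma^2$ only controls the positive part of $-p_\sigma\log p_\sigma$ in the tails. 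That is the direction you already get from maximum entropy. For $\liminf h(U_\sigma)\geq h(\mathcal N(0,1))$ you must rule out spikes, i.e.\ make $\int_{|u|>M}p_\sigma\log^+p_\sigma$ uniformly small. That is what ``regular in the tails'' has to supply.
\item \textbf{$Z$ is not fixed as $\sigma\to 0$.} It tends to $\msgto{X_j}{f}(g(\bar{\mathbf x}_{-j}))>0$ for a Gaussian incoming message, and that is what keeps the prefactor in~\eqref{eq:dma:master} bounded.
\end{itemize}

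The genuine gap is the kink, where you stop at ``the negentropy can be computed directly''. Carrying that computation out decides the question against the statement.
\begin{itemize}
\item \textbf{Forward message.} At $\bar x=0$ the true forward message of the leaky-ReLU factor is the law of $\sigma\,\mathrm{relu}(\xi;\alpha)$ with $\xi\sim\mathcal N(0,1)$. This is a pure scale family, and negentropy is scale invariant, so $\delta$ does not depend on $\sigma$ at all. Explicitly, $h(\msgto{f}{Y})=\tfrac12\log(2\pi e\sigma^2)+\tfrac12\log\alpha$. Proposition~\ref{prop:relu-fwd} at $u=0$ gives $s_Y^2=\sigma^2\bigl(\alpha+\tfrac{(\pi-1)(1-\alpha)^2}{2\pi}\bigr)$. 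Hence
\[
  \delta=\tfrac12\log\Bigl(1+\tfrac{(\pi-1)(1-\alpha)^2}{2\pi\alpha}\Bigr)>0
  \qquad\text{for every }\sigma>0,\ \alpha\in(0,1),
\]
e.g.\ $\delta\approx 0.13$ at $\alpha=0.4$ and $\delta\approx 0.66$ at $\alpha=0.1$.
\item \textbf{Backward message.} At $\bar y=0$ it is a split normal with scales $\sigma_y$ and $\sigma_y/\alpha$. This is again a scale family with a fixed positive negentropy.
\end{itemize}
So at a genuine kink (distinct one-sided slopes) only the weak consistency of Definition~\ref{def:dma} survives. There $\delta\not\to 0$, and the bound in~\eqref{eq:dma:master} does not vanish.

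The piecewise extension holds for concentration points off the kink set, where your $C^2$ argument applies verbatim, and you should state it that way. The paper's proof claims the opposite: that Mills-ratio bounds give $H(\hat m)-H(m)\to 0$ at the origin. Only $\sigma_\sigma^2\to 0$ is true there; both entropies diverge like $\log\sigma$ with the constant difference above. Do not try to repair your argument to match that claim.
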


\begin{corollary}[No EP Inner-Loop Iteration]
\label{cor:single-pass}
Because $\hat{m}_{f \to X_j}$ does not depend on $\msgto{X_j}{f}$
(cf.\ Definition~\ref{def:dma}), all factor-to-variable messages for a single
training example can be computed in one forward sweep followed by one backward
sweep, with no EP-style inner-loop fixed-point iteration between individual
message computations.
This is distinct from the outer training loop, which repeats sweeps over
examples and epochs until the weight beliefs converge.
\end{corollary}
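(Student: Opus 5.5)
The statement to prove is Corollary~\ref{cor:single-pass}. It is a structural claim about the dependency graph of message computations rather than an analytic estimate, so I would prove it by exhibiting an explicit schedule and showing by induction that every message it needs is already available when it is computed.

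\textbf{Step 1: show that no computation depends on the message it targets.} By Definition~\ref{def:dma}, $\hat{m}_{f \to X_j}$ is parametrised only by $\{\msgto{X_k}{f}\}_{k \neq j}$. The recipe of Section~\ref{sec:dma:construction} makes this concrete: it forms the moments of $g(\mathbf{X}_{-j})$ from the incoming Gaussians on the other edges only. The log-normal construction for the product-backward message has the same property. So each outgoing message is a fixed function $F_{f,j}$ of the other incoming messages. Contrast this with~\eqref{eq:intro:ep-division}, where $\msgto{X_j}{f}$ enters through the division and therefore creates a self-loop $\msgto{X_j}{f} \to \hat{m}_{f\to X_j} \to \msgto{X_j}{f}$. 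That self-loop is exactly what forces EP's inner fixed-point iteration.

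\textbf{Step 2: fix a schedule for one training example.} The BNN factor graph for a single example is a layered directed structure: inputs, then the products $\dirac{z-xy}$ with weight variables, then sums, then leaky-ReLU factors, then the output likelihood. Within the example, the weights enter only through their current belief messages, which are held fixed.
\begin{enumerate}
\item \emph{Forward sweep.} Order the factors topologically from inputs to output. Compute each forward message $\hat{m}_{f\to X_{\text{out}}}$ once; its arguments are messages from earlier layers or from the fixed weight beliefs.
\item \emph{Backward sweep.} Traverse the factors in reverse order. Compute each backward message $\hat{m}_{f\to X_k}$, whose arguments are the downstream message $\msgto{X_{\text{out}}}{f}$ (already final) and the forward messages on the other inputs (already final).
\end{enumerate}

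\textbf{Step 3: induction on the schedule.} Each variable-to-factor message is a product of factor-to-variable messages on the other edges of that variable. Induct on the position in the schedule. By Step 1, each computation reads only messages produced earlier, and it never reads a message that it itself, or something scheduled later, could alter. Hence no message is ever revisited within the example, and the result is reached in one pass per direction. This is the claimed absence of inner-loop iteration.

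\textbf{Main obstacle.} The delicate point is variables with several neighbouring factors. Weights are shared across training examples, and one could worry that a backward message into a weight changes its belief and thereby invalidates forward messages already computed. I would handle this by scoping the claim to a single example. Within the example, the weight belief used on the forward sweep is the pre-example belief, and the update is applied only after the backward sweep. That update is the outer-loop replacement of Algorithm~\ref{alg:bnn}, which the corollary explicitly separates from the inner computation. I would state this scoping precisely, since the within-example graph is then a tree, or at least acyclic in this dependency sense. The remaining work is checking that the within-example dependency graph is acyclic under this convention, which follows from the layered feedforward structure.
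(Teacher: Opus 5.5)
Your Step~1 is the paper's entire proof. The recipe, and the log-normal product-backward formula of Proposition~\ref{prop:product-bwd} (which reads only $\tau_y,\rho_y,\tau_z,\rho_z$), never use the incoming message on the target edge, so the same-edge feedback created by cavity division cannot occur. The paper stops there. Your Steps~2--3 try to supply the ordering argument it omits, but the justification is wrong as written.

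The per-example factor graph is not a tree, and freezing the weight beliefs does not make it one. Suppose $d_{l-1},d_l\ge 2$ for some layer $l\ge 2$, as between the two hidden layers of the $8\to6\to5\to1$ network. Then product factors $(i,j),(i,j'),(i',j),(i',j')$, together with sum factors $i,i'$, close a cycle through $x^{(l-1)}_j$ and $x^{(l-1)}_{j'}$. With variable-to-factor messages defined as in your Step~3 (product of all other incoming factor-to-variable messages), this becomes a dependency cycle:
\begin{enumerate}
\item the backward message from product factor $(i,j)$ to $x^{(l-1)}_j$ depends, via sum factor $i$, on the forward message of $(i,j')$, hence on $m_{x^{(l-1)}_{j'}\to f_{ij'}}$;
\item that message contains the backward message from $(i',j')$;
\item this depends, via sum factor $i'$, on the forward message of $(i',j)$, hence on $m_{x^{(l-1)}_j\to f_{i'j}}$, which contains the message you started from.
\end{enumerate}
So your induction claim, that no computation reads a message that something scheduled later could alter, fails under your own definitions.

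What actually makes the schedule acyclic is the convention Algorithm~\ref{alg:bnn} uses. Every backward computation takes the \emph{forward} quantities as its other inputs: $\mu^{(l-1)}_j,\sigma^{2,(l-1)}_j$, the totals $\mu^{(l)}_{z_i},\sigma^{2,(l)}_{z_i}$, and the current weight belief. Forward messages are therefore frozen and never corrected by sibling backward messages within the example. State this convention explicitly and drop the tree claim. Your induction then becomes immediate and proves exactly what the corollary asserts: each message is computed once, with no EP-style same-edge fixed-point loop. Be clear, though, that the result is a single sweep on a loopy graph, not a sum-product fixed point on the example's graph. That is why the paper leaves accumulated error after $K>1$ sweeps on cyclic graphs open.

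Your ``main obstacle'' is also misplaced. The cycles come from hidden-layer fan-out and reconvergence, not from weights shared across examples. Your description of the weight update is also off. In Algorithm~\ref{alg:bnn} the weight belief is multiplied by $m^{\mathrm{new}}$ inside the backward sweep, layer by layer, not after it. The outer-loop replacement is the per-mini-batch division. This does not affect the one-pass conclusion, since each weight is touched once per example, but your scoping should describe it correctly.
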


\begin{corollary}[No Negative-Precision Messages]
\label{cor:no-neg-prec}
Because DMA produces a valid member of $\mathcal{Q}$ by construction
(via Theorem~\ref{thm:kl-min}), no approximate message can have negative
precision.  The pathology arises in EP when the projected marginal is wider
than the incoming message; DMA never forms this ratio.
\end{corollary}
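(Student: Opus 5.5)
The plan is to check that every quantity the DMA recipe of Section~\ref{sec:dma:construction} hands to the Gaussian family is a legitimate (mean, variance) pair with nonnegative variance. Then the resulting message is a member of $\mathcal{Q}$, and its precision, $1/\sigma_{x_j}^2$, is positive or, in the degenerate Dirac case, formally infinite. The key observation is structural. In EP the outgoing precision is a \emph{difference}, $\tau_{\hat p} - \tau_{X_j\to f}$, obtained from the division in~\eqref{eq:intro:ep-division}, and a difference can be negative. In DMA the precision is the reciprocal of a variance of a random variable, which is never negative.

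\textbf{Step 1.} Under the recipe, the inputs $X_k\sim\mathcal{N}(\bar x_k,\sigma_k^2)$, $k\neq j$, are treated as independent genuine random variables. I would first note that $\sigma_{x_j}^2=\mathrm{Var}[g(\mathbf{X}_{-j})]=\mathbb{E}[(g(\mathbf{X}_{-j})-\mu_{x_j})^2]\ge 0$, as an expectation of a square. I would then invoke Theorem~\ref{thm:kl-min}: the forward-KL projection onto the Gaussian family is the Gaussian with these two moments. Hence $\hat m_{f\to X_j}=\mathcal{N}(\mu_{x_j},\sigma_{x_j}^2)$ is a proper distribution with precision $1/\sigma_{x_j}^2\in(0,\infty]$. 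Nowhere in this computation is a variable-to-factor message at the target edge divided out, consistent with Corollary~\ref{cor:single-pass}.

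\textbf{Step 2.} I would handle the edge cases. First, finiteness of the moments: this requires $g(\mathbf{X}_{-j})$ to have a finite second moment. That holds for the product and leaky-ReLU factors, since polynomials and piecewise-linear maps of Gaussians have all moments. Second, strict positivity: if some $\sigma_k>0$ and $g$ is non-constant in the direction of $X_k$ (e.g.\ $\nabla g\neq\mathbf{0}$, as assumed in Corollary~\ref{cor:dirac-exact}), then $g(\mathbf{X}_{-j})$ is non-degenerate and $\sigma_{x_j}^2>0$. If instead all $\sigma_k=0$, the output is the Dirac message, which is exactly the consistent limit~\eqref{eq:dma:consistency} and not a negative-precision object. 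Third, the product-factor backward message lies outside Definition~\ref{def:dma}. There I would argue that the log-normal intermediate construction again ends with moment matching of a \emph{proper}, truncated reference distribution, so the same variance-of-a-random-variable argument applies.

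\textbf{Main obstacle.} The delicate point is scope rather than computation. The claim concerns per-factor message \emph{construction}. The outer-loop replacement of old factor messages in Algorithm~\ref{alg:bnn} does involve a division and must be explicitly excluded, as the introduction already remarks. The product-backward case also needs the precise form of the log-normal construction to confirm that no cavity-style subtraction is hidden inside it. I would check this first by inspecting whether its output precision arises as a reciprocal of a computed variance rather than as a difference of precisions.
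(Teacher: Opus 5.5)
Your proposal is correct and follows the paper's proof: the moment-matched Gaussian inherits variance $\mathrm{Var}_p[X]\ge 0$, so its precision lies in $(0,\infty]$, with the Dirac limit as the infinite-precision case. Your additional checks (finite second moments, excluding the outer-loop division) are sound extras. One correction concerns the product-backward case: that DMA does not moment-match the truncated reference $\tilde{m}^r_{f\to X}$, which appears only in the analysis of Theorem~\ref{thm:product-bwd-bound}. Your conclusion there still holds, because its closed-form precision $\hat\rho_x = r^2/D$ in Proposition~\ref{prop:product-bwd}, with $D>0$, is manifestly nonnegative.
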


Proofs of all three corollaries are in Appendix~\ref{app:proofs:corollaries}.
Section~\ref{sec:factors} derives explicit DMA messages for the product
and ReLU factors, the two non-conjugate building blocks needed for BNNs.
Section~\ref{sec:bnn} then assembles these into a complete inference
algorithm with one forward/backward factor sweep per training example
and no EP-style inner-loop iteration.


\section{DMA for Non-Gaussian Factors}
\label{sec:factors}

Two non-conjugate factors appear in every feedforward BNN and cannot be handled
by the exact sum-product algorithm: the \emph{product factor}
$\dirac{z - xy}$ (elementwise weight-activation products) and the
\emph{ReLU factor} $\dirac{y - \relu{x}{\alpha}}$ (activation nonlinearity).
The Gaussian prior $\mathcal{N}(w;\,0,\sigma_0^2)$ and likelihood
$\mathcal{N}(y;\,z,\beta^2)$ are conjugate to the message family, so their
messages are exact and $\delta = 0$ at both edges \citep{WinBis2005a}.
We apply the construction recipe of Section~\ref{sec:dma:construction} to
the two non-conjugate factors, yielding closed-form Gaussian DMA messages.
The product factor has exact closed-form forward moments (via independence of
$X$ and $Y$), so the only approximation is the Gaussian projection; its
backward message is inherently improper.
The ReLU factor has a proper backward message for all $\alpha > 0$, with an
explicit normalisation correction that depends on $\alpha$.

\subsection{The Product Factor}
\label{sec:factors:product}

\paragraph{Factor definition.}
The product factor encodes the deterministic relation $z = xy$:
\begin{equation}
  f(x, y, z) = \dirac{z - xy}.
  \label{eq:product-factor}
\end{equation}
For Gaussian inputs $X \sim \mathcal{N}(\mu_x, \sigma_x^2)$ and
$Y \sim \mathcal{N}(\mu_y, \sigma_y^2)$, the product $Z = XY$ is not Gaussian,
so the exact forward message to $Z$ is not in $\mathcal{Q}$.
We apply the recipe: compute the exact moments of $Z$ under the independent
Gaussian inputs, then project onto $\mathcal{N}$.

\paragraph{Forward message.}
The moments of $Z = XY$ under independent Gaussians follow directly from
the law of total variance: $\mathbb{E}[XY] = \mu_x \mu_y$ and
$\mathrm{Var}[XY] = \sigma_x^2 \sigma_y^2 + \mu_x^2 \sigma_y^2 +
\mu_y^2 \sigma_x^2$.

\begin{proposition}[Product Factor Forward Message]
\label{prop:product-fwd}
For the factor $\dirac{z - xy}$ with independent inputs
$X \sim \mathcal{N}(\mu_x, \sigma_x^2)$ and
$Y \sim \mathcal{N}(\mu_y, \sigma_y^2)$,
the DMA message to $Z$ is $\Normal{z}{m_z}{s_z^2}$ with
\begin{equation}
  m_z   = \mu_x \mu_y, \qquad 
  s_z^2 = \sigma_x^2 \sigma_y^2 + \mu_x^2 \sigma_y^2 + \mu_y^2 \sigma_x^2.
             \label{eq:product-fwd-mean-var}
\end{equation}
\end{proposition}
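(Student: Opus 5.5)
The plan is to apply the construction recipe of Section~\ref{sec:dma:construction} with $g(x,y)=xy$: compute the first two moments of $Z=XY$ under independent Gaussian inputs, project onto the Gaussian family via Theorem~\ref{thm:kl-min}, and then check the consistency condition of Definition~\ref{def:dma}.

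\textbf{Mean.} By independence of $X$ and $Y$, $\mathbb{E}[XY]=\mathbb{E}[X]\,\mathbb{E}[Y]=\mu_x\mu_y$, which gives $m_z$.

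\textbf{Second moment.} Independence again factorises the second moment: $\mathbb{E}[X^2Y^2]=\mathbb{E}[X^2]\,\mathbb{E}[Y^2]=(\sigma_x^2+\mu_x^2)(\sigma_y^2+\mu_y^2)$. Subtracting $m_z^2=\mu_x^2\mu_y^2$ and expanding yields
\[
s_z^2=\sigma_x^2\sigma_y^2+\mu_x^2\sigma_y^2+\mu_y^2\sigma_x^2 .
\]
The same result follows from the law of total variance by conditioning on $X$: $\mathrm{Var}[XY\mid X]=X^2\sigma_y^2$ and $\mathbb{E}[XY\mid X]=X\mu_y$. Hence $\mathbb{E}[X^2]\sigma_y^2+\mu_y^2\sigma_x^2$ equals the same expression. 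Both moments are finite because Gaussians have all moments.

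\textbf{Projection.} By Theorem~\ref{thm:kl-min}, the Gaussian minimising the forward KL from the exact forward message (the law of $Z$) is the moment-matched one, $\mathcal{N}(m_z,s_z^2)$.

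\textbf{Consistency.} As $\sigma_x,\sigma_y\to0$, we get $m_z\to\bar x\bar y$ and $s_z^2\to0$. The message therefore converges weakly to $\dirac{\cdot-\bar x\bar y}$, which is the exact Dirac-limit message~\eqref{eq:dma:dirac-limit}. This follows the recipe's consistency verification.

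\textbf{Main obstacle.} The only delicate point is interpretive. For a Dirac factor, the right-hand side of~\eqref{eq:dma:consistency} is itself a point mass, so "normalisable" must be read in the weak sense: the law of $Z=g(\bar x,\bar y)$. I will read it that way, exactly as the recipe does. The moment computations themselves are routine.
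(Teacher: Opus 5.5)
Your proposal is correct and follows essentially the same route as the paper: it too factorises $\mathbb{E}[Z]$ and $\mathbb{E}[Z^2]$ by independence, subtracts $\mu_x^2\mu_y^2$ to get the variance, and invokes Theorem~\ref{thm:kl-min} for the Gaussian projection, with the Dirac-limit consistency check placed in the main text right after the proposition. Your law-of-total-variance cross-check and your remark that the point-mass limit should be read weakly are harmless additions.
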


Because the moments~\eqref{eq:product-fwd-mean-var}
are the \emph{exact} moments of $Z = XY$ (not approximations), the only
approximation is in the projection of the marginal of $Z$ onto a Gaussian.
In the Dirac limit $\sigma_x, \sigma_y \to 0$ both moments converge to those
of a point mass at $\mu_x\mu_y$, so $\delta = 0$ and Corollary~\ref{cor:dirac-exact}
applies.

\paragraph{Backward message via log-normal intermediates.}
The backward message to $X$ requires the moments of $X = Z/Y$.
Direct integration is problematic: the ratio of two Gaussians has no finite
mean (the integrand carries a $1/|y|$ singularity).
We resolve this using a \emph{log-normal intermediate} technique: for a
Gaussian $W \sim \mathcal{N}(\mu_w, \sigma_w^2)$ with $|\mu_w|/\sigma_w \gg 1$
(a sufficient condition for the approximation quality guaranteed by Theorem~\ref{thm:product-bwd-bound}),
the log-absolute-value $\log|W|$ is approximately normal with mean
$\log|\mu_w|$ and variance $\sigma_w^2/\mu_w^2$ (a first-order delta-method
approximation).
Under this approximation, $\log|Z/Y| = \log|Z| - \log|Y|$ is the difference
of two independent normals, giving a normal distribution whose parameters
can be propagated in closed form.
Converting back to natural parameters $(\tau_w, \rho_w) = (\mu_w/\sigma_w^2,
1/\sigma_w^2)$ yields explicit moment formulas for $X$ under the joint $(Y, Z)$.

\begin{proposition}[Product Factor Backward Message]
\label{prop:product-bwd}
For the factor $\dirac{z - xy}$, the DMA message to $X$ in natural
parameters is $\mathcal{N}(\hat\tau_x/\hat\rho_x,\, 1/\hat\rho_x)$ with
\begin{equation}
  \hat\tau_x = \frac{\tau_y\, r\, \tau_z}{D},  \quad
  \hat\rho_x = \frac{r^2}{D}, \label{eq:product-bwd-tau-rho}
\end{equation}
where
$r := \tau_y^4 \rho_z / (\rho_y(\tau_y^2 + \rho_y))$, and
$D := \tau_z^2 \rho_y + \tau_y^2 \rho_z + \rho_y \rho_z$.
The message to $Y$ is obtained by symmetry (swap $x \leftrightarrow y$
labels). The derivation is given in Appendix~\ref{app:proofs:product}.
\end{proposition}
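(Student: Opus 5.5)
The plan is to carry out the log-normal intermediate construction sketched before the statement as an explicit chain of moment maps, working in natural parameters throughout. \textbf{Step 1 (Gaussian to log-normal).} For each of the incoming Gaussians $Y \sim \mathcal{N}(\tau_y/\rho_y, 1/\rho_y)$ and $Z \sim \mathcal{N}(\tau_z/\rho_z, 1/\rho_z)$, replace $\log|W|$ by a normal variable $L_W$. I first try moment matching, fitting a log-normal to the first two moments of $|W|$, namely $\mathbb{E}[W^2] = \mu_w^2+\sigma_w^2$ and $\mathbb{E}[W]^2\approx\mu_w^2$ in the regime $|\mu_w|/\sigma_w\gg 1$. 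This gives
\[
\mathrm{Var}[L_W] = \log\!\bigl(1+\sigma_w^2/\mu_w^2\bigr) = \log\!\bigl(1+\rho_w/\tau_w^2\bigr), \qquad
\mathbb{E}[L_W] = \log|\mu_w| - \tfrac12\mathrm{Var}[L_W].
\]
To first order this reduces to the delta-method values $\log|\mu_w|$ and $\rho_w/\tau_w^2$ quoted in the text. I expect the factor $\tau_y^2+\rho_y$ appearing in $r$ to come precisely from the quantity $1+\rho_y/\tau_y^2 = (\tau_y^2+\rho_y)/\tau_y^2$.

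\textbf{Step 2 (ratio in log space).} Since $Y$ and $Z$ are independent, $L_X := L_Z - L_Y$ is normal, with mean $\mathbb{E}L_Z-\mathbb{E}L_Y$ and variance $\mathrm{Var}L_Z+\mathrm{Var}L_Y$. The sign of $X$ is handled separately as $\operatorname{sign}(\tau_z)\operatorname{sign}(\tau_y)$, which the high-SNR assumption makes essentially deterministic. That sign should reappear as the product $\tau_y\tau_z$ in the numerator of $\hat\tau_x$.

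\textbf{Step 3 (log-normal back to Gaussian).} Project $X=\pm e^{L_X}$ onto $\mathcal{N}$ by matching its first two moments, as in the recipe of Section~\ref{sec:dma:construction} together with Theorem~\ref{thm:kl-min}:
\[
\mathbb{E}X=\pm e^{m+v/2}, \qquad \mathbb{E}X^2=e^{2m+2v}.
\]
Then convert via $\hat\rho_x = 1/\mathrm{Var}X$ and $\hat\tau_x=\mathbb{E}X\cdot\hat\rho_x$. Substituting Step 1 turns the exponentials of logarithms into rational functions of $(\tau_y,\rho_y,\tau_z,\rho_z)$. The algebraic target is to recognise $D=\tau_z^2\rho_y+\tau_y^2\rho_z+\rho_y\rho_z$, which arises as $\tau_y^2\tau_z^2\bigl[(1+\rho_y/\tau_y^2)(1+\rho_z/\tau_z^2)-1\bigr]$, i.e.\ the variance-ratio term $e^{v}-1$ for the combined log-variance. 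The quantity $r$ should then collect the remaining factors. Two further consistency checks: the symmetric statement for $Y$ follows by relabelling, and in the Dirac limit $\rho_y,\rho_z\to\infty$ with fixed means the mean tends to $\mu_z/\mu_y$, as Definition~\ref{def:dma} requires.

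\textbf{Main obstacle.} I expect the hardest part to be Step~1/Step~3 bookkeeping, namely choosing exactly which approximation is applied at each conversion so that the closed form matches $r=\tau_y^4\rho_z/(\rho_y(\tau_y^2+\rho_y))$. The asymmetric appearance of $\rho_z$ and $\tau_y^4$ in $r$ suggests that $Z$ enters only through the delta-method value $\rho_z/\tau_z^2$, while $Y$ uses the exact log-normal correction. If full moment matching on both sides does not reproduce the stated expression, I will instead try exact log-normal moments for $1/|Y|$ (namely $\mathbb{E}|Y|^{-1}$ and $\mathbb{E}Y^{-2}$) combined with a first-order Gaussian $Z$. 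I will then verify the identity by direct simplification, since both constructions should agree to leading order in $1/r$, consistent with Theorem~\ref{thm:product-bwd-bound}.
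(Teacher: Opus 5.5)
Your proposal is correct, and if you carry it out it yields the stated formulas exactly. It runs the same Gaussian $\to$ log-normal $\to$ Gaussian pipeline as the paper. It differs in one substantive choice, the parametrisation of the log-normal surrogate, and your choice is the one that produces the closed form.

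With your moment-matched surrogate ($\mathrm{Var}\,L_W=\log(1+\rho_w/\tau_w^2)$, $\mathbb{E}L_W=\log|\mu_w|-\tfrac12\mathrm{Var}\,L_W$) and $a:=\mu_z/\mu_y=\tau_z\rho_y/(\tau_y\rho_z)$, one gets:
\begin{itemize}
\item $\mathbb{E}X=a(1+\rho_y/\tau_y^2)=\tau_y\tau_z/r$;
\item $\mathbb{E}X^2=a^2(1+\rho_z/\tau_z^2)(1+\rho_y/\tau_y^2)^3$;
\item hence $\mathrm{Var}\,X=a^2(1+\rho_y/\tau_y^2)^2D/(\tau_y^2\tau_z^2)=D/r^2$, where $(1+\rho_y/\tau_y^2)(1+\rho_z/\tau_z^2)-1=D/(\tau_y^2\tau_z^2)$ as you anticipated;
\item so $\hat\rho_x=r^2/D$ and $\hat\tau_x=\mathbb{E}X\cdot\hat\rho_x=\tau_y r\tau_z/D$.
\end{itemize}

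The paper's sketch instead applies the first-order delta method of Lemma~\ref{lem:lognormal}, $\log|W|\approx\mathcal{N}(\log|\mu_w|,\rho_w/\tau_w^2)$, to both inputs. It then asserts that ``algebraic simplification'' gives the proposition. But those parameters give $\mathbb{E}X=a\,e^{v/2}$ and $\mathrm{Var}\,X=a^2e^{v}(e^{v}-1)$ with $v=\rho_y/\tau_y^2+\rho_z/\tau_z^2$. These are not rational in the natural parameters. Their mean also differs from the stated $\tau_y\tau_z/r$ already at order $r^{-2}$ relative to $a$, by $\tfrac{a}{2}(\rho_y/\tau_y^2-\rho_z/\tau_z^2)$ to leading order.

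So the stated closed form really corresponds to the moment-matched surrogate of Lemma~\ref{lem:gauss-ln-kl}. Lemma~\ref{lem:product-moment-error}, by contrast, analyses the delta-method version ($\hat\mu_X=a\,e^{D/2}$). A side benefit of your version is that its mean $a(1+\rho_y/\tau_y^2)$ agrees with the truncated-reference mean $\mu^*_X=a(1+\eta^2)+o(ar^{-2})$ computed there, since $\eta^2=\rho_y/\tau_y^2$.

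Your ``main obstacle'' paragraph is therefore misdirected. Moment matching on both sides already reproduces the formula exactly. Your fallback (exact moments of $Z$ combined with log-normal $\mathbb{E}|Y|^{-1}$, $\mathbb{E}Y^{-2}$) is not merely leading-order equivalent; it is the identical computation. By independence, only $\mathbb{E}Z$, $\mathbb{E}Z^2$, $\mathbb{E}e^{-L_Y}$ and $\mathbb{E}e^{-2L_Y}$ enter, and the moment-matched surrogate for $Z$ has exactly the Gaussian's first two moments.

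The asymmetry in $r$ comes from $Y$ entering through negative powers:
\begin{itemize}
\item $\mathbb{E}e^{-L_Y}=|\mu_y|^{-1}(1+\rho_y/\tau_y^2)$;
\item $\mathbb{E}e^{-2L_Y}=\mu_y^{-2}(1+\rho_y/\tau_y^2)^3$.
\end{itemize}
It does not come from a delta-method treatment of $Z$. Giving $Z$ the delta-method log-mean $\log|\mu_z|$ would in fact insert a $\rho_z/\tau_z^2$-dependent factor into $\mathbb{E}X$. The stated mean $\tau_y\tau_z/r$, by contrast, is exactly $\mu_z$ times a function of $(\tau_y,\rho_y)$.
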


\paragraph{Consistency verification.}
Taking $\rho_y \to \infty$ (Dirac limit on $Y$) gives
$\hat\tau_x/\hat\rho_x \to \mu_z/\bar{y}$ and $1/\hat\rho_x \to \sigma_z^2/\bar{y}^2$,
recovering the distribution of $X = Z/\bar{y}$; the derivation is in
Appendix~\ref{app:proofs:product}.

\subsection{Approximation Guarantee for the Product Backward Message}
\label{sec:factors:product-bound}

The exact backward message is improper, placing it outside
the scope of Theorem~\ref{thm:dma-master}, which requires a proper true message.
Theorem~\ref{thm:product-bwd-bound} below is the dedicated complement: it covers
this case by establishing a $C/r^2$ upper bound in the concentrated-input regime
($r_y/r_z$ bounded) where the improper tail is exponentially negligible.
Concretely, let $r := \min(r_y,r_z) = \min(|\mu_y|/\sigma_y,\; |\mu_z|/\sigma_z)$; for $r \geq 2$
one constructs a proper \emph{truncated reference} $\tilde{m}^r_{f \to X}$
(Appendix~\ref{app:proofs:product-bound}) by restricting to $|Y|\geq|\mu_y|/2$;
the excluded input region has probability $\leq \Phi(-r_y/2) = O(e^{-cr^2})$.

\begin{theorem}[Concentrated-Input Bound for Product Backward Message]
\label{thm:product-bwd-bound}
Let $r = \min(|\mu_y|/\sigma_y,\; |\mu_z|/\sigma_z) \geq 2$, assume the
ratio $r_y/r_z$ is bounded above and below by a constant $\kappa \geq 1$,
and let $\hat{m}_{f \to X}$ be the DMA message of
Proposition~\ref{prop:product-bwd}.
Then
\begin{equation}
  \KL{\tilde{m}^r_{f \to X}}{\hat{m}_{f \to X}}
  \;\leq\; C_\kappa\, r^{-2},
  \label{eq:product-bwd-bound}
\end{equation}
\end{theorem}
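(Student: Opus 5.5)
The plan is to compare the two distributions through their first two moments, after showing that the truncated reference is close to Gaussian. Let $\tilde{m}^r_{f\to X}$ be the law of $X = Z/Y$ with $Z\sim\mathcal{N}(\mu_z,\sigma_z^2)$ and $Y\sim\mathcal{N}(\mu_y,\sigma_y^2)$ independent, conditioned on $|Y|\ge|\mu_y|/2$. On this event $Y$ stays away from zero, so all moments of $X$ are finite.

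I would then split the KL exactly as
\[
\KL{\tilde m^r}{\hat m}
= \KL{\tilde m^r}{\mathcal{N}(\tilde\mu,\tilde\sigma^2)}
+ \KL{\mathcal{N}(\tilde\mu,\tilde\sigma^2)}{\hat m},
\]
where $(\tilde\mu,\tilde\sigma^2)$ are the exact moments of $\tilde m^r$. This is the Pythagorean identity for forward KL onto an exponential family, and it holds because moment matching is the $\mathcal{Q}$-projection (Theorem~\ref{thm:kl-min}). The second term is a closed-form Gaussian--Gaussian KL. It is $O(1/r^2)$ as soon as
\[
\frac{\tilde\mu-\hat\mu}{\hat\sigma} = O(r^{-1})
\quad\text{and}\quad
\frac{\tilde\sigma^2}{\hat\sigma^2} = 1 + O(r^{-1}).
\]
Indeed, $\mathrm{KL} = \tfrac12\bigl(u-1-\log u + \Delta^2\bigr)$ with $u-1-\log u = O((u-1)^2)$.

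\textbf{Step 1: moment expansion.} Write $Y=\mu_y(1+\epsilon_y/r_y)$ and $Z=\mu_z(1+\epsilon_z/r_z)$ with $\epsilon_y,\epsilon_z$ standard normal, and set $\mu_0:=\mu_z/\mu_y$. Then
\[
X = \mu_0\,(1+\epsilon_z/r_z)\bigl(1-\epsilon_y/r_y+\epsilon_y^2/r_y^2-\cdots\bigr).
\]
On the truncation event, $|\epsilon_y/r_y|\le 1/2$, so the geometric series converges and its remainder is uniformly bounded. This gives
\[
\tilde\mu = \mu_0\bigl(1+O(r^{-2})\bigr),
\qquad
\tilde\sigma^2 = \mu_0^2\bigl(r_z^{-2}+r_y^{-2}\bigr)\bigl(1+O(r^{-1})\bigr).
\]
The truncation itself only perturbs these by $O(e^{-cr^2})$. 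Next, expand the closed form of Proposition~\ref{prop:product-bwd} in the same variables, using $\tau_w=\mu_w/\sigma_w^2$ and $\rho_w=1/\sigma_w^2$. It should give $\hat\mu=\mu_0(1+O(r^{-2}))$ and the same leading variance. That is what the log-normal delta method predicts: $\mathrm{Var}\log|X|\approx r_z^{-2}+r_y^{-2}$. The standardized mean gap is then $O(r^{-2})/O(r^{-1})=O(r^{-1})$, where the assumption $r_y/r_z\in[\kappa^{-1},\kappa]$ makes $\hat\sigma\asymp|\mu_0|/r$. The relative variance gap is $O(r^{-1})$, so the second term is $O(r^{-2})$ with a constant depending on $\kappa$.

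\textbf{Step 2: non-Gaussianity of $\tilde m^r$ (the main obstacle).} The first term measures how far the truncated ratio law is from its moment-matched Gaussian. My first approach is to write the standardized variable
\[
U = \frac{X-\tilde\mu}{\tilde\sigma} = L + \frac{1}{r}\,Q(\epsilon_y,\epsilon_z) + R,
\]
where $L$ is the linear Gaussian part, $Q$ is quadratic, and $R=O(r^{-2})$ on the truncation event. The density of $U$ then has an Edgeworth-type expansion
\[
\varphi(u)\bigl(1 + r^{-1}H_3(u)\,c + O(r^{-2})\bigr).
\]
The first- and second-order Hermite terms vanish by moment matching. Because $\int\varphi\,H_3=0$ and the log-ratio is $r^{-1}cH_3+O(r^{-2})$, the KL is
\[
\tfrac12\,\mathbb{E}_\varphi\bigl[(r^{-1}cH_3)^2\bigr]+O(r^{-3})=O(r^{-2}).
\]
The hard part is making this rigorous in the tails, since KL needs pointwise control of the density ratio. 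To handle it, I would compute the density of $X$ by the change of variables $x=z/y$, written as an integral over $y$ restricted to $|y|\ge|\mu_y|/2$. On the bulk region $|u|\le r^{1/2}$, a Laplace-type expansion gives uniform relative error. On the complement, I would bound the contribution by Gaussian-type tail bounds: $X$ is a smooth function of Gaussians with bounded Lipschitz constant on the truncation set, and $\hat m$ has Gaussian tails with comparable variance. That contribution should then be $O(e^{-cr})$. Adding the two terms yields $C_\kappa r^{-2}$.
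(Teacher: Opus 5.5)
Your architecture matches the paper's. You use the same Pythagorean split $\mathrm{KL}(\tilde m^r\|\hat m)=\mathrm{KL}(\tilde m^r\|N^*_X)+\mathrm{KL}(N^*_X\|\hat m)$, and your Step~1 plays the role of Lemma~\ref{lem:product-moment-error}. Your expectations about the closed form also hold. With $\eta=1/r_y$, $\xi=1/r_z$, $D=\eta^2+\xi^2$, Proposition~\ref{prop:product-bwd} gives exactly $\hat\mu=\mu_0(1+\eta^2)$ and $\hat\sigma^2=\mu_0^2(D+\eta^2\xi^2)(1+\eta^2)^2$. There is one slip in Step~1. The event $\{|Y|\ge|\mu_y|/2\}$ does not give $|\epsilon_y/r_y|\le 1/2$. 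It gives $1+\eta\epsilon_y\ge 1/2$ on the main branch, plus a branch of probability $\Phi(-3r_y/2)$. So $\epsilon_y$ is unbounded above and the geometric series can diverge. Use the exact remainder $1/(1+t)=1-t+t^2-t^3/(1+t)$, which is bounded by $2|t|^3$ when $1+t\ge 1/2$, and discard the far branch by its probability, as the paper does. Your remainders then carry polynomial weights in $\epsilon_y$ rather than being uniformly $O(r^{-2})$, which is harmless after taking expectations.

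The genuine gap is in Step~2, whose bulk claim is false as stated. At $|u|=r^{1/2}$ the correction $r^{-1}H_3(u)$ is of order $r^{1/2}$. So $p/\varphi$ cannot equal $1+r^{-1}cH_3(u)+O(r^{-2})$ uniformly there, and your heuristic $\mathrm{KL}\approx\tfrac12\mathbb{E}_\varphi[h^2]$ assumes $h=p/\varphi-1$ is small. Concretely, in the bulk the truncated ratio density equals $\mathcal{N}(\mu_z;\,x\mu_y,\,\sigma_z^2+x^2\sigma_y^2)$ times a slowly varying folded-normal factor, up to exponentially small terms. Expanding its exponent $-(x-\mu_0)^2/(2(\mu_0^2\xi^2+x^2\eta^2))$ gives $\log(p/\varphi)=(\eta^2/\sqrt{D})\,H_3(u)+O\bigl((1+u^4)/r^2\bigr)$, so $p/\varphi\approx e^{c\sqrt r}$ at the edge of your window.

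Your route can be repaired by working with this log-form expansion. The bulk contribution is then $O(r^{-2})$, because $\mathbb{E}_p[H_3(u)]=\mathbb{E}_p[u^3]=O(1/r)$. For the tail, use that the standardized density is bounded by $C_\kappa$ and that $|X-\mu_0|\le 2|\mu_0|(\xi|\epsilon_z|+\eta|\epsilon_y|)$ on the main branch. Your ``bounded Lipschitz constant'' claim is false, since $\partial_y(z/y)=-z/y^2$ is unbounded in $z$.

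The paper avoids all pointwise density control with one observation you do not use: conditionally on $Y$, $X=Z/Y$ is exactly Gaussian. It compares the joint law of $(U,\epsilon_y)$ with the reference $\epsilon_y\sim\mathcal{N}(0,1)$, $U\mid\epsilon_y\sim\mathcal{N}(\rho\epsilon_y,\tau^2)$, where $\rho=-\eta/\sqrt D$ and $\tau=\xi/\sqrt D$. This reference has $U$-marginal exactly $N^*_X$. The KL chain rule reduces the joint KL to $-\log\mathbb{P}(A_r)$, with $A_r=\{|Y|\ge|\mu_y|/2\}$, plus an average of closed-form Gaussian--Gaussian KLs, each $O((1+\epsilon_y^4)/r^2)$. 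Data processing then passes the bound to the marginal (Lemma~\ref{lem:nongauss-ratio}). Your repaired expansion route is heavier but can in principle yield the sharp leading constant. The coupling argument gives only some $C_\kappa$, which is all the theorem needs.
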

for some constant $C_\kappa > 0$ depending only on $\kappa$. 
The proof is in Appendix~\ref{app:proofs:product-bound}.

\subsection{The ReLU Factor}
\label{sec:factors:relu}

\paragraph{Factor definition.}
The ReLU factor encodes the leaky-ReLU nonlinearity
$y = \relu{x}{\alpha} := \max(0,x) + \alpha\min(0,x)$ for $\alpha \geq 0$:
\begin{equation}
  f(x, y) = \dirac{y - \relu{x}{\alpha}}.
  \label{eq:relu-factor}
\end{equation}
Setting $\alpha = 0$ gives the standard ReLU; $\alpha = 1$ gives the
identity; $\alpha > 0$ gives a leaky variant.
We focus on the leaky case ($\alpha > 0$), which is used throughout the BNN
experiments; the standard ReLU ($\alpha = 0$) is discussed only to identify
where its backward message becomes improper (Remark~\ref{rem:relu-improper}).

\paragraph{Forward message.}
The exact forward message to $Y$ is the pushforward of
$X \sim \mathcal{N}(\mu_x, \sigma_x^2)$ through $\relu{\cdot}{\alpha}$.
This is a mixture of a truncated Gaussian on $\mathbb{R}_{>0}$ (for $x > 0$,
$y = x$) and a scaled truncated Gaussian on $\mathbb{R}_{\leq 0}$ (for
$x \leq 0$, $y = \alpha x$); neither piece is Gaussian.
We apply moment matching via the doubly-truncated Gaussian moments
(whose computation uses the Mills ratio $M(t) := \phi(t)/\Phi(t)$ for
the CDF and PDF of the standard normal).

\begin{proposition}[ReLU Forward Message]
\label{prop:relu-fwd}
For the factor~\eqref{eq:relu-factor} with $X \sim \mathcal{N}(\mu_x, \sigma_x^2)$, let
  $u := \frac{\mu_x}{\sigma_x}$, $P := \Phi(u)$, $\phi := \varphi(u)$, 
  $A := \alpha + (1-\alpha)P$, $B := \alpha^2 + (1-\alpha^2)P$.
The DMA forward message is $\Normal{y}{m_Y}{s_Y^2}$ with
\begin{equation}
  m_Y   = \mu_x A + (1-\alpha)\,\sigma_x\,\phi, \qquad 
  s_Y^2 = (\mu_x^2 + \sigma_x^2)B 
             + (1-\alpha^2)\,\mu_x\,\sigma_x\,\phi
             - m_Y^2. \label{eq:relu-fwd-var}
\end{equation}
\end{proposition}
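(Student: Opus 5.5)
The plan is to compute the two moments directly from the definition of $Y=\relu{X}{\alpha}$, splitting the expectation at $x=0$, and then invoke Theorem~\ref{thm:kl-min} for the projection. Writing $Y = X\,\mathbb{1}\{X>0\} + \alpha X\,\mathbb{1}\{X\le 0\}$, we get
\[
  \mathbb{E}[Y] = \mathbb{E}[X\mathbb{1}\{X>0\}] + \alpha\,\mathbb{E}[X\mathbb{1}\{X\le0\}]
  = \alpha\mu_x + (1-\alpha)\,\mathbb{E}[X\mathbb{1}\{X>0\}],
\]
using $\mathbb{E}[X\mathbb{1}\{X\le0\}] = \mu_x - \mathbb{E}[X\mathbb{1}\{X>0\}]$. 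Similarly, since the two indicators are disjoint, $Y^2 = X^2\mathbb{1}\{X>0\} + \alpha^2X^2\mathbb{1}\{X\le0\}$, so
\[
  \mathbb{E}[Y^2] = \alpha^2(\mu_x^2+\sigma_x^2) + (1-\alpha^2)\,\mathbb{E}[X^2\mathbb{1}\{X>0\}].
\]
The whole problem therefore reduces to the two partial moments of $X$ over $(0,\infty)$.

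Next, standardise by writing $X=\mu_x+\sigma_x S$ with $S\sim\mathcal{N}(0,1)$. The event $X>0$ is then $S>-u$. Using $\int_{-u}^\infty \varphi = \Phi(u)=P$ and $\int_{-u}^\infty s\varphi(s)\,ds = \varphi(u)=\phi$, together with integration by parts $\int_{-u}^\infty s^2\varphi(s)\,ds = -u\varphi(u)+\Phi(u)$, we get
\[
  \mathbb{E}[X\mathbb{1}\{X>0\}] = \mu_x P + \sigma_x\phi,\qquad
  \mathbb{E}[X^2\mathbb{1}\{X>0\}] = (\mu_x^2+\sigma_x^2)P + \mu_x\sigma_x\phi.
\]
The $\sigma_x^2$ term in the second identity comes from $\sigma_x^2(\Phi(u)-u\phi)$, and the cross term $2\mu_x\sigma_x\phi$ partly cancels against $-\sigma_x^2 u\phi = -\mu_x\sigma_x\phi$.

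Substituting into the previous step gives $m_Y=\mu_x(\alpha+(1-\alpha)P)+(1-\alpha)\sigma_x\phi=\mu_xA+(1-\alpha)\sigma_x\phi$ and $\mathbb{E}[Y^2]=(\mu_x^2+\sigma_x^2)B+(1-\alpha^2)\mu_x\sigma_x\phi$. Subtracting $m_Y^2$ yields~\eqref{eq:relu-fwd-var}. Since $Y$ has finite second moments, Theorem~\ref{thm:kl-min} (following the recipe of Section~\ref{sec:dma:construction}) identifies $\mathcal{N}(m_Y,s_Y^2)$ as the forward-KL-optimal Gaussian, so it is the DMA message.

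I would also remark that $s_Y^2>0$ whenever $\sigma_x>0$, because $Y$ is non-degenerate. As $\sigma_x\to0$ with $\mu_x\ne0$, we have $u\to\pm\infty$ and $\phi\to0$, so the moments tend to $\relu{\mu_x}{\alpha}$ and $0$, which is consistent with Definition~\ref{def:dma}.

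The only real obstacle is bookkeeping: getting the second partial moment right, where the sign of $u\varphi(u)$ and the cancellation of the cross term are where errors creep in. I would check it against $\alpha=1$, where the result should be $A=B=1$, $m_Y=\mu_x$ and $s_Y^2=\sigma_x^2$.
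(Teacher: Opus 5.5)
Your proposal is correct and follows essentially the same route as the paper: split $\relu{X}{\alpha}$ at the origin, evaluate the Gaussian partial moments on each half-line, assemble $\mathbb{E}[Y]$ and $\mathbb{E}[Y^2]$, and project via Theorem~\ref{thm:kl-min}. The only cosmetic difference is how the negative half-line moments are obtained. You write them as the total moment minus the positive half-line moment, and you derive the half-line identities explicitly by integration by parts. The paper instead tabulates both halves directly in~\eqref{eq:trunc-moments}.
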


\paragraph{Backward message.}
The backward message to $X$ integrates the factor against the incoming
message $m_{Y \to f}(y) = \mathcal{N}(\mu_y, \sigma_y^2)$.
For $x > 0$ the Jacobian of $y = x$ is $1$; for $x \leq 0$ the Jacobian of
$y = \alpha x$ is $\alpha$, contributing a factor $1/\alpha$ to the
unnormalised message on that piece.
The total mass of the (unnormalised) backward mixture is therefore
$\tilde{C}/\alpha$ with $\tilde{C} := \alpha P + Q$,
where $v := \mu_y/\sigma_y$, $P := \Phi(v)$, $Q := 1-P$.
Normalising by $\tilde{C}/\alpha$ before matching moments yields a proper
Gaussian approximation for all $\alpha > 0$.

\begin{proposition}[ReLU Backward Message]
\label{prop:relu-bwd}
For the factor $\dirac{y - \relu{x}{\alpha}}$ with $\alpha > 0$ and
$Y \sim \mathcal{N}(\mu_y, \sigma_y^2)$, let $v := \mu_y/\sigma_y$,
$P := \Phi(v)$, $Q := 1-P$, $\phi := \varphi(v)$, and
$\tilde{C} := \alpha P + Q$.
The DMA backward message is $\Normal{x}{m_X}{s_X^2}$ with
\begin{align}
  m_X   &= \frac{\mu_y(\alpha^2 P + Q)
                 + (\alpha^2-1)\,\sigma_y\,\phi}
                {\alpha\,\tilde{C}},
              \label{eq:relu-bwd-mean} \\
  s_X^2 &= \frac{(\mu_y^2+\sigma_y^2)(\alpha^3 P + Q)
                  + (\alpha^3-1)\,\mu_y\,\sigma_y\,\phi}
                 {\alpha^2\,\tilde{C}}
            - m_X^2.
              \label{eq:relu-bwd-var}
\end{align}
\end{proposition}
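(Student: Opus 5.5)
The plan is to apply the construction recipe of Section~\ref{sec:dma:construction} literally. First I write down the exact unnormalised backward message, then I normalise it, and then I compute its first two moments and invoke Theorem~\ref{thm:kl-min} (moment matching is the forward-KL projection onto $\mathcal{Q}$). Write $\mathcal{N}(\cdot\,;\mu_y,\sigma_y^2)$ for the incoming density. Integrating $\dirac{y-\relu{x}{\alpha}}$ against it over $y$ gives the exact message
\[
\tilde m(x)=\mathcal{N}(x;\mu_y,\sigma_y^2)\,\mathbf{1}[x>0]+\mathcal{N}(\alpha x;\mu_y,\sigma_y^2)\,\mathbf{1}[x\le 0].
\]
This is a function of $x$, so no Jacobian appears at this stage. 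The factor $1/\alpha$ only enters when the $x\le 0$ piece is integrated.

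The key device is to reduce every integral over $x$ to an integral over $y$ against the Gaussian. On $x>0$ this is trivial, since $y=x$. On $x\le 0$ I substitute $y=\alpha x$, so that $x=y/\alpha$ and $dx=dy/\alpha$. This turns $\int_{x\le0}x^k\,\mathcal{N}(\alpha x)\,dx$ into $\alpha^{-(k+1)}\int_{y\le 0}y^k\,\mathcal{N}(y)\,dy$; it is this substitution that needs $\alpha>0$.

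Next I use the standard one-sided Gaussian moments, with $v=\mu_y/\sigma_y$ as in the statement:
\[
\int_0^\infty \mathcal{N}=P,\qquad \int_0^\infty y\,\mathcal{N}=\mu_yP+\sigma_y\phi,\qquad \int_0^\infty y^2\,\mathcal{N}=(\mu_y^2+\sigma_y^2)P+\mu_y\sigma_y\phi.
\]
The negative half-line versions are the same expressions with $P\to Q$ and $\phi\to-\phi$. Each follows from integrating by parts using $\varphi'(t)=-t\varphi(t)$.

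Combining the two pieces gives three quantities:
\begin{itemize}
\item the mass is $P+Q/\alpha=\tilde C/\alpha$, which is finite and positive for $\alpha>0$;
\item the unnormalised first moment is $\mu_yP+\sigma_y\phi+(\mu_yQ-\sigma_y\phi)/\alpha^2$;
\item the unnormalised second moment is $(\mu_y^2+\sigma_y^2)P+\mu_y\sigma_y\phi+\bigl((\mu_y^2+\sigma_y^2)Q-\mu_y\sigma_y\phi\bigr)/\alpha^3$.
\end{itemize}
Dividing each moment by $\tilde C/\alpha$ and multiplying through by $\alpha$ (respectively $\alpha^2$) to clear the inner denominators yields exactly \eqref{eq:relu-bwd-mean} and the first term of \eqref{eq:relu-bwd-var}. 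Subtracting $m_X^2$ then gives the variance. Finally, Theorem~\ref{thm:kl-min} identifies $\mathcal{N}(m_X,s_X^2)$ as the forward-KL projection of the normalised message, i.e.\ the DMA of the recipe. Properness of the normalised message, guaranteed by $\tilde C>0$, is what makes the recipe applicable and places this edge within the scope of Definition~\ref{def:dma}.

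There is no real obstacle, since everything is elementary. The step most prone to error is the bookkeeping of the Jacobian powers on the negative piece. These are $\alpha^{-1}$ for the mass, $\alpha^{-2}$ for the first moment and $\alpha^{-3}$ for the second, and they must be distinguished from the single overall factor $\alpha$ coming from the normalisation. The sign of the $\phi$ terms on the negative half-line also needs care, because it produces the factors $(\alpha^2-1)$ and $(\alpha^3-1)$. As a sanity check, I would verify that $\alpha=1$ gives $\tilde C=1$, $m_X=\mu_y$ and $s_X^2=\sigma_y^2$, which is the identity factor as expected.
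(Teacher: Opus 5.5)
Your proposal is correct and follows the paper's proof essentially step for step. Both write down the two-piece exact message, obtain the mass $\tilde C/\alpha$ and the unnormalised moments via the substitution $t=\alpha x$ (with Jacobian powers $\alpha^{-1},\alpha^{-2},\alpha^{-3}$) together with the one-sided Gaussian moment identities, then normalise and moment-match via Theorem~\ref{thm:kl-min}.
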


\paragraph{Consistency verification.}
In both directions the Dirac limit ($u \to \pm\infty$ forward, $v \to \pm\infty$
backward) recovers $\relu{\bar{x}}{\alpha}$ and $\relu{\bar{y}}{\alpha}^{-1}$
respectively; the derivation is in Appendix~\ref{app:proofs:relu}.
For $\alpha = 0$ the backward message is improper (Appendix~\ref{app:proofs:relu}).


\section{Application: Bayesian Neural Network Inference}
\label{sec:bnn}

The two DMA factors from Section~\ref{sec:factors} are sufficient to implement
a complete inference algorithm for feedforward BNNs composed of Gaussian,
product, sum, and leaky-ReLU factors.
We assemble them here, apply the three structural corollaries to the
proper-message components, and validate the resulting algorithm (one forward/backward factor sweep
per training example, no EP-style inner-loop fixed-point iteration) on a 1D regression task.

\subsection{Factor Graph of a Feedforward Network}
\label{sec:bnn:fg}

A feedforward BNN with $L$ layers, input $\mathbf{x} \in \mathbb{R}^{d_0}$,
and hidden widths $d_1, \ldots, d_L$ is modelled as 
\begin{equation}
  p(\mathbf{W}, \mathbf{x}^{(1)}, \ldots, \mathbf{x}^{(L)}, y)
    = p(y \mid \mathbf{x}^{(L)})
      \prod_{l=1}^{L}
        p(\mathbf{W}^{(l)})\,
        \dirac{\mathbf{z}^{(l)}\!-\!\mathbf{W}^{(l)}\mathbf{x}^{(l-1)}}\,
        \dirac{\mathbf{x}^{(l)}\!-\!\relu{\mathbf{z}^{(l)}}{\alpha}},
        \label{eq:bnn:joint}
\end{equation}
with $\mathbf{x}^{(0)} = \mathbf{x}$.
Each Dirac-delta factor decomposes into independent 1D factors:
the matrix-vector product $\mathbf{z}^{(l)} = \mathbf{W}^{(l)}\mathbf{x}^{(l-1)}$
decomposes into $d_l \times d_{l-1}$ 1D product factors
$\dirac{z_{ij} - W^{(l)}_{ij}\,x^{(l-1)}_j}$ connected by $d_l$ sum factors
$z^{(l)}_i = \sum_j z^{(l)}_{ij}$ that implement the inner product;
the activation $\mathbf{x}^{(l)} = \relu{\mathbf{z}^{(l)}}{\alpha}$ splits
into $d_l$ independent ReLU factors $\dirac{x^{(l)}_i - \relu{z^{(l)}_i}{\alpha}}$.

\begin{figure}[t]
  \centering
  \resizebox{\linewidth}{!}{%
  \begin{tikzpicture}[
    var/.style    = {circle, draw, thick, minimum size=7.5mm,
                     inner sep=0pt, font=\footnotesize},
    latent/.style = {circle, draw, thick, minimum size=7.5mm,
                     inner sep=0pt, font=\footnotesize, fill=blue!12},
    obs/.style    = {circle, draw, thick, double, minimum size=7.5mm,
                     inner sep=0pt, font=\footnotesize},
    fnode/.style  = {rectangle, draw, fill=black!80, text=white,
                     minimum size=5.5mm, inner sep=0pt, font=\footnotesize},
    >=stealth, thick
  ]
  \node[var]    (x0)  at (0,0)   {$\mathbf{x}$};

  \node[latent] (W1)  at (2.5, 1.2) {$W^{(1)}$};
  \node[fnode]  (pi1) at (2.5, 2.4) {$\pi$};
  \node[fnode]  (pr1) at (2.5, 0)   {$\times$};
  \node[var]    (z1)  at (4.8, 0)   {$\mathbf{z}^{(1)}$};

  \draw (pi1) -- (W1);
  \draw (W1)  -- (pr1);
  \draw (x0)  -- (pr1);
  \draw (pr1) -- (z1);

  \node[fnode]  (r1)  at (6.5, 0)   {\textrm{R}};
  \node[var]    (x1)  at (8.2, 0)   {$\mathbf{x}^{(1)}$};

  \draw (z1) -- (r1);
  \draw (r1) -- (x1);

  \node[latent] (W2)  at (10.0, 1.2) {$W^{(L)}$};
  \node[fnode]  (pi2) at (10.0, 2.4) {$\pi$};
  \node[fnode]  (pr2) at (10.0, 0)   {$\times$};
  \node[var]    (z2)  at (12.0, 0)   {$\mathbf{z}^{(L)}$};

  \draw (pi2) -- (W2);
  \draw (W2)  -- (pr2);
  \draw (x1)  -- (pr2);
  \draw (pr2) -- (z2);

  \node[fnode]  (lik) at (13.7, 0)   {$\ell$};
  \node[obs]    (y)   at (15.2, 0)   {$y$};

  \draw (z2)  -- (lik);
  \draw (lik) -- (y);

  \draw[dashed] (x1) -- (pr2);
  \node[fill=white, inner sep=2pt] at (9.1, 0) {\large$\cdots$};

  \end{tikzpicture}}%
  \caption{Factor graph of a BNN (schematic; two learnable layers shown).
    Circles are variable nodes; filled squares are factor nodes.
    Blue-tinted circles represent the $d_l\!\times\!d_{l-1}$ independent scalar
    weight variables $W^{(l)}_{ij}$ (shown collectively per layer for clarity),
    each with a Gaussian prior factor $\pi$.
    Each $\times$ node is a \emph{matrix-vector product factor}:
    shorthand for $d_l\!\times\!d_{l-1}$ 1D product factors
    $\dirac{z_{ij}-W^{(l)}_{ij}\,x^{(l-1)}_j}$ (Sec.~\ref{sec:factors:product})
    combined with $d_l$ sum factors $z^{(l)}_i=\sum_j z^{(l)}_{ij}$,
    together implementing the inner product (see Sec.~\ref{sec:bnn:fg}).
    \textrm{R} factors are element-wise leaky-ReLU factors (Sec.~\ref{sec:factors:relu});
    $\ell$ is the Gaussian likelihood factor (conjugate; $\delta=0$);
    the double circle is the observed output $y$.
    \vspace{-0.1cm}
    }
  \label{fig:bnn:fg}
\end{figure}

Figure~\ref{fig:bnn:fg} illustrates the factor graph schematically; each $\times$
node is a matrix-vector product factor composed of $d_l\!\times\!d_{l-1}$ 1D
product factors and $d_l$ sum factors.
The three factor types present (Gaussian prior factors, product
factors, and ReLU factors) carry conjugate, DMA (Prop.~\ref{prop:product-fwd}),
and DMA (Prop.~\ref{prop:relu-fwd}) messages respectively.
By Corollary~\ref{cor:dirac-exact}, all prior and linear-copy factors satisfy
the consistency condition; ReLU factors are the dominant source of
nonzero $\delta$ for uncertain inputs.

\subsection{Inference Algorithm: One Forward/Backward Sweep per Example}
\label{sec:bnn:alg}

Corollary~\ref{cor:single-pass} guarantees that no EP-style inner-loop
fixed-point iteration between individual message computations is required: each
factor-to-variable message is computed exactly once per sweep.
We fix the order as a single forward sweep followed by a single backward sweep
per training example, mirroring the computation graph of a standard neural
network.
The prior and likelihood factors produce exact Gaussian messages
\citep{WinBis2005a}; the non-conjugate factors use Propositions~\ref{prop:product-fwd}--\ref{prop:relu-bwd}.
Algorithm~\ref{alg:bnn} gives the complete training procedure
(split across two pages); each mini-batch maintains one stored factor-to-weight
message that is replaced (not accumulated) on each pass, so weight beliefs are
updated by EP-style message replacement rather than sequential accumulation.
Algorithm~\ref{alg:bnn} runs in $O(\sum_l d_l d_{l-1})$ time per example,
matching the asymptotic complexity of a standard neural network forward-backward pass
with a modest constant-factor overhead ($1.8\times$ per epoch),
and requires only two scalar parameters $(\tau, \rho)$ per weight belief plus
one stored factor message per mini-batch per weight.

\subsection{Illustrative Experiment: 1D Regression}
\label{sec:bnn:experiment}

\paragraph{Setup.}
We evaluate Algorithm~\ref{alg:bnn} on a scalar regression task under a
correctly specified model: the ground truth $f$ is itself a draw from the
BNN prior, which ensures that the model's parameters can be learned at all,
a standard simulation-study design that isolates inference quality from model
misspecification.
We generate $N = 200$ noise-corrupted observations at inputs drawn uniformly
from $[-2.5, 1.5]$ and run DMA to recover the posterior.
A fixed 7-component feature map $\varphi$ is prepended and standardised;
a two-hidden-layer leaky-ReLU network ($d = 6, 5$) with Gaussian likelihood
($\beta = 0.2$) is trained for up to 200 epochs with mini-batches of 20.
Full architecture and hyperparameter details 
(prior scale $\sigma_0$, likelihood $\beta$, activation slopes $\alpha$) 
are in Appendix~\ref{app:experiments}.

\vspace{-0.1cm}

\paragraph{Results.}
Figure~\ref{fig:bnn:regression} shows the posterior predictive distribution
and the posterior weight beliefs after training.
The predictive mean tracks the true function closely within the training region
$[-2.5, 1.5]$; the $\pm 2\sigma$ intervals widen in the extrapolation regions,
where the training data provides no information, a qualitatively
appropriate epistemic uncertainty that grows where the model is uninformed.

\begin{figure}[t]
  \centering
  \includegraphics[width=0.9\linewidth]{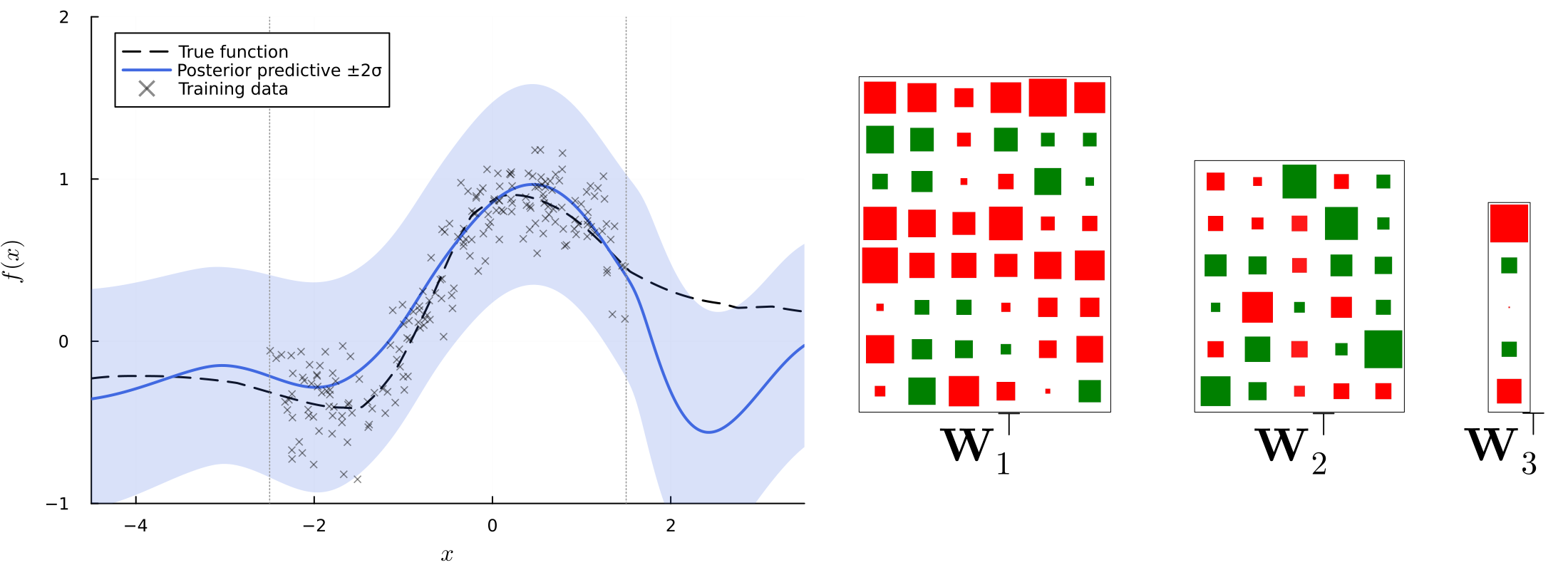}
    \vspace{-0.2cm}
  \caption{%
    \emph{Left}: Posterior predictive mean (solid) and $\pm 2\sigma$ intervals
    (shaded) versus the true data-generating function (dashed), where
    $\sigma{=}\sqrt{\mathrm{Var}[f(x)] + \beta^2}$ combines posterior
    variance with observation noise ($\beta{=}0.2$).
    Training points shown as crosses ($N=200$, $x\in[-2.5,\,1.5]$);
    dashed verticals mark the training boundaries.
    \emph{Right}: Hinton diagram of posterior weight beliefs after training.
    Square area $\propto$ posterior mean magnitude; transparency $\propto$
    posterior variance (opaque = certain).
    \vspace{-0.4cm}}
  \label{fig:bnn:regression}
\end{figure}


\vspace{-0.1cm}
\paragraph{Discussion.}
The experiment illustrates the behavior predicted by all three corollaries:
no EP-style inner-loop iteration is needed
(Corollary~\ref{cor:single-pass}), all weight variances remain positive throughout
(Corollary~\ref{cor:no-neg-prec}), and the posteriors are non-trivial distributions
because linear factors satisfy the consistency condition and contribute
small $\delta$ (Corollary~\ref{cor:dirac-exact}).
A comparison with Adam \citep{KingmaBa2015}, AdamW \citep{LoshchilovHutter2019}
at four weight-decay values, and the diagonal
Laplace approximation \citep{MacKay1992} (Appendix~\ref{app:experiments})
shows, on this correctly specified 1D task, DMA requires few epochs
(median epoch~16) with no gradient learning-rate hyperparameter; in the example, over 20
seeds DMA achieves a median extrapolation NLL of $0.80$ versus $4.05$ for AdamW
(Appendix~\ref{app:experiments:adamw}).
Both methods show similar calibration error over 20 seeds (DMA median $\Delta = -0.09$,
diagonal Laplace $\Delta = -0.10$); neither is systematically overconfident
(Appendix~\ref{app:experiments:calibration}).
Under model mismatch (data generated by a wider network), DMA's posterior
predictive continues to widen outside the training range while Adam
provides no epistemic uncertainty; see Appendix~\ref{app:mismatch}.

\vspace{-0.1cm}

\paragraph{Larger Networks.}
While the first experiment uses a small network for demonstration,
Appendix~\ref{app:large} evaluates DMA on a larger 1\,932-weight, four-output
network ($6{\to}6{\to}12{\to}48{\to}24{\to}4$, $N=1500$).
DMA obtains competitive results in 3 epochs ($0.72\,\text{s}$ total),
compared with $1.7\,\text{s}$ for Adam ($\eta=0.01$, requiring $\sim$100 epochs).
We used 100 mini-batches with the default settings in Appendix~\ref{app:experiments:setup}.


\section{Related Work}
\label{sec:related}

\paragraph{Marginal-error bounds from message perturbations.}
\citet{IhlerFisherWillsky2005} bound marginal changes under total-variation
perturbations via a graph-global contraction constant.
Theorem~\ref{thm:dma-master} is complementary: it handles projected messages
where $m \notin \mathcal{Q}$, is edge-local, and requires no contraction condition.

\vspace{-0.2cm}
\paragraph{Expectation propagation and variants.}
EP \citep{Minka2001} recovers outgoing messages by projecting the marginal and
dividing out the cavity (eq.~\ref{eq:bg:ep-division}).
Power EP \citep{Minka2004power}, $\alpha$-EP \citep{Minka2005}, and Stochastic
EP \citep{LiHernandezLobatoTurner2015} generalise the projection or reduce
memory, but all retain cavity division and its pathologies.
\citet{HernandezLobatoAdams2015} apply EP to a BNN, requiring per-example
cavity computation and iterative sweeps; DMA eliminates both
(Definition~\ref{def:dma}).

\vspace{-0.2cm}
\paragraph{Assumed density filtering.}
ADF \citep{Lauritzen1992,OpperWinther1999} also avoids cavity division, but
for a structural reason: it is a forward-only sequential algorithm that absorbs
each observation into a running prior and never performs a backward sweep.
Weight beliefs are therefore never updated via backward messages --- the step
DMA's product backward message (Section~\ref{sec:factors:product}) is designed
to handle.
Table~\ref{tab:method-comparison} (Appendix~\ref{sec:background}) compares
ADF, EP, VMP, and DMA.

\vspace{-0.2cm}
\paragraph{Variational inference and approximate BNNs.}
VMP \citep{WinBis2005a} and ELBO-based methods use the same marginal-first
division as EP; for deterministic factors this collapses the posterior to a
Dirac delta (Appendix~\ref{sec:background}).
Bayes by Backprop \citep{BlundellEtAl2015}, MC Dropout \citep{GalGhahramani2016},
and SWAG \citep{MaddoxEtAl2019} approximate the posterior via ELBO or SGD
trajectory statistics; none maintains explicit factor-graph message structure.

\vspace{-0.2cm}
\paragraph{Laplace approximation and gradient-based uncertainty.}
Diagonal Laplace \citep{MacKay1992} fits a Gaussian at the MAP post-hoc.
IVON \citep{shen2024variational} tracks a diagonal
natural-gradient estimate online; it propagates uncertainty through gradient
statistics rather than factor-graph messages and provides no backward product
message to weight beliefs.

\vspace{-0.2cm}
\paragraph{Message-passing systems.}
TrueSkill \citep{HerbrichMinkaGraepel2006} demonstrates EP at scale on
conjugate factors only, so the Dirac-delta collapse does not arise.
Infer.NET \citep{MinkaWinnGuiverKnowlesZaykov2018} provides a general EP/VMP
engine with pluggable message operators, illustrating the demand for
modular approximate-message infrastructure.


\section{Discussion and Conclusions}
\label{sec:discussion}

\paragraph{Summary.}
We have introduced direct message approximation (DMA), a framework for
approximate inference on factor graphs that replaces the marginal-projection
step of EP with a direct approximation of the outgoing message.
The guarantees form a two-tier structure.
\textbf{Theorem~\ref{thm:dma-master}} (\emph{proper} messages, any graph): bounds
the marginal KL at any edge by $O(\delta)$ with an edge-local,
topology-independent prefactor; three corollaries establish: no EP-style inner-loop iteration,
no negative-precision messages, and Dirac-input consistency at all linear factors.
\textbf{Theorem~\ref{thm:product-bwd-bound}} (\emph{improper} product-backward
message, concentrated-input regime): establishes a $C/r^2$ upper bound where
Theorem~\ref{thm:dma-master} does not apply; Appendix~\ref{app:validation:snr}
confirms this rate empirically against importance-sampling reference marginals
(Figure~\ref{fig:validation:snr-sweep}).

The BNN instantiation shows that DMA's structural guarantees carry through to 
larger models without sacrificing epistemic quality for scalability.
DMA fits the training data well and requires no gradient learning-rate
hyperparameter, avoiding the manual learning-rate tuning that gradient-based methods require.
Extrapolation uncertainty is as well calibrated as diagonal Laplace (limited to smaller models);
the widening intervals reflect genuine epistemic
uncertainty propagated structurally through the factor graph, not a seed artefact.
Under model mismatch, where no weight setting can fit the true function exactly,
DMA's posterior captures residual uncertainty and produces useful predictive
intervals beyond the training range; Adam, lacking uncertainty quantification,
extrapolates without any such signal.
The procedure scales to substantially larger networks without algorithmic change
and at wall-clock time comparable to Adam (Appendices~\ref{app:experiments}--\ref{app:large}).

\vspace{-0.2cm}
\paragraph{Limitations.}
The master theorem requires proper outgoing messages; factors whose exact messages are improper 
(such as the product-backward) need separate treatment beyond the general recipe.
DMA maintains a fully factorised Gaussian belief; block-diagonal extensions are
compatible in principle but require moment formulas under non-diagonal inputs.

\vspace{-0.2cm}
\paragraph{Future directions.}
Bounding accumulated error after $K > 1$ sweeps on cyclic graphs is the most
pressing open problem; closing this gap would give end-to-end convergence guarantees.
The construction recipe extends to any factor with closed-form moment formulas,
making the factor library a matter of mathematical derivation rather than framework change.
Student-$t$ and mixture-of-Gaussians messages are natural next targets for
heavier-tailed or multimodal posteriors.


\newpage
\subsection*{AI use statement}




In this work, we used generative AI tools for language editing and minor
implementation support, including assistance with code-related tasks during
development. We have not used generative AI tools to generate the scientific
contributions, theoretical results, proofs, experimental findings, or
interpretations presented in this work. We have reviewed and verified all
AI-assisted text and implementation changes, including checking the
correctness of any AI-assisted code. We take responsibility for the final
content of this work, including text, claims, and artifacts produced with the
aid of generative AI.

\subsection*{Ethics statement}



We do not identify any ethical concerns specific to this work.

\subsection*{Reproducibility statement}



We provide an anonymous Git repository containing the implementation of DMA,
the experimental configurations, and scripts used to reproduce the reported
results. The main paper and Appendix~B provide the assumptions, derivations,
and complete proofs of the theoretical results, while Appendix~D describes
the experimental setups and hyperparameters. Appendix~E provides the
experimental procedures and reference computations used to validate the
theoretical error bounds. Together, these materials provide the information
needed to reproduce both the theoretical and empirical results reported in
this work.

All results were produced on a consumer laptop (Intel Core Ultra~5 225U, 32\,GB RAM, 
single-threaded using Julia).

An anonymous code repository can be found here:

\url{https://anonymous.4open.science/r/DMA-Anom-0484/README.md}



\newpage


\bibliography{main_bibliography}
\bibliographystyle{iclr2027_conference}



\appendix

\bigskip
\bigskip

\section*{Appendix Overview}

\noindent
\begin{tabular}{@{}lp{0.88\linewidth}@{}}
  \textbf{A} & \textbf{Background} --- factor graphs, sum-product algorithm,
    $\alpha$-divergence family, EP, ADF, and VMP as marginal-based methods,
    and a comparison table. \\[2pt]
  \textbf{B} & \textbf{Proofs} --- all theorem and proposition proofs,
    including the master theorem (Thm.~\ref{thm:dma-master}),
    product-factor constructions (Props.~\ref{prop:product-fwd}--\ref{prop:product-bwd}
    and Thm.~\ref{thm:product-bwd-bound}), ReLU-factor constructions,
    and a theorem-coverage table by factor and direction. \\[2pt]
  \textbf{C} & \textbf{Empirical validation} --- numerical verification of
    the $O(\delta+\sqrt{\delta})$ master-theorem rate and the $O(1/r^2)$
    concentrated-input rate against importance-sampling reference marginals. \\[2pt]
  \textbf{D} & \textbf{Inference algorithm} --- complete pseudocode for
    Algorithm~\ref{alg:bnn}. \\[2pt]
  \textbf{E} & \textbf{Experimental details} --- feature map, architecture,
    hyperparameters, extended Adam, AdamW, and diagonal-Laplace comparisons,
    IVON sweep, EP exclusion rationale, and calibration curves. \\[2pt]
  \textbf{F} & \textbf{Model mismatch} --- 20-seed experiment comparing DMA
    and Adam when the inference model is narrower than the data-generating network. \\[2pt]
  \textbf{G} & \textbf{Larger-scale BNN} --- DMA on a 1\,932-weight,
    four-output network; baseline-choice rationale; calibration at scale. \\
\end{tabular}
\bigskip

\newpage


\section{Background (Notation and Review)}
\label{sec:background}

\subsection{Factor Graphs and the Sum-Product Algorithm}
\label{sec:background:fg}

A \emph{factor graph} is a bipartite graph that encodes the factorisation of a
joint density over $n$ variables $X_1, \ldots, X_n$ into $m$ local potentials:
\begin{equation}
  p(x_1, \ldots, x_n)
    = \prod_{i=1}^{m} f_i\!\bigl(\mathbf{x}_{\neighbours{f_i}}\bigr),
  \label{eq:bg:joint}
\end{equation}
where $\neighbours{f_i} \subseteq \{1,\ldots,n\}$ is the neighbourhood of factor
$f_i$ and $\neighbours{X_j} \subseteq \{1,\ldots,m\}$ is the neighbourhood of
variable $X_j$.

On a factor \emph{tree}, the sum-product algorithm computes the exact marginal
$p_{X_j}$ of every variable via four message equations
\citep{KschischangFreyLoeliger2001}:
\begin{align}
  p_{X_j}(x_j)
    &= \prod_{i \in \neighbours{X_j}} \msgto{f_i}{X_j}(x_j),
    \label{eq:bg:marginal} \\
  \msgto{f_i}{X_j}(x_j)
    &= \int f_i\!\bigl(\mathbf{x}_{\neighbours{f_i}}\bigr)
       \prod_{k \in \neighbours{f_i}\setminus\{j\}}
         \msgto{X_k}{f_i}(x_k)
       \intd{\mathbf{x}_{\neighbours{f_i}\setminus\{j\}}},
    \label{eq:bg:f-to-x} \\
  \msgto{X_j}{f_i}(x_j)
    &= \prod_{k \in \neighbours{X_j}\setminus\{i\}} \msgto{f_k}{X_j}(x_j).
    \label{eq:bg:x-to-f}
\end{align}
The factor-to-variable message~\eqref{eq:bg:f-to-x} integrates the factor
against all incoming variable messages \emph{except} the one at the target
edge; similarly, the variable-to-factor message~\eqref{eq:bg:x-to-f} is the
product of all incoming factor messages except at the target edge.
Multiplying the two messages on any edge recovers the \emph{marginal-edge
identity}:
\begin{equation}
  p_{X_j}(x_j)
    = \msgto{f_i}{X_j}(x_j) \cdot \msgto{X_j}{f_i}(x_j).
  \label{eq:bg:marginal-edge}
\end{equation}

\paragraph{Conjugacy constraint.}
The integral in~\eqref{eq:bg:f-to-x} is in closed form only when $f_i$ is
conjugate to the incoming message family~$\mathcal{Q}$.
For Gaussian messages, linear factors $\dirac{x_j - \mathbf{a}^\top
\mathbf{x}_{-j}}$ and Gaussian-likelihood factors are conjugate; the scalar
product factor $\dirac{z - xy}$ and activation factors such as ReLU and softmax
are not, and~\eqref{eq:bg:f-to-x} does not yield a Gaussian in closed form.

\subsection{The Projection Step and the $\alpha$-Divergence Family}
\label{sec:background:proj}

When the exact message~\eqref{eq:bg:f-to-x} is intractable, it is replaced by
a projection onto a tractable family~$\mathcal{Q}$.
The $\alpha$-divergence \citep{Amari1985,Minka2005}
\[
  D_\alpha(p \,\|\, q)
    = \frac{1}{\alpha(1-\alpha)}
      \Bigl(1 - \int p(x)^\alpha\, q(x)^{1-\alpha} \intd{x}\Bigr),
    \quad \alpha \in \mathbb{R},
\]
unifies many classical algorithms:
$\lim_{\alpha\to 1} D_\alpha = \KL{p}{q}$ (forward KL, used by EP)
and $\lim_{\alpha\to 0} D_\alpha = \KL{q}{p}$ (reverse KL, used by VMP).
The projection step in both algorithms relies on the following standard result.

\begin{theorem}[Moment-Matching Minimiser of KL]
\label{thm:kl-min}
Let $p$ be an arbitrary density and $\mathcal{Q} = \{q(\cdot\mid\boldsymbol
\theta) = \exp(\boldsymbol\theta^\top \mathbf{T}(\cdot) - A(\boldsymbol\theta))\}$
an exponential family with sufficient statistic~$\mathbf{T}$.
The minimiser $q^* = \arg\min_{q \in \mathcal{Q}} \KL{p}{q}$ satisfies
\begin{equation}
  \mathbb{E}_{q^*}[\mathbf{T}(X)] = \mathbb{E}_{p}[\mathbf{T}(X)].
  \label{eq:bg:moment-match}
\end{equation}
\end{theorem}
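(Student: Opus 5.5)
The plan is the standard exponential-family argument: show that $\mathrm{KL}(p\|q_\theta)$ is a convex function of the natural parameter $\theta$, and read off the moment-matching condition~\eqref{eq:bg:moment-match} as its first-order optimality condition. Writing out the divergence against $q(\cdot\mid\boldsymbol\theta)=\exp(\boldsymbol\theta^\top\mathbf{T}-A(\boldsymbol\theta))$ gives
\[
  \KL{p}{q_{\boldsymbol\theta}} = \mathbb{E}_p[\log p(X)] - \boldsymbol\theta^\top \mathbb{E}_p[\mathbf{T}(X)] + A(\boldsymbol\theta).
\]
The first term does not depend on $\boldsymbol\theta$. So minimising the KL is the same as minimising $L(\boldsymbol\theta) := A(\boldsymbol\theta) - \boldsymbol\theta^\top \mathbb{E}_p[\mathbf{T}(X)]$ over the natural parameter space. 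Throughout I assume $\mathbb{E}_p[\log p]$ and $\mathbb{E}_p[\mathbf{T}]$ are finite, so the objective is well defined.

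Next I use the two standard facts about the log-partition function $A(\boldsymbol\theta)=\log\int\exp(\boldsymbol\theta^\top\mathbf{T}(x))\,\mathrm{d}x$. Differentiating under the integral on the interior of the natural parameter space gives $\nabla A(\boldsymbol\theta)=\mathbb{E}_{q_{\boldsymbol\theta}}[\mathbf{T}(X)]$ and $\nabla^2 A(\boldsymbol\theta)=\mathrm{Cov}_{q_{\boldsymbol\theta}}[\mathbf{T}(X)]\succeq 0$. Hence $L$ is convex and differentiable, with
\[
  \nabla L(\boldsymbol\theta)=\mathbb{E}_{q_{\boldsymbol\theta}}[\mathbf{T}]-\mathbb{E}_p[\mathbf{T}].
\]
At an interior minimiser $q^*=q_{\boldsymbol\theta^*}$ the gradient must vanish, which is exactly~\eqref{eq:bg:moment-match}. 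Convexity also gives a converse: any $\boldsymbol\theta$ that matches the moments is a global minimiser. Strict convexity, which holds when the family is minimal so that the covariance is positive definite, gives uniqueness.

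The main obstacle is justifying that the minimiser exists and lies in the interior, so that the first-order condition applies. Without this, the infimum could be approached only on the boundary, or the differentiation of $A$ could fail. I would handle it in two steps. First, assume the family is regular, meaning the natural parameter space is open. Second, note that $L$ is coercive, i.e.\ $L(\boldsymbol\theta)\to\infty$ as $\boldsymbol\theta$ leaves every compact set, whenever $\mathbb{E}_p[\mathbf{T}]$ lies in the interior of the convex support of $\mathbf{T}$. For the Gaussian case used in this paper, $\mathbf{T}(x)=(x,x^2)$, and the interior condition reduces to $p$ having finite, positive variance. This holds for all proper messages considered here. For Dirac inputs the statement is read in the limiting sense already used in the consistency verification. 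Finally, as a sanity check I would verify directly in the Gaussian case that setting $\mu=\mathbb{E}_p[X]$ and $\sigma^2=\mathrm{Var}_p[X]$ solves $\nabla L=0$.
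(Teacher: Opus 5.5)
Your proposal is correct and follows the same route as the paper: you expand the KL as $-H(p) + A(\boldsymbol\theta) - \boldsymbol\theta^\top\mathbb{E}_p[\mathbf{T}]$, use $\nabla A(\boldsymbol\theta) = \mathbb{E}_{q_{\boldsymbol\theta}}[\mathbf{T}]$ to read moment matching off the stationarity condition, and invoke convexity of $A$. You also add two points the paper's proof passes over: existence and interiority of the minimiser (via regularity, plus coercivity when $\mathbb{E}_p[\mathbf{T}]$ lies in the interior of the convex support), and the fact that uniqueness requires strict convexity, i.e.\ a minimal family.
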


For Gaussian $\mathcal{Q}$ the condition~\eqref{eq:bg:moment-match} reduces to
matching the mean and variance of~$p$.
Theorem~\ref{thm:kl-min} will serve as the projection step in both EP and
in the DMA construction of Section~\ref{sec:dma:construction}.

\subsection{Prior Approximate Inference Methods}
\label{sec:background:ep-vmp}

\paragraph{Expectation propagation.}
EP \citep{Minka2001} approximates each factor-to-variable message in three
steps.
\begin{enumerate}
  \item Form the \emph{true unnormalised marginal}
    $p_{X_j} = \msgto{f_i}{X_j} \cdot \msgto{X_j}{f_i}$
    using the exact integral~\eqref{eq:bg:f-to-x} (combined with
    \eqref{eq:bg:marginal-edge}).
  \item Project $p_{X_j}$ onto $\mathcal{Q}$ via Theorem~\ref{thm:kl-min},
    obtaining $\hat{p}_{X_j} \in \mathcal{Q}$.
  \item Recover the approximate message by dividing out the incoming message:
\end{enumerate}
\begin{equation}
  \hat{m}_{f_i \to X_j}(\cdot)
    = \hat{p}_{X_j}(\cdot) \;/\; \msgto{X_j}{f_i}(\cdot).
  \label{eq:bg:ep-division}
\end{equation}
The approximation is exact when $p_{X_j} \in \mathcal{Q}$; otherwise EP
introduces a local error and must iterate to a fixed point.

\paragraph{Variational message passing.}
VMP \citep{WinBis2005a} instead maximises the evidence lower bound
$\mathcal{L}(\hat{p}) = \mathbb{E}_{\hat{p}}[\log p - \log \hat{p}]$
under a fully factorised approximation $\hat{p}(\mathbf{x}) = \prod_k
\hat{p}_{X_k}(x_k)$, which is equivalent to minimising $\KL{\hat{p}}{p}$.
Coordinate ascent on $\hat{p}_{X_k}$ yields
\begin{equation}
  \hat{p}_{X_k}(x_k)
    \propto \exp\!\bigl(
      \mathbb{E}_{\hat{p}_{-k}}[\log f_i(\mathbf{x}_{\neighbours{f_i}})]
    \bigr),
  \label{eq:bg:vmp-update}
\end{equation}
where the expectation is over all neighbours of $f_i$ except $X_k$.
When $\log f_i$ decomposes linearly in each argument separately, this
produces a message $\hat{m}_{f_i \to X_k} \propto \hat{p}_{X_k}$;
again the message is recovered by dividing out the incoming message at the
same edge, so VMP also obeys the form~\eqref{eq:bg:ep-division}.

\paragraph{Assumed-density filtering.}
ADF \citep{Lauritzen1992,OpperWinther1999} processes observations sequentially,
projecting the posterior onto $\mathcal{Q}$ after each observation and using it
as the prior for the next; no stored factor message needs to be divided out.
ADF therefore also avoids cavity division, but for a structural reason: it is
an online, forward-only algorithm that approximates the \emph{marginal} at each
step (Table~\ref{tab:method-comparison}).
Crucially, standard ADF's forward-only architecture provides no mechanism for
propagating output likelihood information backward to update upstream weight
beliefs --- the technically difficult step that DMA's backward product message
is designed to handle.

\paragraph{Pathologies of the marginal-first design.}
EP and VMP approximate $\hat{p}_{X_j}$ and recover the message by division
\eqref{eq:intro:ep-division}, inducing three pathologies.
\emph{(i) Iteration (EP and VMP).}
The division makes $\hat{m}_{f_i \to X_j}$ depend on $\msgto{X_j}{f_i}$,
so updating one edge invalidates neighbouring messages; convergence requires
repeated sweeps.
\emph{(ii) Negative precision (Gaussian EP).}
When $\hat{p}_{X_j}$ is wider than $\msgto{X_j}{f_i}$, the ratio
\eqref{eq:bg:ep-division} yields a Gaussian with negative precision, an
invalid distribution that propagates downstream.
\emph{(iii) Dirac-delta collapse (VMP).}
For a deterministic factor $f_i(\mathbf{x}) = \dirac{x_j - g(\mathbf{x}_{-j})}$,
the ELBO update~\eqref{eq:bg:vmp-update} forces $\hat{p}_{X_j}$ to a Dirac
delta; since every parameterised layer in the factorisation used here is such a
factor, VMP cannot propagate non-trivial posterior variance backward to the weights.
Section~\ref{sec:dma} introduces DMA, which avoids this design by approximating
the message directly.
\medskip

\begin{table}[h]
  \centering
  \caption{%
    Comparison of approximate inference frameworks on properties relevant to
    BNN inference. Below the first row, $\checkmark$ indicates a desirable property.
    \emph{Approximates}: whether the method targets the factor-to-variable
    message directly or approximates the marginal at each edge.
    \emph{No cavity division}: whether updates avoid dividing out a stored
    incoming message (eq.~\ref{eq:bg:ep-division}).
    \emph{No inner-loop iteration}: whether a single forward/backward sweep
    per example suffices, with no per-example fixed-point iteration.
    \emph{Backward weight updates}: whether weight beliefs are updated via
    a backward message sweep.
    \emph{Consistency axiom / edge-local KL bound}: see Definition~\ref{def:dma}
    and Theorem~\ref{thm:dma-master}.}
  \label{tab:method-comparison}
  \begin{tabular}{lcccc}
    \toprule
    & ADF & EP & VMP & \textbf{DMA (ours)} \\
    \midrule
    Approximates            & marginal & marginal & marginal & \textbf{message} \\
    \midrule
    Backward weight updates & $\times$ & $\checkmark$ & $\checkmark$ & $\checkmark$ \\
    No cavity division      & $\checkmark$ & $\times$ & $\times$ & $\checkmark$ \\
    No inner-loop iteration & $\checkmark$ & $\times$ & $\times$ & $\checkmark$ \\
    Consistency axiom       & $\times$ & $\times$ & $\times$ & $\checkmark$ \\
    Edge-local KL bound     & $\times$ & $\times$ & $\times$ & $\checkmark$ \\
    \bottomrule
  \end{tabular}
\end{table}


\newpage

\section{Proofs}
\label{app:proofs}

\subsection{Proof of Theorem~\ref{thm:kl-min}
  (Moment-Matching Minimiser of KL)}
\label{app:proofs:kl-min}

\begin{proof}
For $q(\cdot\mid\boldsymbol\theta) = \exp(\boldsymbol\theta^\top
\mathbf{T}(\cdot) - A(\boldsymbol\theta))$ in the exponential family:
\[
  \KL{p}{q}
    = \int p(x) \log p(x)\intd{x} - \int p(x) \bigl(\boldsymbol\theta^\top \mathbf{T}(x)
       - A(\boldsymbol\theta)\bigr)\intd{x}
    = -H(p) + A(\boldsymbol\theta) - \boldsymbol\theta^\top \mathbb{E}_p[\mathbf{T}].
\]
Differentiating with respect to $\boldsymbol\theta$ and setting to zero:
\[
  \nabla_{\boldsymbol\theta} \KL{p}{q}
    = \nabla A(\boldsymbol\theta) - \mathbb{E}_p[\mathbf{T}] = \mathbf{0}.
\]
For exponential families, $\nabla A(\boldsymbol\theta) =
\mathbb{E}_{q(\cdot\mid\boldsymbol\theta)}[\mathbf{T}]$ (the moment-generating
identity), so the stationary condition is exactly
$\mathbb{E}_{q^*}[\mathbf{T}(X)] = \mathbb{E}_p[\mathbf{T}(X)]$.
The KL is convex in $\boldsymbol\theta$ (since $A$ is log-partition and hence
convex), so the stationary point is the unique minimum.
For Gaussian $\mathcal{Q}$, $\mathbf{T}(x) = (x, x^2)$ and the condition
reduces to matching the first two moments.
\end{proof}

\subsection{Proof of Theorem~\ref{thm:dma-master}
  (DMA Master Theorem)}
\label{app:proofs:master}

\begin{proof}
Write $m := \msgto{f}{X_j}$, $\hat m := \hat{m}_{f \to X_j}$, $Z$, $\hat Z$,
$\delta$ as in the theorem statement; assume $\|\msgto{X_j}{f}\|_\infty < \infty$.
For $a, b > 0$ define the scalar Bregman divergence
$D(a,b) := a\log(a/b) - a + b \geq 0$,
with the same generator $\phi(t)=t\log t$ as the KL\@.
The key identity is the \emph{exact decomposition}
\begin{equation}
  Z\;\KL{\tfrac{p_{X_j}}{Z}}{\tfrac{\hat{p}_{X_j}}{\hat{Z}}}
  \;=\;
  \int \msgto{X_j}{f}(x)\,D\!\bigl(m(x),\hat m(x)\bigr)\intd{x}
  \;-\;
  D(Z,\hat Z).
  \label{eq:proof:bregman}
\end{equation}
\emph{Proof of \eqref{eq:proof:bregman}.}
Expand the left side:
$Z\,\KL{p/Z}{\hat p/\hat Z}
  = \int\msgto{X_j}{f}\, m\log(m/\hat m) + Z\log(\hat Z/Z)$.
Adding and subtracting $\int\msgto{X_j}{f}(m-\hat m) = Z - \hat Z$ gives
\[
  \int\msgto{X_j}{f}(x)\underbrace{\bigl[m(x)\log(m(x)/\hat m(x))-m(x)+\hat m(x)\bigr]}_{D(m(x),\hat m(x))}\intd{x}
  + \underbrace{(Z-\hat Z)+Z\log(\hat Z/Z)}_{-D(Z,\hat Z)},
\]
where the second group equals $-D(Z,\hat Z) = -(Z\log(Z/\hat Z)-Z+\hat Z)$. \hfill$\square$

Since $D(Z,\hat Z)\geq 0$, discarding it and dividing by $Z$ gives
\[
  \KL{\tfrac{p_{X_j}}{Z}}{\tfrac{\hat{p}_{X_j}}{\hat{Z}}}
  \;\leq\; \frac{1}{Z}\int\msgto{X_j}{f}(x)\,D\!\bigl(m(x),\hat m(x)\bigr)\intd{x}.
\]
Since $D(m(x),\hat m(x))\geq 0$ and $\msgto{X_j}{f}\geq 0$,
bounding $\msgto{X_j}{f}(x)\leq\|\msgto{X_j}{f}\|_\infty$ is valid.
Using $\int D(m,\hat m)\intd{x} = \int m\log(m/\hat m)\intd{x} = \delta$
(the $-m+\hat m$ terms integrate to zero since $\int m \intd{x} =\int\hat m \intd{x} =1$)
gives the stated bound.
\end{proof}

\subsection{Proofs of Corollaries~\ref{cor:dirac-exact},
  \ref{cor:single-pass}, and~\ref{cor:no-neg-prec}}
\label{app:proofs:corollaries}

\begin{proof}[Proof of Corollary~\ref{cor:dirac-exact}]
Take each incoming message as $\mathcal{N}(\bar x_k, \sigma_k^2)$ with
$\sigma_k > 0$.
For every $\sigma_k > 0$ both $\msgto{f}{X_j}$ and $\hat{m}_{f \to X_j}$
are normalizable densities, so the KL is well-defined and finite.

By the construction recipe (Section~\ref{sec:dma:construction}),
$\hat{m}_{f \to X_j} = \mathcal{N}(\mu_\sigma, \sigma_\sigma^2)$ where
$\mu_\sigma = \mathbb{E}[g(\mathbf{X}_{-j})]$ and
$\sigma_\sigma^2 = \mathrm{Var}[g(\mathbf{X}_{-j})]$ are the first two moments
of the exact outgoing message.
Since the DMA matches these moments exactly, for any distribution $m$
and its moment-matched Gaussian projection $\hat{m}$:
\[
  E_m[\log \hat{m}(X)] = -\tfrac{1}{2}\log(2\pi\sigma_\sigma^2) - \tfrac{1}{2}
  = -H(\hat{m}),
\]
which gives the identity
\[
  \KL{\msgto{f}{X_j}}{\hat{m}_{f \to X_j}}
    = H(\hat{m}_{f \to X_j}) - H(\msgto{f}{X_j}).
\]

The $C^2$ assumption in a neighbourhood of $\bar{\mathbf{x}}_{-j}$ provides the
second-order Taylor expansion
\[
  g(\bar{\mathbf{x}}_{-j} + \mathbf{h})
    = g(\bar{\mathbf{x}}_{-j})
      + \nabla g(\bar{\mathbf{x}}_{-j})^\top \mathbf{h}
      + \tfrac{1}{2}\mathbf{h}^\top \nabla^2 g(\bar{\mathbf{x}}_{-j})\,\mathbf{h}
      + o(\|\mathbf{h}\|^2).
\]
The nonzero linear term is $O(\|\sigma\|)$; the nonlinear remainder is
$O(\|\sigma\|^2)$, hence asymptotically negligible relative to the linear term.
Combined with tail regularity, this makes $H(\msgto{f}{X_j})$ asymptotically
determined by the linearisation, giving
$H(\msgto{f}{X_j}) = \tfrac{1}{2}\log(2\pi e\,\sigma_\sigma^2) + o(1)$
and hence $\KL{\msgto{f}{X_j}}{\hat{m}_{f \to X_j}} \to 0$.
We verify this explicitly factor-class by factor-class.

\textit{Conjugate factors} (Gaussian prior, Gaussian likelihood): the exact
message is already Gaussian, so $\msgto{f}{X_j} = \hat{m}_{f \to X_j}$ and
$\delta = 0$ for all $\sigma_k$.

\textit{Product forward} (Proposition~\ref{prop:product-fwd}): $g(x,y)=xy$ is
$C^\infty$ everywhere with $\nabla g = (y,x)^\top \neq \mathbf{0}$ when
$(\mu_x,\mu_y)\neq(0,0)$.
The DMA computes the exact moments of $Z = XY$; $m$ is a variance-gamma
distribution whose entropy satisfies
$H(m) = \tfrac{1}{2}\log \sigma_z^2 + c + o(1)$ as $\sigma_x,\sigma_y\to 0$,
matching the leading term of $H(\hat{m})$ with the same constant $c$, so
$\delta = H(\hat{m}) - H(m) \to 0$.

\textit{ReLU forward and backward} (Propositions~\ref{prop:relu-fwd}--\ref{prop:relu-bwd}):
leaky-ReLU is piecewise-$C^\infty$ with a single non-smooth point at the origin.
When the concentration point $\bar x$ is away from the origin, the $C^2$ argument
applies directly.
At the origin, the DMA uses exact truncated-Gaussian moments; Mills ratio bounds
give $\sigma_\sigma^2 \to 0$ and $H(\hat{m}) - H(m) \to 0$ via the explicit
moment formulas.

Hence $\delta \to 0$ in all cases and the bound~\eqref{eq:dma:master} vanishes.
\end{proof}

\begin{proof}[Proof of Corollary~\ref{cor:single-pass}]
The recipe in Section~\ref{sec:dma:construction} integrates only the incoming
messages at neighbours $\{X_k\}_{k \neq j}$; $\msgto{X_j}{f}$ does not
appear.  Therefore updating $\hat{m}_{f \to X_j}$ does not change
$\msgto{X_j}{f}$, and no re-computation of neighbouring messages is
triggered.
\end{proof}

\begin{proof}[Proof of Corollary~\ref{cor:no-neg-prec}]
The moment-matching step minimises $\KL{p}{\hat{m}}$ over $\mathcal{Q}$.
For the Gaussian family the minimiser has variance $\mathrm{Var}_p[X] \ge 0$
whenever $p$ has finite second moment.
If $\mathrm{Var}_p[X] > 0$ the message is a proper Gaussian with strictly
positive precision $\rho = 1/\mathrm{Var}_p[X] > 0$.
In the Dirac limit $(\sigma_k \to 0)$, $\mathrm{Var}_p[X] \to 0$ and the
precision $\rho \to \infty$; the message degenerates to a Dirac delta, which
is the correct limiting message (Corollary~\ref{cor:dirac-exact}).
At no point is $\rho$ negative.
\end{proof}

\subsection{Proofs of Propositions~\ref{prop:product-fwd}
  and~\ref{prop:product-bwd} (Product Factor DMA)}
\label{app:proofs:product}

\begin{proof}[Proof of Proposition~\ref{prop:product-fwd}]
For independent $X \sim \mathcal{N}(\mu_x, \sigma_x^2)$ and
$Y \sim \mathcal{N}(\mu_y, \sigma_y^2)$, the first two moments of
$Z = XY$ follow directly from independence and the law of total variance:
\begin{align*}
  \mathbb{E}[Z] &= \mathbb{E}[X]\mathbb{E}[Y] = \mu_x\mu_y, \\
  \mathbb{E}[Z^2]
    &= \mathbb{E}[X^2]\mathbb{E}[Y^2]
    = (\mu_x^2+\sigma_x^2)(\mu_y^2+\sigma_y^2), \\
  \mathrm{Var}[Z]
    &= \mathbb{E}[Z^2]-\mathbb{E}[Z]^2
    = \sigma_x^2\sigma_y^2 + \mu_x^2\sigma_y^2 + \mu_y^2\sigma_x^2.
\end{align*}
The DMA is the moment-matching Gaussian projection via Theorem~\ref{thm:kl-min}.
\end{proof}

\begin{proof}[Proof of Proposition~\ref{prop:product-bwd} (sketch)]
We compute the first two moments of $X = Z/Y$ under independent
$Y \sim \mathcal{N}(\mu_y, \sigma_y^2)$ and $Z \sim \mathcal{N}(\mu_z, \sigma_z^2)$.
The ratio distribution has no finite moments under the Gaussian directly
(the $1/|y|$ singularity at the origin diverges), so we use a log-normal
intermediate.

\begin{lemma}[Log-Normal Approximation]
\label{lem:lognormal}
For $W \sim \mathcal{N}(\mu_w, \sigma_w^2)$ with $|\mu_w|/\sigma_w \gg 1$,
the random variable $\log|W|$ is approximately $\mathcal{N}(\log|\mu_w|,\,
\sigma_w^2/\mu_w^2)$ by the first-order delta method
(Taylor expansion of $\log|\cdot|$ around $\mu_w$).
\end{lemma}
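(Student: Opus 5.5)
The plan is to justify the lemma by a first-order delta-method expansion, made quantitative in the regime $r_w := |\mu_w|/\sigma_w \gg 1$. Write $W = \mu_w(1+\epsilon)$ with $\epsilon := (W-\mu_w)/\mu_w \sim \mathcal{N}(0, r_w^{-2})$. On the event $\{|\epsilon| \le 1/2\}$, which is the same as $|W| \geq |\mu_w|/2$ for $W$ on the side of $\mu_w$, we have $\log|W| = \log|\mu_w| + \log(1+\epsilon)$. Expanding gives $\log(1+\epsilon) = \epsilon - \epsilon^2/2 + R(\epsilon)$ with $|R(\epsilon)| \le c|\epsilon|^3$ there. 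The leading term $\epsilon$ is exactly Gaussian with mean $0$ and variance $\sigma_w^2/\mu_w^2$. This identifies the claimed approximating law $\mathcal{N}(\log|\mu_w|, \sigma_w^2/\mu_w^2)$.

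Next I would quantify the error, which is what Theorem~\ref{thm:product-bwd-bound} later needs. Conditioning on the truncation event (mass at least $1-\Phi(-r_w/2)$, i.e.\ up to $O(e^{-r_w^2/8})$), I compute the mean and variance of $\log|W|$ using the Gaussian moments of $\epsilon$. The mean shift is $\mathbb{E}[-\epsilon^2/2] = -r_w^{-2}/2 + O(r_w^{-4})$. The variance is $r_w^{-2} + O(r_w^{-4})$, since the cross term $\mathbb{E}[\epsilon\cdot\epsilon^2]$ vanishes by symmetry. The truncation contributes only exponentially small corrections, because all moments of $\epsilon$ restricted to a set of exponentially small probability are exponentially small.

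For a distributional statement, I would bound $\KL{\mathrm{law}(\log|W| \mid \text{trunc})}{\mathcal{N}(\log|\mu_w|, r_w^{-2})}$. The change of variables $u = \log(1+\epsilon)$ has an explicit density, $\phi_{r_w}(e^u-1)\,e^u$, where $\phi_{r_w}$ denotes the $\mathcal{N}(0,r_w^{-2})$ density. The log-density ratio against the Gaussian is $u - \tfrac{r_w^2}{2}\bigl[(e^u-1)^2 - u^2\bigr]$. On the typical scale $u = O(r_w^{-1})$ this is $O(r_w^{-1})$ pointwise but has mean of order $O(r_w^{-2})$ after cancellation, giving KL $=O(r_w^{-2})$.

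The main obstacle is this last cancellation. A crude pointwise bound only yields $O(r_w^{-1})$. To get the sharper rate, I would expand the log-ratio to second order in $u$ and use the symmetry of the leading Gaussian so that the odd terms integrate to zero. Controlling the tail region $|u|\gtrsim r_w^{-1}\log r_w$ is then handled by Gaussian concentration.
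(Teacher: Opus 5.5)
Your proposal is correct and follows the paper's route. The paper justifies the lemma only by the first-order expansion $\log|W| \approx \log|\mu_w| + (W-\mu_w)/\mu_w$, which is exactly your leading Gaussian term $\epsilon\sim\mathcal{N}(0,\sigma_w^2/\mu_w^2)$; your truncated moment and $O(r_w^{-2})$ KL analysis goes beyond anything the paper proves, and a direct computation gives $\tfrac{7}{8}r_w^{-2}+O(r_w^{-4})$, confirming your rate. One harmless slip: $\{|\epsilon|\le 1/2\}$ is not the same event as $\{\epsilon\ge -1/2\}$ (its mass is $1-2\Phi(-r_w/2)$, not $1-\Phi(-r_w/2)$), but the remainder bound $|R(\epsilon)|\le\tfrac{2}{3}|\epsilon|^3$ holds on the whole half-line $\epsilon\ge -1/2$, so either truncation works.
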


Apply Lemma~\ref{lem:lognormal} to $Y$ and $Z$, treating both as log-normal.
Since $\log|Z/Y| = \log|Z| - \log|Y|$, the difference of two independent
normals is normal with mean $m = \log|\mu_z| - \log|\mu_y|$ and variance
$v = \sigma_z^2/\mu_z^2 + \sigma_y^2/\mu_y^2$.

For a log-normal random variable $W$ with $\log|W| \sim \mathcal{N}(m, v)$,
the exact log-normal moments are $\mathbb{E}[W] = \mathrm{sgn}(\mu_z/\mu_y)
\cdot e^{m+v/2}$ and $\mathbb{E}[W^2] = e^{2m+2v}$ (using independence of
$Z$ and $Y$).
The variance is $\mathrm{Var}[X] = e^{2m+v}(e^v-1)$.

Converting back to natural parameters $\tau_w = \mu_w/\sigma_w^2$,
$\rho_w = 1/\sigma_w^2$, so $m = \log|\tau_z\rho_y/(\tau_y\rho_z)|$ and
$v = \rho_z/\tau_z^2 + \rho_y/\tau_y^2$, and algebraic simplification yields
the formulas in Proposition~\ref{prop:product-bwd}.

\emph{Consistency verification.}
In the limit $\sigma_y \to 0$ ($\rho_y \to \infty$, $\tau_y/\rho_y \to \bar y$):
$m \to \log|\mu_z/\bar y|$ and $v \to \sigma_z^2/\mu_z^2$.
Then $\mu_X = e^{m+v/2} \to |\mu_z/\bar y| \cdot e^{\sigma_z^2/(2\mu_z^2)} \to
\mu_z/\bar y$ as $\sigma_z \to 0$; this matches $g(\bar y) = \bar y$ applied to
the constraint $x = z/y$.
The direct arithmetic verification of the formula
$\hat\tau_x/\hat\rho_x \to \mu_z/\bar y$ and $1/\hat\rho_x \to \sigma_z^2/\bar y^2$
was given in Section~\ref{sec:factors:product}.
\end{proof}

\subsection{Proof of Theorem~\ref{thm:product-bwd-bound}
  (Concentrated-Input Bound for Product Backward Message)}
\label{app:proofs:product-bound}


\begin{remark}[Improper Backward Message]
\label{rem:product-improper}
The exact sum-product backward message to $X$ is
\begin{equation}
  m_{f \to X}(x)
    = \frac{1}{|x|}\,
      \mathcal{N}\!\left(\frac{\mu_z}{x};\, \mu_y,\, \sigma_y^2 + \frac{\sigma_z^2}{x^2}\right),
  \label{eq:product-exact-bwd}
\end{equation}
which is not normalisable: as $|x| \to \infty$ the Gaussian factor
approaches a positive constant and the $1/|x|$ prefactor yields a divergent
integral.
The DMA (Proposition~\ref{prop:product-bwd}) is therefore an approximation
to an improper distribution; its moments are the moments of the ratio
variable $X = Z/Y$ under the joint on $(Y,Z)$, which is always proper.
In the Dirac limit the $1/|x|$ prefactor and the implicit $|x|$ factor from
the Gaussian normalisation cancel, recovering the correct point-mass message.
\end{remark}

The \emph{truncated reference} excludes the non-normalisable tail of the
exact backward message~\eqref{eq:product-exact-bwd}:
\begin{equation}
  \tilde{m}^r_{f \to X}(x)
  \;\propto\;
  \int_{\,|y|\,\geq\,|\mu_y|/2}
    \mathcal{N}(xy;\,\mu_z,\sigma_z^2)\,
    \mathcal{N}(y;\,\mu_y,\sigma_y^2)
  \,\mathrm{d}y,
  \label{eq:product-bwd-trunc}
\end{equation}
which is proper for $r \geq 2$ since the $1/|x|$ singularity is integrable
when $Y$ is bounded away from zero.
For $\mu_y > 0$, the excluded probability is
$P(|Y| < \mu_y/2) = \Phi(-r_y/2) - \Phi(-3r_y/2) \leq \Phi(-r_y/2)$,
which decays faster than any polynomial in $r$.

\paragraph{Outline of the proof} The proof of Theorem~\ref{thm:product-bwd-bound} applies the Pythagorean
identity (Theorem~\ref{thm:kl-min}) to decompose
$\mathrm{KL}(\tilde{m}^r_{f\to X} \| \hat{m}_{f\to X})$ into two terms,
each $\leq C_\kappa/r^2$:
\begin{itemize}
  \item \emph{Non-Gaussianity} of the truncated ratio
    $\tilde{m}^r_{f\to X}$ relative to its moment-matched Gaussian $N^*_X$
    (Lemma~\ref{lem:nongauss-ratio}): bounded via the KL chain rule and
    data-processing inequality, exploiting the fact that $X = Z/Y$ is
    exactly Gaussian conditionally on $Y = y$.
  \item \emph{Moment error} of the log-normal DMA approximation relative to
    $N^*_X$ (Lemmas~\ref{lem:gauss-ln-kl}--\ref{lem:product-moment-error}):
    controlled by a second-order Taylor expansion of $Z/Y$ around
    $(\mu_z, \mu_y)$, with the Gaussian-to-log-normal KL as an intermediate.
\end{itemize}
The three lemmas below establish the ingredients in order.

\begin{lemma}[Gaussian–log-normal KL bound]
\label{lem:gauss-ln-kl}
Let $W \sim \mathcal{N}(\mu_w, \sigma_w^2)$ with $r_w := |\mu_w|/\sigma_w \geq 2$,
and let $\mathrm{LN}(\mu_\ell, \sigma_\ell^2)$ be the log-normal whose mean and
variance match those of $W$ (i.e.\ $\sigma_\ell^2 = \log(1 + 1/r_w^2)$,
$\mu_\ell = \log|\mu_w| - \sigma_\ell^2/2$).
Then
\begin{equation}
  \mathrm{KL}\!\left(\mathrm{LN}(\mu_\ell, \sigma_\ell^2) \,\Big\|\, \mathcal{N}(\mu_w, \sigma_w^2)\right)
  = \frac{3}{4r_w^2} + O\!\left(\frac{1}{r_w^4}\right).
  \label{eq:gauss-ln-kl}
\end{equation}
\end{lemma}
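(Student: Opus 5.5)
The plan is to obtain an exact closed form for the KL divergence in Lemma~\ref{lem:gauss-ln-kl} and then Taylor-expand it in $\varepsilon := 1/r_w^2$.

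\textbf{Reduction to $\mu_w > 0$.} If $\mu_w < 0$, reflect both distributions through $x \mapsto -x$, using the sign convention for the log-normal as in the proof of Proposition~\ref{prop:product-bwd}. KL is invariant under this bijection, so I will assume $\mu_w > 0$ from now on.

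\textbf{Exact entropy-difference form.} The log-normal has, by construction, the same mean and variance as $\mathcal{N}(\mu_w,\sigma_w^2)$. The Gaussian log-density is a quadratic polynomial, so
\[
\mathbb{E}_{\mathrm{LN}}[\log \mathcal{N}(X;\mu_w,\sigma_w^2)] = -\tfrac12\log(2\pi e\,\sigma_w^2) = -H(\mathcal{N}).
\]
This is the same identity used in the proof of Corollary~\ref{cor:dirac-exact}. It gives
\[
\mathrm{KL}(\mathrm{LN}\,\|\,\mathcal{N}) = H(\mathcal{N}) - H(\mathrm{LN}).
\]
The log-normal entropy is standard: $H(\mathrm{LN}(\mu_\ell,\sigma_\ell^2)) = \mu_\ell + \tfrac12\log(2\pi e\,\sigma_\ell^2)$, obtained from the change of variables $X = e^{U}$ with $U \sim \mathcal{N}(\mu_\ell,\sigma_\ell^2)$.

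Write $s := \sigma_\ell^2 = \log(1+\varepsilon)$ and substitute $\mu_\ell = \log\mu_w - s/2$ and $\sigma_w^2 = \mu_w^2\varepsilon$. Every $\mu_w$-dependence then cancels, leaving the exact identity
\[
\mathrm{KL} = \tfrac12\log\varepsilon + \tfrac{s}{2} - \tfrac12\log s .
\]
So the divergence depends only on $r_w$.

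\textbf{Expansion.} The remaining steps are routine series manipulations:
\[
s = \varepsilon - \tfrac{\varepsilon^2}{2} + O(\varepsilon^3),
\qquad
\log s = \log\varepsilon + \log\!\bigl(1 - \tfrac{\varepsilon}{2} + O(\varepsilon^2)\bigr) = \log\varepsilon - \tfrac{\varepsilon}{2} + O(\varepsilon^2).
\]
Substituting,
\[
\mathrm{KL} = \tfrac{\varepsilon}{2} + \tfrac{\varepsilon}{4} + O(\varepsilon^2) = \tfrac{3}{4r_w^2} + O(r_w^{-4}).
\]
As a sanity check, the exact expression is nonnegative, since $\log(x)/x$-type convexity gives $s \le \varepsilon$ and $\log(\varepsilon/s) \ge 0$.

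\textbf{Main obstacle.} The only delicate point is making the $O(r_w^{-4})$ remainder uniform over the whole range $r_w \ge 2$, not just asymptotically. For this I would write the exact expression as
\[
\mathrm{KL} = \tfrac12\bigl[\log(1+\varepsilon) - \log\bigl(\log(1+\varepsilon)/\varepsilon\bigr)\bigr],
\]
which is analytic in $\varepsilon$ on the disc $|\varepsilon| < 1$. Its Taylor remainder on $[0, 1/4]$ is then bounded by a fixed constant times $\varepsilon^2$, for example via Taylor's theorem with Lagrange remainder on that compact interval. I expect no deeper difficulty than this.
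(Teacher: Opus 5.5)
Your proposal is correct and follows essentially the same route as the paper: both use moment matching to replace the cross-entropy by the Gaussian entropy, insert the log-normal entropy $\mu_\ell + \tfrac12\log(2\pi e\sigma_\ell^2)$, and Taylor-expand the resulting exact expression. Your $\tfrac12\log\varepsilon + \tfrac{s}{2} - \tfrac12\log s$ is the paper's $\tfrac12\log(\sigma_w^2/\sigma_\ell^2) - \mu_\ell$ after substitution, and your explicit reduction to $\mu_w>0$ and uniform remainder bound on $\varepsilon\in[0,1/4]$ are welcome tightenings. The paper's step $\mathbb{E}[L]-\mu_w=0$ tacitly assumes $\mu_w>0$, and its $O(\cdot)$ term is stated only asymptotically.
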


\begin{proof}
Let $L \sim \mathrm{LN}(\mu_\ell, \sigma_\ell^2)$ with matched moments
$\mathbb{E}[L] = |\mu_w|$ and $\mathrm{Var}[L] = \sigma_w^2$.
The cross-entropy of $L$ under $\mathcal{N}(\mu_w, \sigma_w^2)$ equals the
entropy of $\mathcal{N}$ by moment matching:
\[
  H(L,\, \mathcal{N}(\mu_w, \sigma_w^2))
  = -\mathbb{E}_L[\log \mathcal{N}(X;\mu_w, \sigma_w^2)]
  = \tfrac{1}{2}\log(2\pi\sigma_w^2) + \tfrac{\mathrm{Var}[L]+(\mathbb{E}[L]-\mu_w)^2}{2\sigma_w^2}
  = \tfrac{1}{2}\log(2\pi e\sigma_w^2).
\]
The entropy of the log-normal is
$H(L) = \mu_\ell + \tfrac{1}{2}\log(2\pi e\sigma_\ell^2)$.
Therefore
\begin{equation}
  \mathrm{KL}(L \| \mathcal{N}(\mu_w, \sigma_w^2))
  = \tfrac{1}{2}\log(2\pi e\sigma_w^2) - \mu_\ell - \tfrac{1}{2}\log(2\pi e\sigma_\ell^2)
  = \tfrac{1}{2}\log\!\tfrac{\sigma_w^2}{\sigma_\ell^2} - \mu_\ell.
  \label{eq:gauss-ln-kl-raw}
\end{equation}
Substitute $\sigma_\ell^2 = \log(1+1/r_w^2) = 1/r_w^2 - 1/(2r_w^4) + O(1/r_w^6)$,
$\sigma_w^2 = \mu_w^2/r_w^2$, and $\mu_\ell = \log|\mu_w| - \sigma_\ell^2/2$ into
\eqref{eq:gauss-ln-kl-raw}.
Expand $\log(\sigma_\ell^2) = \log(1/r_w^2(1-1/(2r_w^2)+O(1/r_w^4)))
= -2\log r_w - 1/(2r_w^2) + O(1/r_w^4)$, so
\begin{align*}
  \tfrac{1}{2}\log\!\frac{\sigma_w^2}{\sigma_\ell^2}
  &= \tfrac{1}{2}\!\left[\log\frac{\mu_w^2}{r_w^2} - \log\sigma_\ell^2\right] \\
  &= \tfrac{1}{2}\!\left[2\log|\mu_w| - 2\log r_w + 2\log r_w
      + \tfrac{1}{2r_w^2} + O(1/r_w^4)\right] \\
  &= \log|\mu_w| + \frac{1}{4r_w^2} + O(1/r_w^4).
\end{align*}
Using $-\mu_\ell = -\log|\mu_w| + \sigma_\ell^2/2 = -\log|\mu_w| + 1/(2r_w^2) + O(1/r_w^4)$:
\[
  \mathrm{KL}
  = \underbrace{\log|\mu_w| + \frac{1}{4r_w^2}}_{\tfrac{1}{2}\log(\sigma_w^2/\sigma_\ell^2)}
    \underbrace{- \log|\mu_w| + \frac{1}{2r_w^2}}_{-\mu_\ell}
    + O(1/r_w^4)
  = \frac{3}{4r_w^2} + O(1/r_w^4).
\]
The factor $\tfrac{3}{4}$ is the sum of two $O(r_w^{-2})$ contributions: $\tfrac{1}{4}$
from expanding $\log(\sigma_w^2/\sigma_\ell^2)$ and $\tfrac{1}{2}$ from $\sigma_\ell^2/2$;
this is~\eqref{eq:gauss-ln-kl}.
\end{proof}

\begin{lemma}[Moment error of the log-normal backward approximation]
\label{lem:product-moment-error}
Let $r = \min(|\mu_y|/\sigma_y,\, |\mu_z|/\sigma_z) \geq 2$.
Denote by $\mu^*_X$ and $(\sigma^*_X)^2$ the mean and variance of
$\tilde{m}^r_{f \to X}$ (equation~\eqref{eq:product-bwd-trunc}), and by
$\hat\mu_X := \hat\tau_x/\hat\rho_x$ and $\hat\sigma^2_X := 1/\hat\rho_x$
the DMA moments from Proposition~\ref{prop:product-bwd}.
Then
\begin{equation}
  |\hat\mu_X - \mu^*_X|
  = O\!\left(\frac{|\mu_z/\mu_y|}{r^2}\right),
  \qquad
  |\hat\sigma^2_X - (\sigma^*_X)^2|
  = O\!\left(\frac{(\sigma^*_X)^2}{r^2}\right),
  \label{eq:moment-error}
\end{equation}
and consequently
\begin{equation}
  \mathrm{KL}\!\left(N^*_X \,\Big\|\, \hat{m}_{f \to X}\right)
  = O\!\left(\frac{1}{r^2}\right),
  \label{eq:gaussian-kl-moment}
\end{equation}
where $N^*_X = \mathcal{N}(\mu^*_X, (\sigma^*_X)^2)$.
\end{lemma}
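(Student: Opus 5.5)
Throughout, write $\mu_Z := \mu_z/\mu_y$ for the target location. The plan is to compute the DMA moments and the truncated-reference moments separately as expansions in $\varepsilon_y := \sigma_y/\mu_y$ and $\varepsilon_z := \sigma_z/\mu_z$. Both are $O(1/r)$, and by the $\kappa$-assumption they are comparable to each other, so $\varepsilon_y^2+\varepsilon_z^2 \asymp r^{-2}$ up to constants depending on $\kappa$. Once the first two moments of each side are known to relative accuracy $O(r^{-2})$, the closed-form Gaussian KL finishes the argument.

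\textbf{Step 1 (DMA moments).} Use the log-normal representation from the proof of Proposition~\ref{prop:product-bwd}: $m=\log|\mu_Z|$ and $v=\varepsilon_y^2+\varepsilon_z^2$. This gives
\[
\hat\mu_X = \mu_Z e^{v/2} = \mu_Z\bigl(1+\tfrac{v}{2}+O(v^2)\bigr),
\qquad
\hat\sigma_X^2 = \mu_Z^2 e^{v}(e^{v}-1) = \mu_Z^2\bigl(v+O(v^2)\bigr).
\]
Where Proposition~\ref{prop:product-bwd} states the formula in natural parameters, I check that it agrees with these expansions to the same order.

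\textbf{Step 2 (reference moments).} Write $Z=\mu_z(1+\varepsilon_z\xi)$ and $Y=\mu_y(1+\varepsilon_y\eta)$ with $\xi,\eta$ independent standard normals. On the event $|Y|\ge|\mu_y|/2$ we have $|\varepsilon_y\eta|\le 1/2$. There I expand $X=Z/Y=\mu_Z(1+\varepsilon_z\xi)(1-\varepsilon_y\eta+\varepsilon_y^2\eta^2-R)$. The remainder satisfies $|R|\le 2|\varepsilon_y\eta|^3$, which is legitimate because the geometric series for $1/(1+t)$ is controlled on $|t|\le1/2$. Taking truncated expectations gives
\[
\mu_X^* = \mu_Z\bigl(1+\varepsilon_y^2+O(r^{-3})\bigr),
\qquad
(\sigma_X^*)^2 = \mu_Z^2\bigl(\varepsilon_y^2+\varepsilon_z^2+O(r^{-3})\bigr).
\]
The truncation changes the Gaussian moments of $\eta$ only by $O(e^{-cr^2})$, and this also absorbs the normalising constant of~\eqref{eq:product-bwd-trunc}. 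The odd terms vanish up to those exponentially small corrections, which is what makes the remainders $O(r^{-3})$.

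\textbf{Step 3 (comparison).} Compare the two expansions. The means satisfy $|\hat\mu_X-\mu_X^*| \le |\mu_Z|\cdot O(v)=O(|\mu_Z|/r^2)$. This holds even though the leading coefficients differ ($v/2$ versus $\varepsilon_y^2$), because both sides are $O(r^{-2})$. The variances agree at leading order, $\mu_Z^2 v$, so their difference is $O(\mu_Z^2 r^{-3})+O(\mu_Z^2 v^2)$. Dividing by $(\sigma_X^*)^2\asymp \mu_Z^2 v \gtrsim \mu_Z^2 r^{-2}/\kappa^2$ bounds the relative variance error. The lower bound on $r_y/r_z$ is used exactly here, to keep $v$ from being much smaller than $r^{-2}$.

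\textbf{Step 4 (Gaussian KL).} Plug into
\[
\mathrm{KL}(N^*\|\hat m)=\tfrac12\Bigl[\tfrac{(\sigma^*)^2}{\hat\sigma^2}-1-\log\tfrac{(\sigma^*)^2}{\hat\sigma^2}+\tfrac{(\hat\mu-\mu^*)^2}{\hat\sigma^2}\Bigr].
\]
The variance terms are quadratic in the relative error, so they contribute $O(r^{-2})$ or better. The mean term, however, is $(\mu_Z/r^2)^2/(\mu_Z^2 v)\asymp r^{-4}/r^{-2}=r^{-2}$, which is also $O(r^{-2})$. That gives~\eqref{eq:gaussian-kl-moment}.

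\textbf{Main obstacle.} The delicate part is Step 3's variance comparison. As stated, the lemma's relative bound $O(1/r^2)$ on the variance requires the next-order coefficients of $\hat\sigma_X^2$ and $(\sigma_X^*)^2$ to match. I expect they may not: the log-normal gives $\tfrac32 v^2$, while the exact ratio gives order-4 terms such as $3\varepsilon_y^4+\dots$. Both are nevertheless absolute $O(\mu_Z^2 r^{-4})$, hence relative $O(r^{-2})$, so carrying the expansion to fourth order suffices. The $r^{-3}$ remainders also need care: I will show they vanish by symmetry to $O(r^{-4})$ rather than just bounding them.
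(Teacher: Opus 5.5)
Your plan is essentially the paper's proof. You Taylor-expand $Z/Y$ on the truncation event with odd moments vanishing up to $O(e^{-cr^2})$, compare with the log-normal moments $\mu_Z e^{v/2}$ and $\mu_Z^2e^{v}(e^{v}-1)$, and substitute into the Gaussian KL. Your plan to push the $r^{-3}$ remainders to $O(r^{-4})$ is exactly what the relative variance bound needs; the paper glosses this step, asserting $(\mu^*_X)^2=\mu_Z^2(1+2\varepsilon_y^2+O(r^{-4}))$ from a mean error it has only shown to be $O(|\mu_Z|(\log r)^{3/2}r^{-3})$. For the KL bound alone, $O(r^{-3})$ already suffices, because the variance terms are quadratic in the relative error.

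There are three small points to fix.

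\textbf{The truncation event.} With $t=\varepsilon_y\eta$, the event $|Y|\ge|\mu_y|/2$ gives $|1+t|\ge\tfrac12$, not $|t|\le\tfrac12$. Your bound $|R|\le 2|t|^3$ still holds on the whole event, because $R=t^3/(1+t)$ exactly. The same works at the next order: the exact remainders $t^4/(1+t)$ and $(5t^4+4t^5)/(1+t)^2$ of $(1+t)^{-1}$ and $(1+t)^{-2}$ are bounded the same way.

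\textbf{The $\kappa$ assumption.} It is not a hypothesis of this lemma, and you never need it. Since $r=\min(r_y,r_z)$, you already have $r^{-2}\le v\le 2r^{-2}$.

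\textbf{The check against Proposition~\ref{prop:product-bwd}.} Its formulas actually evaluate to $\hat\mu_X=\mu_Z(1+\varepsilon_y^2)$ and $\hat\sigma_X^2=\mu_Z^2(v+\varepsilon_y^2\varepsilon_z^2)(1+\varepsilon_y^2)^2$. These differ from the log-normal expressions only by $O(|\mu_Z|r^{-2})$ and $O(\mu_Z^2r^{-4})$, so your comparison goes through unchanged.
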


\begin{proof}[Proof sketch]
All moments are conditional on $A_r = \{|Y| \geq |\mu_y|/2\}$; the
unconditional ratio $Z/Y$ has no finite moments.
Without loss of generality, $\mu_y, \mu_z > 0$; write $Y = \mu_y(1+\eta U)$, $Z = \mu_z(1+\xi V)$,
$U, V \stackrel{\mathrm{i.i.d.}}{\sim} \mathcal{N}(0,1)$, $\eta = 1/r_y$,
$\xi = 1/r_z$, $a = \mu_z/\mu_y$.

\emph{Central-region Taylor expansion.}
Let $B_r = \{|U| \leq K_r\}$, $K_r = \sqrt{16\log r}$ (as in
Lemma~\ref{lem:nongauss-ratio}), and restrict to $A_r^+ \cap B_r$ (the
exponentially small branches $A_r^-$ and $A_r \cap B_r^c$ are treated below).
On $A_r^+ \cap B_r$, $|\eta U| \leq K_r/r_y \to 0$, so the Taylor expansion
\[
  \frac{1}{1+\eta U} = 1 - \eta U + (\eta U)^2 + R_3, \qquad
  |R_3| \leq 2|\eta U|^3 \leq 2(K_r/r_y)^3,
\]
holds uniformly (the denominator satisfies $1+\eta U \geq 1/2$ on $A_r^+$).
Hence
\[
  X = a(1+\xi V)(1 - \eta U + (\eta U)^2 + R_3).
\]
Taking $\mathbb{E}[\cdot \mid A_r^+ \cap B_r]$ and using
$\mathbb{E}[U^k \mid A_r^+] = \mathbb{E}[U^k] + O(e^{-cr^2})$ for $k \leq 4$
(since $A_r^+$ has probability $1 - O(e^{-cr^2})$):
\[
  \mu^*_X \;=\; a(1 + \eta^2) + O_\kappa\!\left(a\frac{(\log r)^{3/2}}{r^3}\right)
             \;=\; a(1 + \eta^2) + o_\kappa(ar^{-2}),
\]
where the leading remainder comes from $\mathbb{E}[|\eta U|^3 \mid B_r]
= O(\eta^3 K_r^3) = O((\log r)^{3/2}/r^3)$.

\emph{Tail-region bound.}
On $A_r \cap B_r^c$, $|1+\eta U| \geq 1/2$, so $|X| \leq 2|a|(1+|\xi V|)$.
Gaussian tail bounds give
$P(|U| > K_r) = O(r^{-4})$,
so $\mathbb{E}[|X|\,\mathbf{1}_{B_r^c} \mid A_r] = O(|a|\,r^{-4}\cdot\mathrm{poly}(r)) = o(r^{-2})$.
The branch $A_r^-$ has probability $\Phi(-3r_y/2) = O(e^{-9r_y^2/8})$,
contributing negligibly.

\emph{Comparison with the DMA moments.}
Let $D = \eta^2 + \xi^2$.
The DMA mean $\hat\mu_X = a\,e^{D/2}$ (Lemma~\ref{lem:lognormal})
expands to $a(1 + D/2 + O(r^{-4})) = a(1 + \eta^2/2 + \xi^2/2 + O(r^{-4}))$.
Combined with $\mu^*_X = a(1 + \eta^2) + o_\kappa(ar^{-2})$:
\[
  \hat\mu_X - \mu^*_X = a\!\left(\tfrac{\xi^2 - \eta^2}{2}\right) + o_\kappa(ar^{-2})
  = O_\kappa\!\left(\frac{|\mu_z/\mu_y|}{r^2}\right).
\]

\emph{Variance error.}
We bound $(\sigma^*_X)^2 = \mathbb{E}[X^2 \mid A_r] - (\mu^*_X)^2$ directly.
For the second moment, expand one order further:
$(1+t)^{-2} = 1 - 2t + 3t^2 - 4t^3 + R_4(t)$ with $|R_4(t)| \leq C|t|^4$
and $t = \eta U$.
Since $\sup_{B_r}|\eta U| = K_r/r_y = o(1)$, this expansion is uniform
on $A_r^+ \cap B_r$.
Using $\mathbb{E}[U] = 0$, $\mathbb{E}[U^2] = 1$, $\mathbb{E}[U^3] = 0$
(odd moments of $\mathcal{N}(0,1)$ vanish; conditioning on $A_r^+$ changes
these by $O(e^{-cr^2})$), and $\mathbb{E}[U^4] = 3$:
\[
  \mathbb{E}\!\left[\frac{1}{(1+\eta U)^2} \,\Big|\, A_r\right]
  = 1 + 3\eta^2 + O_\kappa(\eta^4)
  = 1 + 3\eta^2 + O_\kappa(r^{-4}),
\]
where the $-4\eta^3\mathbb{E}[U^3]$ term vanishes, the $R_4$ remainder
contributes $O(\eta^4\mathbb{E}[U^4\mathbf{1}_{B_r}]) = O(\eta^4) = O(r^{-4})$,
and the contributions of $B_r^c$ and $A_r^-$ are $O_\kappa(r^{-4})$ after
multiplying by their exponentially small probabilities.
Since $\mathbb{E}[X^2 \mid A_r] = a^2(1+\xi^2)\,\mathbb{E}[(1+\eta U)^{-2} \mid A_r]$:
\[
  \mathbb{E}[X^2 \mid A_r]
  = a^2(1+\xi^2)(1 + 3\eta^2 + O_\kappa(D/r^2)).
\]
Together with $(\mu^*_X)^2 = a^2(1+\eta^2)^2 + o_\kappa(a^2 r^{-2})
= a^2(1 + 2\eta^2 + O_\kappa(D/r^2))$:
\[
  (\sigma^*_X)^2
  = a^2(\eta^2 + \xi^2) + O_\kappa(a^2 D / r^2)
  = a^2 D\bigl(1 + O_\kappa(r^{-2})\bigr).
\]
The DMA variance $\hat\sigma^2_X = (e^D - 1)\hat\mu_X^2 = a^2 D e^{2D}(1+O(D))
= a^2 D(1 + O_\kappa(r^{-2}))$, so
\[
  \frac{|\hat\sigma^2_X - (\sigma^*_X)^2|}{(\sigma^*_X)^2}
  = O_\kappa\!\left(\frac{1}{r^2}\right).
\]
For~\eqref{eq:gaussian-kl-moment}, substitute into the exact Gaussian KL:
\[
  \mathrm{KL}(N^*_X \| \hat{m}_{f \to X})
  = \frac{(\hat\mu_X - \mu^*_X)^2}{2\hat\sigma^2_X}
  + \frac{(\sigma^*_X)^2}{2\hat\sigma^2_X}
  - \frac{1}{2}
  - \frac{1}{2}\log\frac{(\sigma^*_X)^2}{\hat\sigma^2_X}
  = O\!\left(\frac{1}{r^2}\right). 
\]
\end{proof}

\begin{lemma}[Non-Gaussianity of the truncated ratio]
\label{lem:nongauss-ratio}
Assume additionally that the ratio of the two signal-to-noise ratios is
bounded: $\kappa^{-1} \leq r_y/r_z \leq \kappa$ for some fixed $\kappa \geq 1$.
Then there exists a constant $C = C(\kappa)$ such that, for all sufficiently
large $r$ (depending only on $\kappa$),
\begin{equation}
  \mathrm{KL}\!\left(\tilde{m}^r_{f \to X} \,\Big\|\, N^*_X\right)
  \;\leq\; \frac{C}{r^2}.
  \label{eq:nongauss-ratio}
\end{equation}
In particular, $\mathrm{KL}(\tilde{m}^r_{f\to X}\|N^*_X) = O_\kappa(r^{-2})$
as $r \to \infty$.
\end{lemma}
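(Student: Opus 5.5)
The plan is to bound the divergence against a convenient Gaussian rather than against $N^*_X$ itself. Since $N^*_X$ is the moment-matched Gaussian projection of $\tilde m^r_{f\to X}$ (Theorem~\ref{thm:kl-min}), we have
\[
\mathrm{KL}(\tilde m^r_{f\to X}\,\|\,N^*_X)=\min_{G\in\mathcal{Q}}\mathrm{KL}(\tilde m^r_{f\to X}\,\|\,G).
\]
It therefore suffices to exhibit one Gaussian $G$ with $\mathrm{KL}(\tilde m^r_{f\to X}\|G)\le C_\kappa r^{-2}$. Following the notation of Lemma~\ref{lem:product-moment-error}, assume w.l.o.g.\ $\mu_y,\mu_z>0$ and write $Y=\mu_y(1+\eta U)$, $Z=\mu_z(1+\xi V)$, $a=\mu_z/\mu_y$. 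Then
\[
X=\frac{a(1+\xi V)}{1+\eta U}.
\]
The natural candidate is the linearisation $X_{\mathrm{lin}}=a(1+\xi V-\eta U)$, whose law is $G=\mathcal{N}(a,a^2(\xi^2+\eta^2))$.

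\textbf{Step 1 (lifting to the joint).} To compare the laws of $X$ and $X_{\mathrm{lin}}$, pass to the joint law of $(U,X)$ and use the data-processing inequality:
\[
\mathrm{KL}(\tilde m^r_{f\to X}\|G)\le \mathrm{KL}(P_{U,X}\|Q_{U,X_{\mathrm{lin}}}).
\]
Here $P$ is the joint law of $(U,X)$ with $U$ conditioned on the event $A_r=\{|Y|\ge\mu_y/2\}$, and $Q$ has $U\sim\mathcal{N}(0,1)$. The KL chain rule then splits the right-hand side into two terms:
\[
\mathrm{KL}(P_U\|\mathcal{N}(0,1))+\mathbb{E}_{P_U}\bigl[\mathrm{KL}(P_{X|U}\|Q_{X|U})\bigr].
\]
The first term equals $-\log P(A_r)=O(e^{-cr^2})$. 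For the second, the key structural fact is that conditionally on $U$ both laws are Gaussian:
\[
P_{X|U}=\mathcal{N}\!\Bigl(\tfrac{a}{1+\eta U},\,\tfrac{a^2\xi^2}{(1+\eta U)^2}\Bigr),\qquad Q_{X|U}=\mathcal{N}\bigl(a(1-\eta U),\,a^2\xi^2\bigr).
\]

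\textbf{Step 2 (explicit conditional KL).} Put $s=(1+\eta U)^{-2}$. The Gaussian KL formula gives
\[
\mathrm{KL}(P_{X|U}\|Q_{X|U})=\tfrac12(s-1-\log s)+\frac{\bigl(\tfrac{1}{1+\eta U}-1+\eta U\bigr)^2}{2\xi^2}
=\tfrac12(s-1-\log s)+\frac{\eta^4U^4}{2\xi^2(1+\eta U)^2}.
\]
On $A_r$ we have $1+\eta U\ge 1/2$, so $s\in(0,4]$. In the variance term, $s-1-\log s\le C(\eta U)^2$ uniformly on this range, contributing $O(\eta^2)=O(r^{-2})$ after taking expectations with $\mathbb{E}[U^2\mid A_r]=1+O(e^{-cr^2})$. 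In the mean term, $(1+\eta U)^{-2}\le 4$ gives a bound $2\eta^4\,\mathbb{E}[U^4\mid A_r]/\xi^2=O(\eta^4/\xi^2)$.

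\textbf{Step 3 (where $\kappa$ enters).} Since $\eta^4/\xi^2=r_z^2/r_y^4\le\kappa^2/r_y^2\le\kappa^2/r^2$, the mean term is $O_\kappa(r^{-2})$. This is the only place the bounded-ratio assumption is needed. Without it, a very precise $Z$ (small $\xi$) makes the conditional Gaussians nearly degenerate while their means still differ at order $\eta^2$, and the bound blows up. Collecting Steps 1–3 yields $\mathrm{KL}(\tilde m^r_{f\to X}\|G)\le C_\kappa r^{-2}$, and hence the same bound for $N^*_X$.

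\textbf{Main obstacle.} I expect the main difficulty to be the uniformity of the expectation bounds in Step 2. The Taylor-type inequalities must hold on all of $A_r$, not merely where $\eta U$ is small; the lower bound $1+\eta U\ge1/2$ is what makes the elementary bounds global. One must also check that conditioning on $A_r$ perturbs the moments $\mathbb{E}[U^2]$ and $\mathbb{E}[U^4]$ only by exponentially small amounts, which follows from $P(A_r^c)\le\Phi(-r_y/2)$.
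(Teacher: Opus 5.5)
Your proof is correct. It shares the paper's skeleton: lift to the joint law of $(U,X)$, use that $X$ is exactly Gaussian given $U$, then apply the chain rule and data processing. Where it departs is the choice of Gaussian reference, and that makes it leaner.

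\emph{The paper's route.} It compares directly against $N^*_X$. It standardises $W=(X-\mu^*_X)/s_X$ and uses the reference joint $\phi(u)\,\mathcal{N}(w;\rho u,\tau^2)$, whose $W$-marginal is exactly $\mathcal{N}(0,1)$. Controlling the slice parameters $m_u,v_u$ then requires importing $\mu^*_X=a(1+\eta^2)+o_\kappa(ar^{-2})$ and $s_X\asymp_\kappa a\sqrt{D}$ from Lemma~\ref{lem:product-moment-error}. The Taylor control is done on $B_r=\{|U|\le\sqrt{16\log r}\}$, with a separate crude bound on $B_r^c$. These asymptotic steps are what produce the restriction to sufficiently large $r$.

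\emph{Your route.} You compare against the explicit linearisation $G$ and pass to $N^*_X$ by minimality of the moment-matched projection (equivalently the Pythagorean identity \eqref{eq:product-bwd-pythag}). This has several consequences:
\begin{itemize}
\item The conditional KL is closed-form and globally bounded by elementary inequalities.
\item You need neither the moment asymptotics of Lemma~\ref{lem:product-moment-error} nor any truncation in $U$.
\item Your argument gives the bound for every $r\ge 2$, not just for large $r$.
\item The role of $\kappa$ is transparent: only $r_z\le\kappa r_y$ is used. This is the same lower bound on the conditional spread $a\xi$ that the paper encodes as ``$\tau$ bounded away from zero''.
\end{itemize}
What the paper's route offers in return is a comparison carried out directly against $N^*_X$ in standardised coordinates, reusing the computations of Lemma~\ref{lem:product-moment-error}.

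Two minor points.
\begin{itemize}
\item $A_r$ also contains $A_r^-=\{1+\eta U\le -1/2\}$, so the correct statement is $|1+\eta U|\ge 1/2$ on $A_r$, not $1+\eta U\ge 1/2$. This is harmless, because your bounds $s\le 4$, $s-1-\log s\le C(\eta U)^2$ and $(1+\eta U)^{-2}\le 4$ use only the weaker fact.
\item Exactly as in the paper's Step~3, you treat $U$ under $\tilde m^r_{f\to X}$ as a standard normal conditioned on $A_r$. Read literally, however, the normalised \eqref{eq:product-bwd-trunc} tilts that law by $|1+\eta u|^{-1}$, since $\int\mathcal{N}(xy;\mu_z,\sigma_z^2)\,dx=1/|y|$. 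Your argument absorbs this without change of rate:
\begin{itemize}
\item the tilt has bounded density ratio on $A_r$;
\item $P_{X\mid U}$ is unchanged;
\item the first chain-rule term becomes $O(\eta^2)$ (e.g.\ via a $\chi^2$ bound) rather than exponentially small.
\end{itemize}
\end{itemize}
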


\begin{proof}
By changing signs if necessary, assume $\mu_y > 0$ and $\mu_z > 0$.
Write $Y = \mu_y(1+\eta U)$ and $Z = \mu_z(1+\xi V)$ with
$U, V \stackrel{\mathrm{i.i.d.}}{\sim} \mathcal{N}(0,1)$, $\eta = 1/r_y$,
$\xi = 1/r_z$.
Set $a = \mu_z/\mu_y$, $D = \eta^2 + \xi^2$,
$\rho = -\eta/\sqrt{D}$, $\tau = \xi/\sqrt{D}$, so $\rho^2+\tau^2=1$
and $|\rho|, |\tau| \asymp_\kappa 1$.
Let $W = (X - \mu^*_X)/s_X$ be the standardised version of $X$ under $P_r$,
so $N^*_X$ is exactly $\mathcal{N}(0,1)$ in $W$-coordinates.

\emph{Step 1: Conditional Gaussianity.}
With $\mu_y > 0$, the truncation event is
$A_r = \{|Y| \geq \mu_y/2\} = A_r^+ \cup A_r^-$ where
$A_r^+ = \{1+\eta U \geq 1/2\}$ and $A_r^- = \{1+\eta U \leq -1/2\}$.
The negative branch satisfies
$\mathbb{P}(A_r^-) = \Phi(-3r_y/2) \leq e^{-9r_y^2/8}$, which is
exponentially small; its contribution to all subsequent expectations is
absorbed into the $O(e^{-cr^2})$ remainder and we work hereafter on $A_r^+$.
On $A_r^+$ we have $1+\eta u \geq 1/2 > 0$, so the denominator is
bounded away from zero, and for fixed $U = u$,
$X = a(1+\xi V)/(1+\eta u)$ is a linear function of the Gaussian $V$.
Hence $P_{W|U=u,A_r^+}$ is exactly $\mathcal{N}(m_u, v_u)$ with
\[
  m_u = \frac{a/(1+\eta u) - \mu^*_X}{s_X}, \qquad
  v_u = \frac{a^2\xi^2}{s_X^2(1+\eta u)^2}.
\]

\emph{Step 2: Per-slice KL bound.}
Let $B_r = \{|U| \leq K_r\}$ with $K_r = \sqrt{16\log r}$.
For $u \in B_r$, a Taylor expansion of $(1+\eta u)^{-1}$ together with
$\mu^*_X = a(1+\eta^2) + o_\kappa(ar^{-2})$
(Lemma~\ref{lem:product-moment-error}) and $s_X \asymp_\kappa a\sqrt{D}$ gives
\[
  m_u - \rho u = O_\kappa\!\left(\frac{1+u^2}{r}\right), \qquad
  v_u - \tau^2   = O_\kappa\!\left(\frac{1+|u|}{r}\right).
\]
Since $\tau$ is bounded away from zero, the Gaussian KL formula yields
\[
  \mathrm{KL}\!\left(\mathcal{N}(m_u,v_u)\;\Big\|\;\mathcal{N}(\rho u,\tau^2)\right)
  \leq \frac{C_\kappa(1+u^4)}{r^2}, \qquad u \in B_r.
\]
On $B_r^c \cap A_r$ the KL is at most $C_\kappa(1+r^2+u^2)$;
Gaussian tail bounds give $\mathbb{P}(|U|>K_r \mid A_r) \leq Cr^{-4}$,
so $\mathbb{E}[(1+r^2+U^2)\mathbf{1}_{B_r^c}\mid A_r] \leq C_\kappa r^{-2}$.

\emph{Step 3: Chain rule and data-processing.}
Define the reference joint
$Q_{W,U}(\mathrm{d}w,\mathrm{d}u) =
\phi(u)\,\mathcal{N}(w;\rho u,\tau^2)\,\mathrm{d}w\,\mathrm{d}u$.
The KL chain rule gives
\[
  \mathrm{KL}(P_{W,U}\|Q_{W,U})
  = \underbrace{\mathrm{KL}(P_{U|A_r}\|\mathcal{N}(0,1))}_{=-\log\mathbb{P}(A_r)=O(e^{-cr^2})}
   + \mathbb{E}_{U|A_r}\!\left[\mathrm{KL}(P_{W|U,A_r}\|\mathcal{N}(\rho U,\tau^2))\right]
  \;\leq\; \frac{C_\kappa}{r^2}.
\]
The marginal of $Q_{W,U}$ in $W$ is $\mathcal{N}(0,1)$ (since
$W_0 = \rho U + \tau V \sim \mathcal{N}(0,\rho^2+\tau^2) = \mathcal{N}(0,1)$).
The data-processing inequality applied to $(W,U)\mapsto W$ gives
\[
  \mathrm{KL}(P_r \| N^*_X)
  = \mathrm{KL}(P_W \| \mathcal{N}(0,1))
  \leq \mathrm{KL}(P_{W,U}\|Q_{W,U})
  \leq \frac{C_\kappa}{r^2}. \qedhere
\]
\end{proof}

\begin{proof}[Proof of Theorem~\ref{thm:product-bwd-bound}]
Let $N^*_X = \mathcal{N}(\mu^*_X, (\sigma^*_X)^2)$ be the moment-matched
Gaussian of $\tilde{m}^r_{f\to X}$.
Since the log-density ratio of any two Gaussians is a quadratic polynomial,
and $\tilde{m}^r_{f\to X}$ and $N^*_X$ share the same first two moments:
\[
  \mathbb{E}_{\tilde{m}^r}\!\left[\log\frac{N^*_X(X)}{\hat{m}_{f\to X}(X)}\right]
  = \mathbb{E}_{N^*_X}\!\left[\log\frac{N^*_X(X)}{\hat{m}_{f\to X}(X)}\right].
\]
This gives the exact Pythagorean decomposition
\begin{equation}
  \mathrm{KL}\!\left(\tilde{m}^r_{f \to X} \;\Big\|\; \hat{m}_{f \to X}\right)
  \;=\;
  \underbrace{\mathrm{KL}\!\left(\tilde{m}^r_{f \to X} \;\Big\|\; N^*_X\right)}_{\leq\,C_\kappa/r^2
    \;\text{(Lemma~\ref{lem:nongauss-ratio})}}
  \;+\;
  \underbrace{\mathrm{KL}\!\left(N^*_X \;\Big\|\; \hat{m}_{f \to X}\right)}_{O_\kappa(1/r^2)
    \;\text{(Lemma~\ref{lem:product-moment-error})}}.
  \label{eq:product-bwd-pythag}
\end{equation}
Both terms are $O_\kappa(1/r^2)$ as $r\to\infty$, giving~\eqref{eq:product-bwd-bound}.
\end{proof}

\newpage

\subsection{Proofs of Propositions~\ref{prop:relu-fwd}
  and~\ref{prop:relu-bwd} (ReLU Factor DMA)}
\label{app:proofs:relu}

\begin{remark}[Improper Backward Message at $\alpha = 0$]
\label{rem:relu-improper}
For the standard ReLU ($\alpha = 0$), every $x \leq 0$ maps to $y = 0$,
so the $x \leq 0$ piece of the backward message evaluates to the constant
$\mathcal{N}(0;\, \mu_y, \sigma_y^2) = \varphi(v)/\sigma_y$ rather than a
Gaussian in $x$.
This constant piece has infinite mass, making the backward message improper.
The formulas~\eqref{eq:relu-bwd-mean}--\eqref{eq:relu-bwd-var} are
inapplicable at $\alpha = 0$; in practice, one uses $\alpha = \epsilon \ll 1$
(leaky ReLU) or replaces the backward message with a uniform prior on the
negative half-line.
For $\mu_y \gg \sigma_y$ the constant is negligible and the backward message
is approximately $\mathcal{N}(\mu_y, \sigma_y^2)$, but no finite
normalisation correction applies in general.
\end{remark}

\paragraph{Truncated Gaussian lemma.}
Both proofs use the following standard integral identities.
For $X \sim \mathcal{N}(\mu, \sigma^2)$, $u = \mu/\sigma$, $P = \Phi(u)$,
$\phi = \varphi(u)$:
\begin{align}
  \int_{-\infty}^0 \mathcal{N}(x;\mu,\sigma^2)\intd{x}    &= 1-P, &
  \int_0^\infty \mathcal{N}(x;\mu,\sigma^2)\intd{x}       &= P, \\
  \int_{-\infty}^0 x\,\mathcal{N}(x;\mu,\sigma^2)\intd{x} &= \mu(1-P)-\sigma\phi, &
  \int_0^\infty x\,\mathcal{N}(x;\mu,\sigma^2)\intd{x}    &= \mu P+\sigma\phi, \\
  \int_{-\infty}^0 x^2\mathcal{N}(x;\mu,\sigma^2)\intd{x} &= (\mu^2+\sigma^2)(1-P)-\mu\sigma\phi, &
  \int_0^\infty x^2\mathcal{N}(x;\mu,\sigma^2)\intd{x}    &= (\mu^2+\sigma^2)P+\mu\sigma\phi.
    \label{eq:trunc-moments}
\end{align}
These follow from completing the square in the exponent and the Gaussian
survival function identity $\mathbb{E}[X\mathbf{1}_{X>0}] = \mu\Phi(\mu/\sigma)
+ \sigma\varphi(\mu/\sigma)$.

\begin{proof}[Proof of Proposition~\ref{prop:relu-fwd}]
The exact forward message is the pushforward of $X \sim \mathcal{N}(\mu_x,\sigma_x^2)$
through $\relu{\cdot}{\alpha}$.
We compute $\mathbb{E}[Y] = \mathbb{E}[\relu{X}{\alpha}]$ and
$\mathbb{E}[Y^2] = \mathbb{E}[\relu{X}{\alpha}^2]$ using~\eqref{eq:trunc-moments}
with $u = \mu_x/\sigma_x$, $P = \Phi(u)$, $\phi = \varphi(u)$.

For the first moment, writing $\relu{x}{\alpha} = x\mathbf{1}_{x>0} + \alpha x\mathbf{1}_{x\leq 0}$:
\begin{align*}
  \mathbb{E}[Y]
    &= \int_0^\infty x\mathcal{N}(x;\mu_x,\sigma_x^2)\intd{x}
       + \alpha\int_{-\infty}^0 x\mathcal{N}(x;\mu_x,\sigma_x^2)\intd{x} \\
    &= (\mu_x P + \sigma_x\phi) + \alpha(\mu_x(1-P) - \sigma_x\phi) \\
    &= \mu_x[\alpha+(1-\alpha)P] + (1-\alpha)\sigma_x\phi
     = \mu_x A + (1-\alpha)\sigma_x\phi. \checkmark
\end{align*}

For the second moment:
\begin{align*}
  \mathbb{E}[Y^2]
    &= \int_0^\infty x^2\mathcal{N}(x;\mu_x,\sigma_x^2)\intd{x}
       + \alpha^2\int_{-\infty}^0 x^2\mathcal{N}(x;\mu_x,\sigma_x^2)\intd{x} \\
    &= [(\mu_x^2+\sigma_x^2)P + \mu_x\sigma_x\phi]
       + \alpha^2[(\mu_x^2+\sigma_x^2)(1-P) - \mu_x\sigma_x\phi] \\
    &= (\mu_x^2+\sigma_x^2)[\alpha^2+(1-\alpha^2)P]
       + (1-\alpha^2)\mu_x\sigma_x\phi
     = (\mu_x^2+\sigma_x^2)B + (1-\alpha^2)\mu_x\sigma_x\phi. \checkmark
\end{align*}

The DMA is the moment-matching projection $\hat{m}_{f\to Y} = \mathcal{N}(m_Y, s_Y^2)$
with $m_Y = \mathbb{E}[Y]$ and $s_Y^2 = \mathbb{E}[Y^2] - m_Y^2$.

\emph{Consistency.} As $\sigma_x \to 0$ with $\mu_x \to \bar x$:
$u \to \pm\infty$, so for $\bar x > 0$: $P\to 1$, $\phi\to 0$, $A\to 1$,
$B\to 1$, giving $m_Y\to\bar x$ and $s_Y^2 \to 0$; for $\bar x < 0$: $P\to 0$,
$A\to\alpha$, $B\to\alpha^2$, giving $m_Y\to\alpha\bar x$ and $s_Y^2\to 0$.
Both recover $\relu{\bar x}{\alpha}$ exactly.
\end{proof}

\begin{proof}[Proof of Proposition~\ref{prop:relu-bwd}]
The exact backward message density is
\begin{equation}
  m_{f\to X}(x) =
    \begin{cases}
      \mathcal{N}(x;\,\mu_y,\sigma_y^2)        & x > 0, \\
      \mathcal{N}(\alpha x;\,\mu_y,\sigma_y^2) & x \leq 0,
    \end{cases}
  \label{eq:relu-bwd-exact}
\end{equation}
obtained by integrating the factor $\dirac{y - \relu{x}{\alpha}}$ against
$m_{Y\to f}(y) = \mathcal{N}(y;\mu_y,\sigma_y^2)$.
(For $x > 0$, the constraint $y = x$ gives $m_{f\to X}(x) = \mathcal{N}(x;\mu_y,\sigma_y^2)$;
for $x \leq 0$, the constraint $y = \alpha x$ gives
$m_{f\to X}(x) = \mathcal{N}(\alpha x;\mu_y,\sigma_y^2)$.)

\emph{Total mass.}
Setting $v = \mu_y/\sigma_y$, $P = \Phi(v)$, $Q = 1-P$:
\[
  \int_{-\infty}^\infty m_{f\to X}(x)\intd{x}
    = P + \frac{1}{\alpha}\int_{-\infty}^0 \mathcal{N}(t;\mu_y,\sigma_y^2)\intd{t}
    = P + \frac{Q}{\alpha}
    = \frac{\tilde C}{\alpha},
\]
where the substitution $t = \alpha x$ (for $\alpha>0$) contributes the
$1/\alpha$ factor and $\tilde C = \alpha P + Q$.

\emph{First moment.}
Using~\eqref{eq:trunc-moments} and the substitution $t = \alpha x$ for the
$x \leq 0$ piece:
\begin{align*}
  \int_{-\infty}^\infty x\,m_{f\to X}(x)\intd{x}
    &= \int_0^\infty x\mathcal{N}(x;\mu_y,\sigma_y^2)\intd{x}
       + \int_{-\infty}^0 x\mathcal{N}(\alpha x;\mu_y,\sigma_y^2)\intd{x} \\
    &= (\mu_y P + \sigma_y\phi)
       + \frac{1}{\alpha^2}(\mu_y Q - \sigma_y\phi).
\end{align*}
Dividing by $\tilde C/\alpha$:
\[
  m_X = \frac{\mu_y(\alpha^2 P+Q) + (\alpha^2-1)\sigma_y\phi}{\alpha\tilde C}.
  \checkmark
\]

\emph{Second moment.}
\begin{align*}
  \int_{-\infty}^\infty x^2\,m_{f\to X}(x)\intd{x}
    &= (\mu_y^2+\sigma_y^2)P + \mu_y\sigma_y\phi
       + \frac{1}{\alpha^3}[(\mu_y^2+\sigma_y^2)Q - \mu_y\sigma_y\phi].
\end{align*}
Dividing by $\tilde C/\alpha$:
\[
  \mathbb{E}[X^2]
    = \frac{(\mu_y^2+\sigma_y^2)(\alpha^3 P + Q)
            + (\alpha^3-1)\mu_y\sigma_y\phi}{\alpha^2\tilde C},
\]
giving $s_X^2 = \mathbb{E}[X^2] - m_X^2$ as stated.

\emph{Consistency.}
As $\sigma_y\to 0$ with $\mu_y\to\bar y$, $v\to\pm\infty$ and $\phi\to 0$.
For $\bar y > 0$: $P\to 1$, $Q\to 0$, $\tilde C\to\alpha$, giving
$m_X\to\bar y$ and $s_X^2\to 0$.
For $\bar y < 0$ (requires $\alpha > 0$): $P\to 0$, $Q\to 1$, $\tilde C\to 1$,
giving $m_X\to\bar y/\alpha$ and $s_X^2\to 0$.
Both recover the exact inverse $x = \relu{\bar y}{\alpha}^{-1}$.
\end{proof}

\subsection{Theorem Coverage by Factor and Direction}
\label{app:proofs:theorems}

Table~\ref{tab:bnn:theorems} summarises which theorem governs each factor
direction in the BNN factor graph.
Two tiers arise: \emph{Tier~1} (proper messages) covered by
Theorem~\ref{thm:dma-master}; \emph{Tier~2} (improper messages) covered by
the concentrated-input Theorem~\ref{thm:product-bwd-bound}.

\begin{table}[h]
  \centering
  \caption{Theorem coverage for each factor direction in the BNN.
    ``Proper?'' refers to whether the DMA message is a valid
    (positive-precision) Gaussian. All linear and copy factors satisfy
    Corollary~\ref{cor:dirac-exact} exactly; the Gaussian likelihood is
    conjugate and exact. See Remarks~\ref{rem:product-improper}
    and~\ref{rem:relu-improper} for the improper cases.}
  \label{tab:bnn:theorems}
  \begin{tabular}{llcc}
    \toprule
    Factor & Direction & Proper? & Theorem \\
    \midrule
    Gaussian prior             & both        & Yes & Cor.~\ref{cor:dirac-exact} (exact)           \\
    Gaussian likelihood        & both        & Yes & exact (conjugate)                            \\
    Linear / copy              & both        & Yes & Cor.~\ref{cor:dirac-exact}                   \\
    Product ($z{=}xy$)         & fwd ($z$)   & Yes & Thm.~\ref{thm:dma-master}                    \\
    Product ($z{=}xy$)         & bwd ($x,y$) & No  & Thm.~\ref{thm:product-bwd-bound}             \\
    Leaky-ReLU ($\alpha{>}0$)  & fwd         & Yes & Thm.~\ref{thm:dma-master}                    \\
    Leaky-ReLU ($\alpha{>}0$)  & bwd         & Yes & Thm.~\ref{thm:dma-master}                    \\
    Standard ReLU ($\alpha{=}0$) & fwd       & Yes & Thm.~\ref{thm:dma-master}                    \\
    Standard ReLU ($\alpha{=}0$) & bwd       & No  & Rem.~\ref{rem:relu-improper} (no bound derived$^{\dagger}$) \\
    \bottomrule
  \end{tabular}
  \\\smallskip
  \raggedright\footnotesize
  $^{\dagger}$Standard ReLU ($\alpha{=}0$) is not used in this paper;
  a concentrated-input bound analogous to Thm.~\ref{thm:product-bwd-bound}
  is left for future work.
\end{table}


\newpage

\section{Empirical Validation of DMA Approximation Quality}
\label{app:validation}

This appendix provides six layers of empirical validation:
\begin{enumerate}
  \item \textbf{Master theorem} (§\ref{app:validation:master}): we directly
    verify Theorem~\ref{thm:dma-master} by computing the message KL $\delta$
    and the resulting marginal KL for 192 leaky-ReLU factor configurations
    and confirming that the bound $\|\msgto{X_j}{f}\|_\infty/Z \cdot \delta$
    holds in every case.
  \item \textbf{Bound tightness} (§\ref{app:validation:tightness}): we sweep
    each parameter axis individually (input SNR $r$, leaky slope $\alpha$,
    input width $\sigma_x$) and overlay actual marginal KL with the theoretical
    bound, showing how the gap varies across the full parameter range.
  \item \textbf{Factor-level SNR sweep} (§\ref{app:validation:snr}): we
    verify the end-to-end $O(1/r^2)$ rate for the product backward message
    (Theorem~\ref{thm:product-bwd-bound}) and confirm the same empirical rate
    for the leaky-ReLU backward message via importance-sampling reference marginals
    over a range of input SNR values.
  \item \textbf{Copy factor} (§\ref{app:validation:copy}): we visualise the
    $O(1/r^2)$ Gaussian--log-normal KL bound from Lemma~\ref{lem:gauss-ln-kl}
    as a 2D heatmap over the parameter space, confirming both horizontal bands
    (left) and diagonal $r$-contours (right).
  \item \textbf{Product factor} (§\ref{app:validation:product}): we compare
    DMA marginals against IS references for nominal and stress configurations,
    illustrating where the Gaussian approximation holds and where it degrades.
  \item \textbf{ReLU factor} (§\ref{app:validation:relu}): we compare DMA
    forward and backward marginals against IS references for the leaky-ReLU
    factor, showing the approximation quality in both directions.
\end{enumerate}


\subsection{Master Theorem Validation}
\label{app:validation:master}

We validate Theorem~\ref{thm:dma-master} directly: for 192 leaky-ReLU
factor configurations spanning $\alpha \in \{0.1, 0.3, 0.5\}$,
incoming-message widths $\sigma_x \in \{0.5, 1.0, 1.5, 2.0\}$, and
factor SNR values $r = |\mu_y|/\sigma_y \in [0.5, 7]$, we compute both
\[
  \delta = \mathrm{KL}(m_{f \to X} \,\|\, \hat{m}_{f \to X})
\]
and the resulting marginal error
$\mathrm{KL}(p_X/Z \,\|\, \hat{p}_X/\hat{Z})$ analytically on a fine grid
(no Monte Carlo).
The true backward message $m_{f \to X}(x) = \mathcal{N}(\mathrm{relu}(x;\alpha);\, \mu_y, \sigma_y^2)$
is a proper two-piece Gaussian for all $\alpha > 0$, so it lies within the
scope of Theorem~\ref{thm:dma-master}.
Figure~\ref{fig:validation:master-theorem} shows the results.

\begin{figure}[h]
  \centering
  \includegraphics[width=\linewidth]{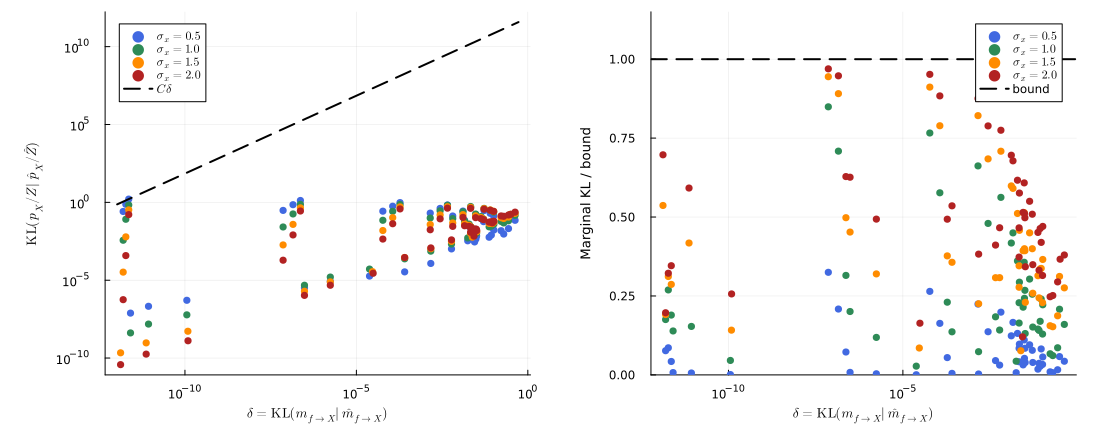}
  \caption{%
    Empirical validation of Theorem~\ref{thm:dma-master} across 192
    leaky-ReLU factor configurations.
    \emph{Left}: log-log scatter of message error $\delta$ versus marginal
    error; the dashed line $C\delta$ (with $C = \|\msgto{X_j}{f}\|_\infty/Z$)
    lies above every point, confirming $O(\delta)$ scaling.
    \emph{Right}: the marginal KL normalised by its per-configuration bound
    $\|\msgto{X_j}{f}\|_\infty/Z \cdot \delta$; all 192 ratios lie below the bound line
    at~1 (maximum ratio $0.97$), directly certifying the theorem.
    The bound is tight: the $\|\msgto{X_j}{f}\|_\infty/Z$ prefactor evaluates the
    Gaussian sup-norm, which is largest where the incoming message is most
    concentrated and the marginal error is therefore small anyway.
  }
  \label{fig:validation:master-theorem}
\end{figure}


\subsection{Bound Tightness: Per-Parameter Sweeps}
\label{app:validation:tightness}

Figure~\ref{fig:validation:tightness} isolates each parameter axis in turn,
holding the others fixed, and overlays the actual marginal KL (solid) with
the theoretical bound (dashed) to show directly how the gap varies.
Three qualitatively different behaviors emerge.
\emph{Panel (a) — SNR $r$}: the bound decreases monotonically, while the
actual marginal KL is non-monotone: small at low $r$ (where the DMA
moment-match of a near-symmetric message is accurate), peaking at moderate
$r \approx 2$--$3$ (where the asymmetric kink of the leaky ReLU is hardest
to capture), then declining again as both messages concentrate at high $r$.
\emph{Panel (b) — leaky slope $\alpha$}: both quantities decrease monotonically
as $\alpha \to 1$ (the factor approaches a linear copy, which is conjugate
and has $\delta = 0$) and grow as $\alpha \to 0$ (approaching the improper
hard-ReLU limit).
\emph{Panel (c) — incoming width $\sigma_x$}: widening the incoming message at
fixed $r$ lets more backward-message approximation error propagate into the
marginal, so the actual KL increases; the bound's prefactor
$\|\msgto{X_j}{f}\|_\infty/Z \propto 1/\sigma_x$ simultaneously decreases (the Gaussian
sup-norm decays as $1/\sigma_x$ while $Z$ grows more slowly).
The bound is therefore most conservative at small $\sigma_x$, where the
narrow incoming message masks the backward approximation error in the marginal.

\begin{figure}[h]
  \centering
  \includegraphics[width=\linewidth]{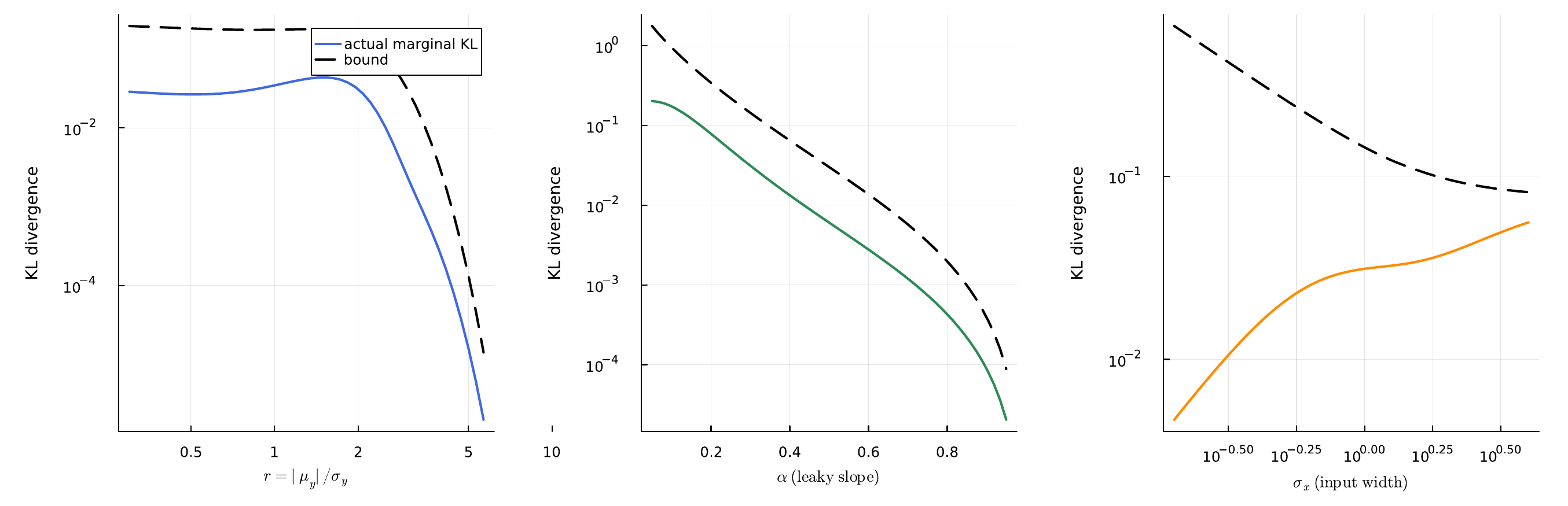}
  \caption{%
    Actual marginal KL (solid, coloured) versus the theoretical bound
    $\|\msgto{X_j}{f}\|_\infty/Z\cdot\delta$ (dashed, black)
    as each parameter is swept individually (legend shown in panel~(a)).
    \emph{(a)}~Input SNR $r$ ($\alpha=0.3$, $\sigma_x=1$ fixed): the bound
    decays monotonically; the actual KL is non-monotone, peaking near
    $r\approx 2$--$3$ where the ReLU kink is hardest to match, and small
    at both low and high $r$.
    \emph{(b)}~Leaky slope $\alpha$ ($r=2$, $\sigma_x=1$ fixed):
    both decrease as $\alpha\to 1$ (linear factor, $\delta=0$) and grow
    as $\alpha\to 0$ (approaching the improper hard-ReLU limit).
    \emph{(c)}~Input width $\sigma_x$ ($r=2$, $\alpha=0.3$ fixed):
    the actual KL increases with $\sigma_x$ (wider incoming message
    exposes more backward-message error in the marginal), while the bound
    decreases through its $1/\sigma_x$ prefactor; the conservatism is
    greatest at small $\sigma_x$.
    In all panels the bound lies strictly above the actual KL, with a
    maximum ratio of $0.68$ across all three sweeps (width sweep, panel~(c)).
  }
  \label{fig:validation:tightness}
\end{figure}


\subsection{SNR Sweep: Factor-Level Approximation Quality}
\label{app:validation:snr}

\paragraph{Setup.}
For each factor and each SNR value $r$, we draw $N = 500{,}000$ samples
from the incoming Gaussian messages, propagate them through the factor's
deterministic relation, and weight them by the remaining incoming message
to obtain an IS estimate of the true marginal.
We then compute $\mathrm{KL}(p_{\mathrm{IS}} \| q_{\mathrm{DMA}})$ via a
normalized weighted histogram with 500 bins covering the 0.1\%--99.9\%
quantile range of the IS distribution.
The dashed reference line $C/r^2$ is fitted to the high-SNR region
($r \geq 4$) by taking the median of $\mathrm{KL}(r) \cdot r^2$.
Figure~\ref{fig:validation:snr-sweep} shows the results for both factors.

\begin{figure}[h]
  \centering
  \includegraphics[width=\linewidth]{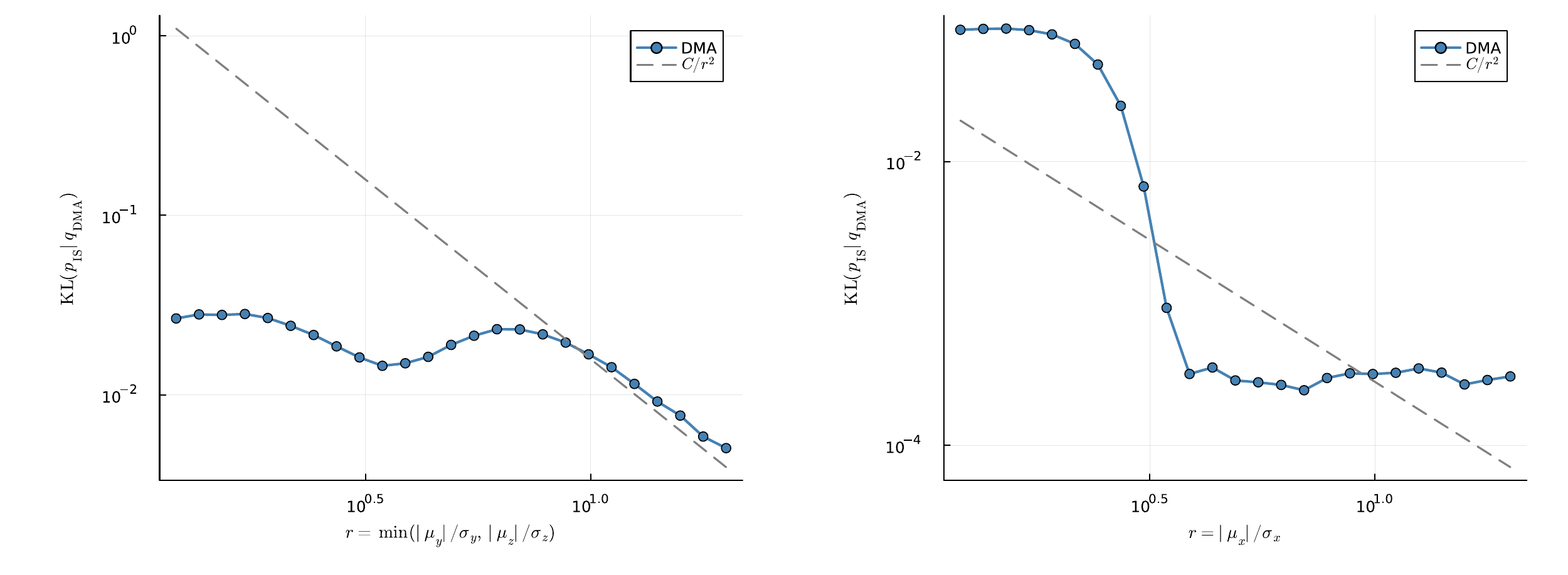}
  \caption{%
    KL divergence between IS reference marginal and DMA marginal as a
    function of input SNR $r$.
    \emph{Left}: product factor backward message to $X$ from $Z = XY$;
    incoming messages $Y \sim \mathcal{N}(\mu_y, (\mu_y/r)^2)$ and
    $Z \sim \mathcal{N}(\mu_z, (\mu_z/r)^2)$ with $\mu_y = 4$,
    $\mu_z = 10$, and the incoming message from $X$ fixed at
    $\mathcal{N}(3, 1)$.
    \emph{Right}: ReLU backward message to $X$ from
    $Y = \mathrm{relu}(X;\,\alpha)$ with $\alpha = 0.1$;
    incoming message $X \sim \mathcal{N}(\mu_x, (\mu_x/r)^2)$ with
    $\mu_x = 1$.
    Both panels show empirical $O(1/r^2)$ decay.
    The product backward (left) is covered by Theorem~\ref{thm:product-bwd-bound};
    the leaky-ReLU backward (right, $\alpha{=}0.1$, proper message) is consistent
    with Theorem~\ref{thm:dma-master} applied to the message KL.
    }
  \label{fig:validation:snr-sweep}
\end{figure}

\subsection{Copy Factor: Gaussian--Log-Normal Approximation}
\label{app:validation:copy}

The product backward message (Proposition~\ref{prop:product-bwd}) is
derived via two \emph{copy-factor} steps that bridge the Gaussian and
log-normal families.
\emph{Step 1} (backward): the exact backward message
$\mathrm{LN}(\cdot;\,\mu_Y, \sigma_Y^2)$ (log-space parameterised) is
approximated by a moment-matched Gaussian $\hat{m}_{f\to X}$.
\emph{Step 2} (forward): the Gaussian message
$\mathcal{N}(\cdot;\,\mu_X, \sigma_X^2)$ is approximated by a
moment-matched log-normal $\mathrm{LN}_\mathrm{m}$.
Both introduce $O(1/r^2)$ KL error (Lemma~\ref{lem:gauss-ln-kl});
Figure~\ref{fig:validation:copy-factor} confirms this empirically.

\begin{figure}[h]   
  \centering
  \includegraphics[width=\linewidth]{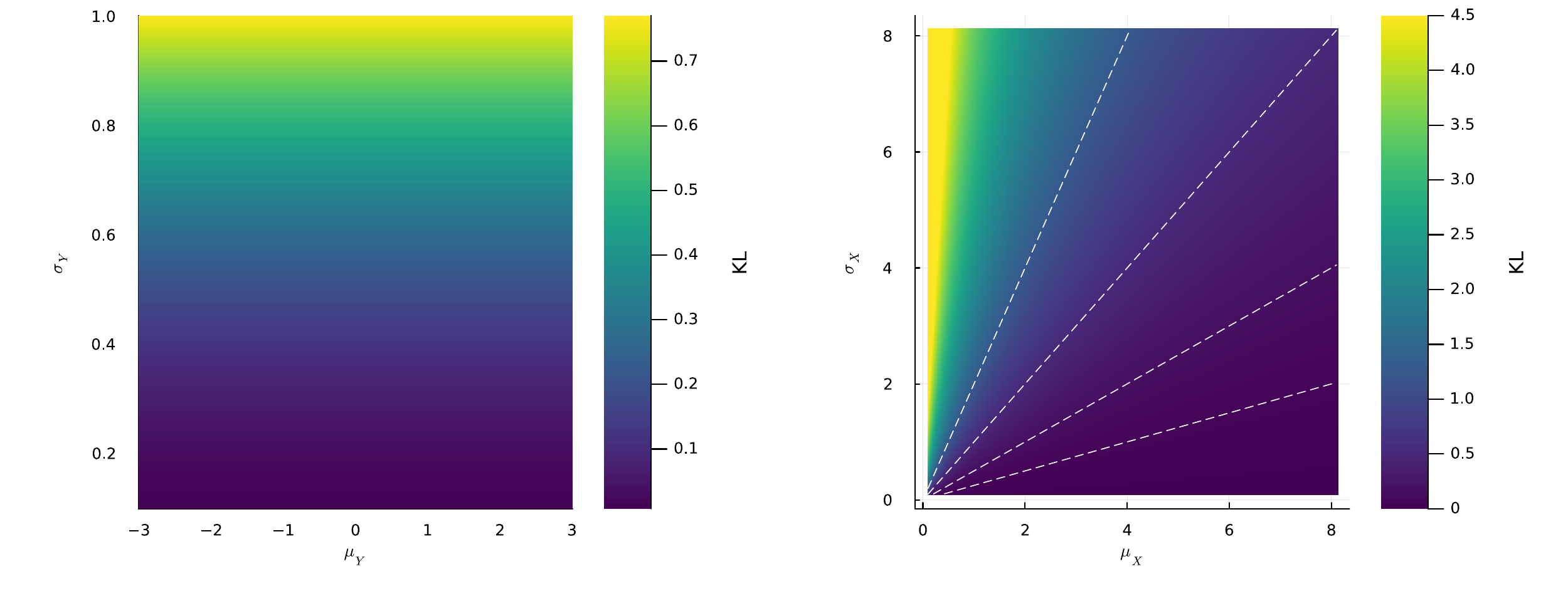}
  \caption{%
    Copy-factor KL divergence over the 2D parameter space, computed from
    the closed-form expression in Lemma~\ref{lem:gauss-ln-kl} (no Monte Carlo).
    \emph{Left}: Step~1 error
    $\mathrm{KL}[\mathrm{LN}(\cdot;\,\mu_Y,\sigma_Y^2),\;\hat{m}_{f\to X}(\cdot)]$
    swept over $(\mu_Y,\sigma_Y)$.
    Horizontal bands confirm that the KL depends only on the log-space
    variance $\sigma_Y^2$, not on $\mu_Y$; the analytical rate is
    $\frac{3}{4}\sigma_Y^2 + O(\sigma_Y^4)$.
    \emph{Right}: Step~2 error
    $\mathrm{KL}[\mathrm{LN}_\mathrm{m}(\cdot;\,\mu_X,\sigma_X^2),\;
    \mathcal{N}(\cdot;\,\mu_X,\sigma_X^2)]$
    swept over $(\mu_X,\sigma_X)$.
    Dashed white lines are level curves of $r = \mu_X/\sigma_X$;
    diagonal banding confirms the KL depends only on $r$, consistent
    with the $O(1/r^2)$ rate.
  }
  \label{fig:validation:copy-factor}
\end{figure}

\subsection{Product Factor}
\label{app:validation:product}

The left panel of Figure~\ref{fig:validation:snr-sweep} shows the product
backward message (to $X$ given $Z = XY$).
This is the case covered by Theorem~\ref{thm:product-bwd-bound}: the
exact message is improper (non-normalisable), so the IS reference is the
true marginal of $X$ rather than the message itself.
The theorem predicts $O(1/r^2)$ total KL error; the sweep confirms this
rate empirically across two decades of $r$.
At $r = 2$ (the theorem's stated threshold for properness of the truncated
reference), the KL is already small; at $r = 10$ it is several orders of
magnitude smaller, tracking the dashed $C/r^2$ reference closely.
Figure~\ref{fig:validation_product} compares DMA and IS marginals at a nominal
and a stress configuration; the forward marginal ($Z$) is well approximated in
both cases, while the backward marginals ($X$, $Y$) become increasingly non-Gaussian
under stress, illustrating the regime where the $O(1/r^2)$ bound is most relevant.

\begin{figure}[h]
  \centering
  \includegraphics[width=0.99\linewidth]{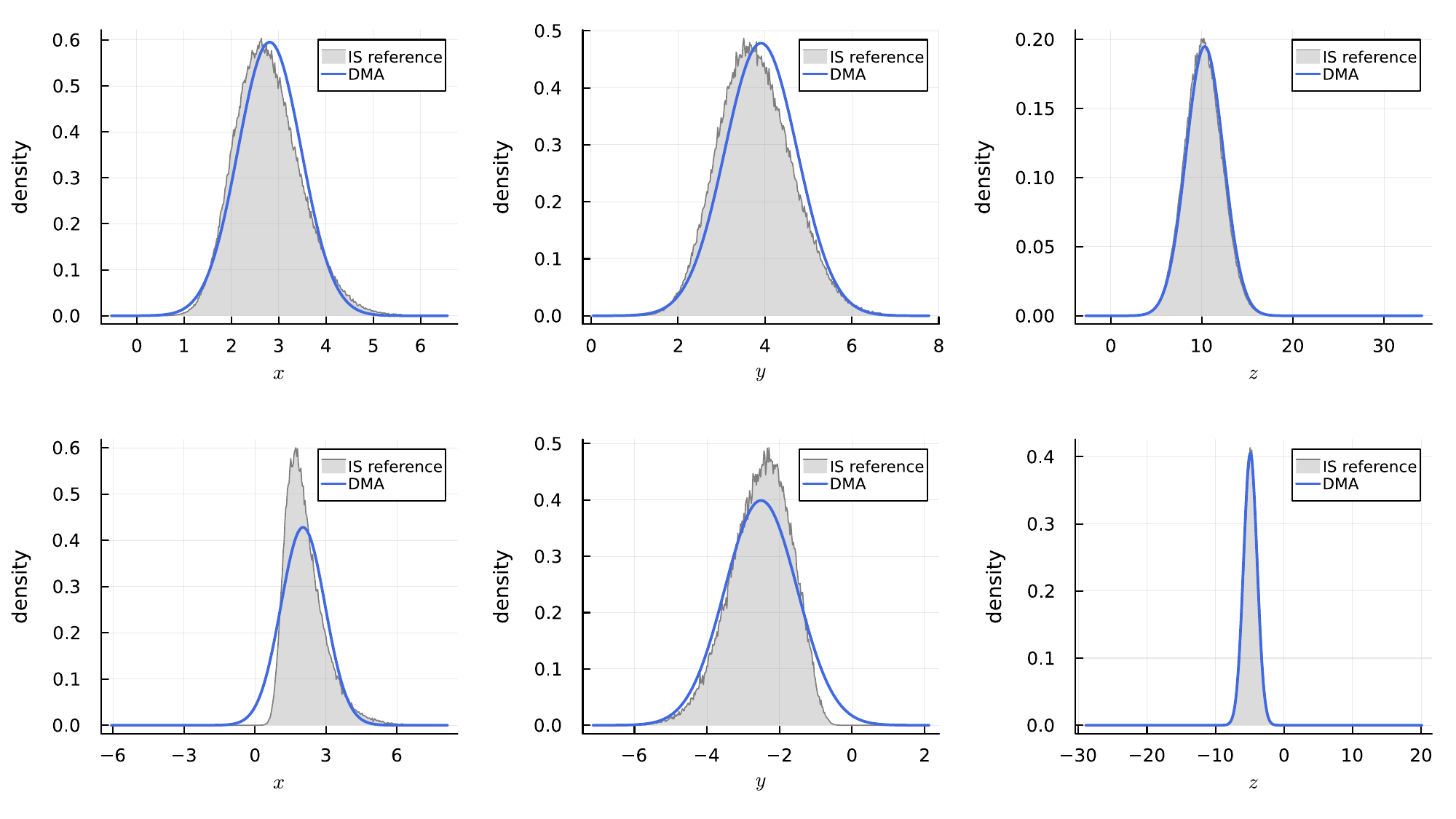}
  \caption{%
Product factor $f(X, Y, Z) = \delta(Z - XY)$. Each row shows marginals $X$, $Y$, $Z$
(left to right). Blue: DMA. Gray: IS. 
(a) Nominal ($\mu_X$ = 3, $\sigma^2_X$ = 1, $\mu_Y$ = 4, $\sigma^2_Y$ = 1, $\mu_Z$ = 10, $\sigma^2_Z$ = 5):
the $Z$ forward marginal is well approximated by the DMA Gaussian;
the $X$ and $Y$ backward marginals are already noticeably skewed in the IS reference.
(b) Stress ($\mu_X$ = 1, $\sigma^2_X$ = 4, $\mu_Y$ = -2.5, $\sigma^2_Y$ = 1,
$\mu_Z$ = -5, $\sigma^2_Z$ = 1): $Z$ remains well approximated; the $X$ and $Y$ backward
marginals become severely non-Gaussian.
}
  \label{fig:validation_product}
\end{figure}

\subsection{ReLU Factor}
\label{app:validation:relu}

The right panel of Figure~\ref{fig:validation:snr-sweep} shows the leaky
ReLU backward message ($\alpha = 0.1$).
Unlike the product factor, the leaky ReLU backward message is always
proper (the normalisation correction via the Mills ratio is finite for all
$\alpha > 0$), so this case is covered directly by the master theorem.
The empirical $O(1/r^2)$ decay is consistent with the general O($\delta$)
bound, and the absolute KL values are lower than the product factor at
matched $r$, reflecting the smoother shape of the leaky ReLU factor
compared to the ratio $X = Z/Y$.
The standard ReLU ($\alpha = 0$) backward message is improper; that case
is analogous to the product factor and is not shown.
Figure~\ref{fig:validation_relu} shows DMA versus IS at a nominal input:
the backward marginal IS reference is non-Gaussian (sharp, asymmetric), and the
forward marginal IS has a point mass at $y = 0$ from the leaky branch; the DMA
Gaussian captures the bulk location but cannot represent these non-Gaussian features.

\begin{figure}[h]
  \centering
  \includegraphics[width=0.75\linewidth]{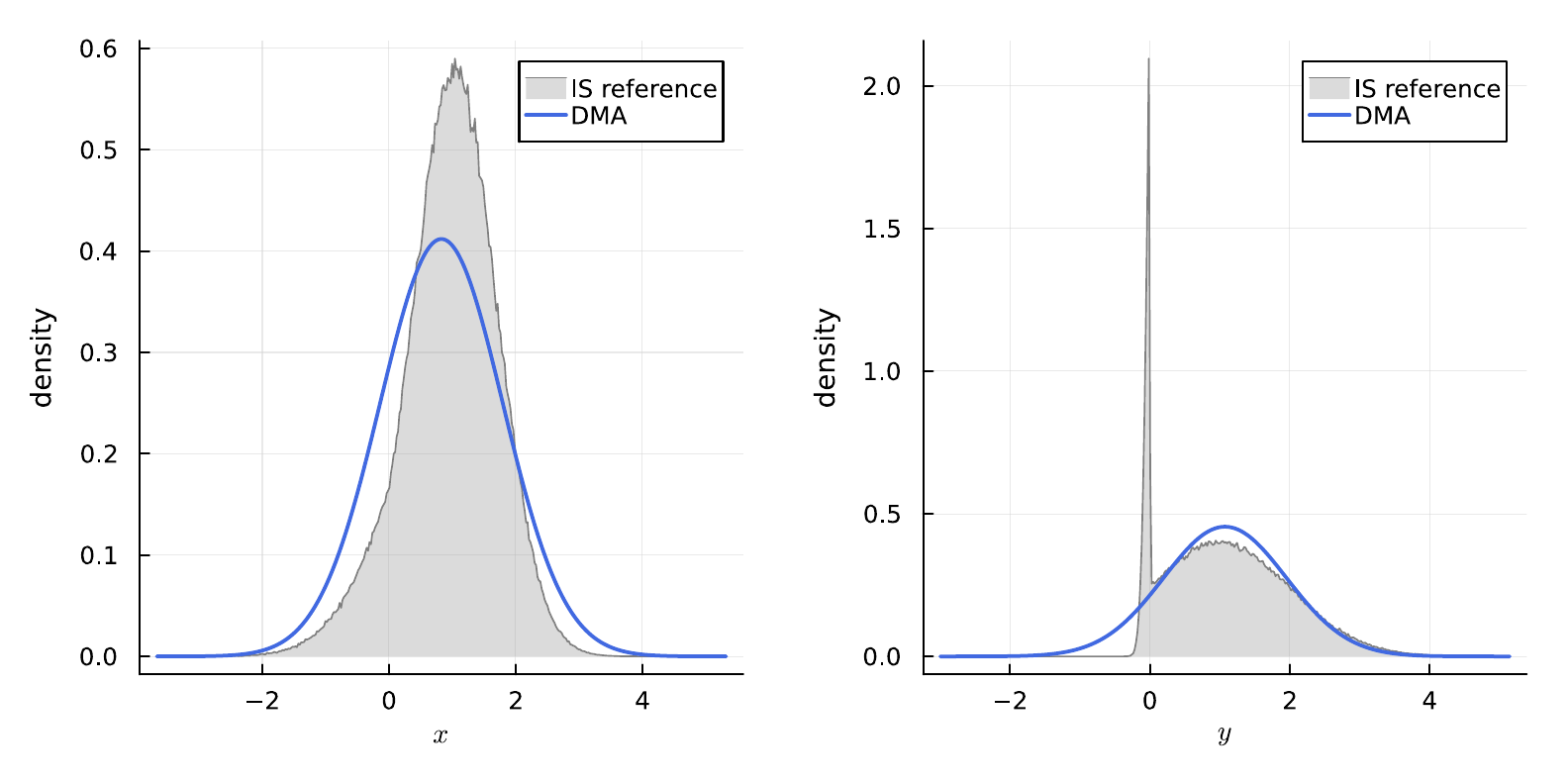}
  \caption{%
ReLU / Leaky-ReLU factor ($\alpha$ = 0.1), nominal scenario ($\mu_X$ = 1, $\sigma^2_X$ = 1).
Left: backward marginal $X$. Right: forward marginal $Y$. Blue: DMA, Gray: IS reference}
  \label{fig:validation_relu}
\end{figure}


\newpage

\section{Inference Algorithm}
\label{app:alg}

Algorithm~\ref{alg:bnn} gives the complete DMA BNN
training procedure.
We write $(\mu^{(l)}_{ij}, \sigma^{2,(l)}_{ij})$ for the mean and variance of
$q_{W^{(l)}_{ij}}$.

\paragraph{Scalar decomposition.}
The matrix-vector product $z^{(l)}_i = \sum_{j=1}^{d_{l-1}} W^{(l)}_{ij}\,x^{(l-1)}_j$
decomposes into $d_{l-1}$ product factors $z^{(l)}_{ij} = W^{(l)}_{ij}\,x^{(l-1)}_j$
and one linear sum factor per output unit $i$.
Forward messages are summed exactly under independence (means and variances add).
The backward message to $z^{(l)}_{ij}$ subtracts the total forward mean
$\mu^{(l)}_{z_i}$ and re-adds $\mu^{(l)}_{z_{ij}}$; the backward variance
adds $\sigma^{2,(l)}_{z_i}$ and subtracts $\sigma^{2,(l)}_{z_{ij}}$
(Gaussian deconvolution: exact for linear factors).
The backward message to $x^{(l-1)}_j$ is accumulated in precision form
$(\tau, \rho)$ over all output units $i$ before conversion to moments.

\paragraph{Mini-batch EP structure.}
The $N$ training examples are split into $B$ mini-batches of size $N/B$.
Each mini-batch $b$ has one stored outgoing weight message $m^{(b)}_{l,ij}$,
representing the combined likelihood contribution of that batch.
Before processing batch $b$, its previous message is divided out of the current
weight marginal to form the \emph{incoming belief} $q^{-b}_{W^{(l)}_{ij}}$
(the product of the prior and all other batches' messages).
After processing all examples in the batch, the new outgoing message is extracted
as $m^{(b)}_{\mathrm{new}} = q_W / q^{-b}_W$.
Setting $B=N$ (one example per batch) recovers per-example EP.

\paragraph{Prior, likelihood, and convergence.}
Prior factors $\mathcal{N}(\mu^{(l)}_{ij},\sigma_0^2)$ with He-style random means
$\mu^{(l)}_{ij}\sim\mathcal{N}(0,1/d_{l-1})$, the Gaussian likelihood
$\mathcal{N}(y_n;\,z^{(L)},\beta^2)$, and the optional activation prior factors
$\mathcal{N}(0,\sigma^2_{\mathrm{act},l})$ on each pre-activation $z^{(l)}_i$
all contribute exact messages ($\delta=0$); in the paper experiments all
$\sigma^2_{\mathrm{act},l}=\infty$ (uniform, no prior).
The observed input $x_n$ enters as a point mass ($\sigma^2=0$).
Convergence is checked after each full epoch via the normalised average
log-likelihood $\mathcal{L}_e = \frac{1}{N}\sum_n \log p(y_n\mid x_n)$:
training stops when $|\mathcal{L}_e - \mathcal{L}_{e-1}| / \max(\epsilon_{\mathrm{floor}},\,|\mathcal{L}_{e-1}|) < \varepsilon$,
where $\epsilon_{\mathrm{floor}}$ is a small numerical constant.
Per-example cost: $O\!\left(\sum_l d_l\,d_{l-1}\right)$.

\medskip

\begin{remark}
On the first pass every $m^{(b)}$ is uniform (zero precision), so the initial
division leaves the marginal unchanged.
After convergence the weight marginal satisfies
$q_{W^{(l)}_{ij}} \propto \mathcal{N}(\mu^{(l)}_{ij},\sigma_0^2)\cdot\prod_{b=1}^B m^{(b)}_{l,ij}$:
the prior message times one stored factor message per mini-batch.
The forward pass uses the full current belief $q_{W^{(l)}_{ij}}$ (not a cavity),
so within a batch each example conditions on the beliefs accumulated from the
prior and all other batches without requiring a per-example cavity state.
The convergence threshold $\varepsilon$ controls only the outer epoch loop; it
does not appear in any per-factor message computation and is not a learning rate.
\end{remark}

\newpage

\begin{algorithm}[h]
\SetAlgoLined
\DontPrintSemicolon
\caption{DMA BNN Training (1/2: initialisation and forward sweep)}
\label{alg:bnn}
\KwIn{%
  Data $\{(x_n,y_n)\}_{n=1}^N$; widths $d_0,\ldots,d_L$;
  leaky-ReLU slopes $\alpha_1,\ldots,\alpha_{L-1}$; mini-batches $B$;
  $\sigma_0^2$ (prior var.); $\beta^2$ (noise var.);
  $\sigma^2_{\mathrm{act},l}$, $l\!=\!1,\ldots,L$ (act.\ prior var.; $\infty$ = no prior);
  $\varepsilon$ (conv.\ threshold); $E$ (max epochs).}
\KwOut{Weight beliefs $\{q_{W^{(l)}_{ij}}\}_{l,i,j}$.}
\tcp{$m^{(b)}_{l,ij}$: stored outgoing msg from mini-batch $b$ to $W^{(l)}_{ij}$; all initialised to uniform}
$\mu^{(l)}_{ij} \sim \mathcal{N}(0,1/d_{l-1})$;\;
$q_{W^{(l)}_{ij}} \leftarrow \mathcal{N}(\mu^{(l)}_{ij},\sigma_0^2)$\hfill
  for all $l,\;i\leq d_l,\;j\leq d_{l-1}$\;
$m^{(b)}_{l,ij} \leftarrow \text{uniform}$\hfill
  for all $b\leq B,\;l,\;i\leq d_l,\;j\leq d_{l-1}$\;
\For{$e=1,\ldots,E$}{
  \For{$b=1,\ldots,B$ {\normalfont(mini-batch $\mathcal{B}_b$, $|\mathcal{B}_b|=N/B$)}}{
    $q^{-b}_{W^{(l)}_{ij}} \leftarrow q_{W^{(l)}_{ij}} \div m^{(b)}_{l,ij}$;\;\;
    $q_{W^{(l)}_{ij}} \leftarrow q^{-b}_{W^{(l)}_{ij}}$\hfill
      for all $l,\;i\leq d_l,\;j\leq d_{l-1}$\quad\tcp*[f]{divide out old batch msg}\;
    \For{each $n\in\mathcal{B}_b$}{
      $\mu^{(0)}_j \leftarrow x_{n,j}$,\;
      $\sigma^{2,(0)}_j \leftarrow 0$
        \hfill for $j=1,\ldots,d_0$\quad\tcp*[f]{point-mass input, $\delta=0$}\;
      \For{$l=1$ \KwTo $L$}{
        \For{$i=1,\ldots,d_l$}{
          \For{$j=1,\ldots,d_{l-1}$}{
            $\mu^{(l)}_{z_{ij}} \leftarrow \mu^{(l)}_{ij}\,\mu^{(l-1)}_j$
              \hfill\tcp*[f]{product factor (Prop.~\ref{prop:product-fwd})}\;
            $\sigma^{2,(l)}_{z_{ij}} \leftarrow
              \sigma^{2,(l)}_{ij}(\mu^{(l-1)}_j)^2
              +\sigma^{2,(l-1)}_j(\mu^{(l)}_{ij})^2
              +\sigma^{2,(l)}_{ij}\,\sigma^{2,(l-1)}_j$\;
          }
          $\mu^{(l)}_{z_i} \leftarrow \textstyle\sum_j\mu^{(l)}_{z_{ij}}$;\;\;
          $\sigma^{2,(l)}_{z_i} \leftarrow \textstyle\sum_j\sigma^{2,(l)}_{z_{ij}}$
            \hfill\tcp*[f]{sum factor}\;
          \tcp{Act.\ prior $\mathcal{N}(0,\sigma^2_{\mathrm{act},l})$: zero mean $\Rightarrow$ $\tau^{(l)}_{z_i}$ unchanged; $\sigma^2_{\mathrm{act},l}=\infty$ = no-op}
          $\tau^{(l)}_{z_i} \leftarrow \mu^{(l)}_{z_i}/\sigma^{2,(l)}_{z_i}$;\;
          $\rho^{(l)}_{z_i} \leftarrow 1/\sigma^{2,(l)}_{z_i} + 1/\sigma^2_{\mathrm{act},l}$ \qquad 
          $\mu^{(l)}_{z_i} \leftarrow \tau^{(l)}_{z_i}/\rho^{(l)}_{z_i}$;\;
          $\sigma^{2,(l)}_{z_i} \leftarrow 1/\rho^{(l)}_{z_i}$\;
          \lIf{$l<L$}{%
            $(\mu^{(l)}_i,\,\sigma^{2,(l)}_i) \leftarrow
              \mathrm{ReLUFwd}(\mu^{(l)}_{z_i},\sigma^{2,(l)}_{z_i},\alpha_l)$
            \hfill\tcp*[f]{ReLU factor (Prop.~\ref{prop:relu-fwd})}%
          }
        }
      }
      $(\mu^{\mathrm{bwd},(L)}_i,\;\sigma^{2,\mathrm{bwd},(L)}_i)
        \leftarrow (y_{n,i},\;\beta^2)$\hfill for $i=1,\ldots,d_L$
        \quad\tcp*[f]{likelihood backward to $z^{(L)}$, $\delta=0$}\;
    }
    $m^{(b)}_{l,ij} \leftarrow q_{W^{(l)}_{ij}} \div q^{-b}_{W^{(l)}_{ij}}$\hfill
      for all $l,\;i\leq d_l,\;j\leq d_{l-1}$\quad\tcp*[f]{extract new batch msg}\;
  }
}
\end{algorithm}

\newpage

\addtocounter{algocf}{-1}
\begin{algorithm}[h]
\SetAlgoLined
\DontPrintSemicolon
\caption{DMA BNN Training (2/2: backward sweep and convergence check)}
\label{alg:bnn2}
\tcp{(per example $n\in\mathcal{B}_b$, mini-batch $b$, epoch $e$:)}
\BlankLine
\tcp{--- Backward sweep: directly update $q_W$ (no per-example cavity) ---}
\For{$l=L$ \KwTo $1$}{
  $\tau^x_j\leftarrow 0$,\;\;$\rho^x_j\leftarrow 0$
    \hfill for $j=1,\ldots,d_{l-1}$\;
  \For{$i=1,\ldots,d_l$}{
    \For{$j=1,\ldots,d_{l-1}$}{
      \tcp{Sum-factor backward to $z^{(l)}_{ij}$: Gaussian deconvolution (exact)}
      $\mu^{\mathrm{cav}}_{ij} \leftarrow
        \mu^{\mathrm{bwd},(l)}_i - \mu^{(l)}_{z_i} + \mu^{(l)}_{z_{ij}}$\;
      $\sigma^{2,\mathrm{cav}}_{ij} \leftarrow
        \sigma^{2,\mathrm{bwd},(l)}_i + \sigma^{2,(l)}_{z_i} - \sigma^{2,(l)}_{z_{ij}}$\;
      \tcp{Product backward to $W^{(l)}_{ij}$ (Prop.~\ref{prop:product-bwd}): compute and apply}
      $m^{\mathrm{new}}_{l,ij}
        \leftarrow \mathrm{ProdBwd}_{W}(
          \mu^{\mathrm{cav}}_{ij},\,\sigma^{2,\mathrm{cav}}_{ij},\;
          \mu^{(l-1)}_j,\,\sigma^{2,(l-1)}_j)$\;
      $q_{W^{(l)}_{ij}} \leftarrow q_{W^{(l)}_{ij}} \times m^{\mathrm{new}}_{l,ij}$\;
      \tcp{Product backward to $x^{(l-1)}_j$ (Prop.~\ref{prop:product-bwd}, $W\!\leftrightarrow\!x$)}
      $(\Delta\tau_j,\,\Delta\rho_j)
        \leftarrow \mathrm{ProdBwd}_{X}(
          \mu^{\mathrm{cav}}_{ij},\,\sigma^{2,\mathrm{cav}}_{ij},\;
          \mu^{(l)}_{ij},\,\sigma^{2,(l)}_{ij})$\;
      $\tau^x_j \mathrel{+}= \Delta\tau_j$;\;\;
      $\rho^x_j \mathrel{+}= \Delta\rho_j$
        \hfill\tcp*[f]{accumulate backward precision to $x^{(l-1)}_j$ over $i$}\;
    }
  }
  \tcp{Convert accumulated backward msgs at $x^{(l-1)}$ to moment form}
  $(\mu^{\mathrm{bwd},x,(l-1)}_j,\;\sigma^{2,\mathrm{bwd},x,(l-1)}_j)
    \leftarrow (\tau^x_j/\rho^x_j,\;1/\rho^x_j)$
    \hfill for $j=1,\ldots,d_{l-1}$\;
  \lIf{$l>1$}{%
    $(\mu^{\mathrm{bwd},(l-1)}_i,\,\sigma^{2,\mathrm{bwd},(l-1)}_i)
      \leftarrow \mathrm{ReLUBwd}(
        \mu^{\mathrm{bwd},x,(l-1)}_i,\,\sigma^{2,\mathrm{bwd},x,(l-1)}_i,\;
        \mu^{(l-1)}_{z_i},\,\sigma^{2,(l-1)}_{z_i},\,\alpha_{l-1})$
    for $i\!\leq\!d_{l-1}$
    \hfill\tcp*[f]{ReLU backward to $z^{(l-1)}$ (Prop.~\ref{prop:relu-bwd})}%
  }
}
\BlankLine
\tcp{--- Convergence check (once per full epoch, after all mini-batches complete) ---}
$\mathcal{L}_e \leftarrow \frac{1}{N}\sum_{n=1}^{N}\log p(y_n\mid x_n)$
  \hfill\tcp*[f]{normalised average log-likelihood under current beliefs}\;
\lIf{$|\mathcal{L}_e - \mathcal{L}_{e-1}| \;/\; \max(\epsilon_{\mathrm{floor}},\,|\mathcal{L}_{e-1}|) < \varepsilon$}{\Return $\{q_{W^{(l)}_{ij}}\}$}
\BlankLine
\tcp{(repeat for next epoch $e+1$; after $E$ epochs:)}
\Return $\{q_{W^{(l)}_{ij}}\}$\;
\end{algorithm}



\section{Experimental Details and Extended Results}
\label{app:experiments}

\subsection{1D Regression: Feature Map and Architecture}
\label{app:experiments:setup}

The fixed feature map prepended to the learnable network is
$\varphi\colon\mathbb{R}\to\mathbb{R}^{7}$:
\[
  \varphi(x) = \bigl[x,\;
    e^{-(x+2)^2},\; e^{-(x+1)^2},\; e^{-x^2},\; e^{-(x-1)^2},\; e^{-(x-2)^2},\;
    \sin(x)\bigr]^\top.
\]
Output channels are standardised to zero mean and unit variance over
$[-5,\,5]$ and a constant bias is appended, giving an 8-dimensional input to
the learnable layers.
The learnable network has two hidden layers of widths $d_1 = 6$ and $d_2 = 5$
with leaky-ReLU activations ($\alpha_1 = 0.4$, $\alpha_2 = 0.8$) and a
scalar output layer; the observation model is Gaussian with $\beta = 0.2$.
Training runs for up to 200 epochs with 10 mini-batches of 20 examples each,
stopping when the relative change in normalised log-likelihood falls below $0.1$.

Weight initialisation uses He-style fan-in scaling throughout.
For each weight $W^{(l)}_{ij}$ the prior mean is drawn independently from
$\mathcal{N}(0,\,1/d_{l-1})$, giving a per-weight prior
$q_{W^{(l)}_{ij}} = \mathcal{N}(\mu^{(l)}_{ij},\,\sigma_0^2)$
with $\mu^{(l)}_{ij}\sim\mathcal{N}(0,\,1/d_{l-1})$ and
$\sigma_0^2 = 1/\bigl((L-1)\,d_{l-1}\bigr)$,
where $L=4$ counts all layers including the fixed feature map.
The marginal prior on each weight (integrating out the random mean) is
$\mathcal{N}(0,\,L/\bigl((L-1)\,d_{l-1}\bigr))$,
which for $L=4$ is $\mathcal{N}(0,\,4/(3\,d_{l-1}))$.
For the three learnable layers this gives marginal standard deviations of
$0.47$, $0.47$, $0.52$ (fan-in 8, 6, 5 respectively).
The data-generating weights are drawn from the same per-layer prior
(random mean drawn first, weight sampled from that Gaussian),
so the model is correctly specified.

\subsection{Comparison with Adam}
\label{app:experiments:adam}

\paragraph{Setup.}
We compare DMA against Adam \citep{KingmaBa2015} on the same task ($N=200$,
$\beta=0.2$, architecture and priors as in Section~\ref{sec:bnn:experiment}).
DMA stops when the relative change in normalised log-likelihood falls below $0.1$;
Adam is run at four learning rates
$\eta \in \{10^{-3},\, 10^{-2},\, 10^{-1},\, 1\}$ for 200 epochs.
Extrapolation quality is assessed on 60 test points in
$[-4,\,-2.5]\cup[1.5,\,3]$ against the true data-generating function.

\paragraph{Results.}
Figure~\ref{fig:bnn:comparison} summarises both comparisons.
\emph{Left}: DMA (solid black) stops automatically at epoch 17 at
NLL\,$\approx\!-0.16$; Adam's trajectories fan out over three orders of
magnitude depending on $\eta$, with $\eta=10^{-3}$ still far from convergence
at epoch 200 and $\eta=10^{-1}$ reaching the best level but only after
${\sim}80$ epochs.
\emph{Right}: after training, Adam ($\eta=0.1$) produces a point estimate with
no uncertainty quantification, yielding an extrapolation NLL of $6.6$ nats/example;
DMA's predictive variance grows outside $[-2.5,\,1.5]$
(see also Figure~\ref{fig:bnn:regression}), giving an extrapolation NLL of $0.54$.

\begin{figure}[h]
  \centering
  \begin{subfigure}[t]{0.55\linewidth}
    \includegraphics[width=\linewidth]{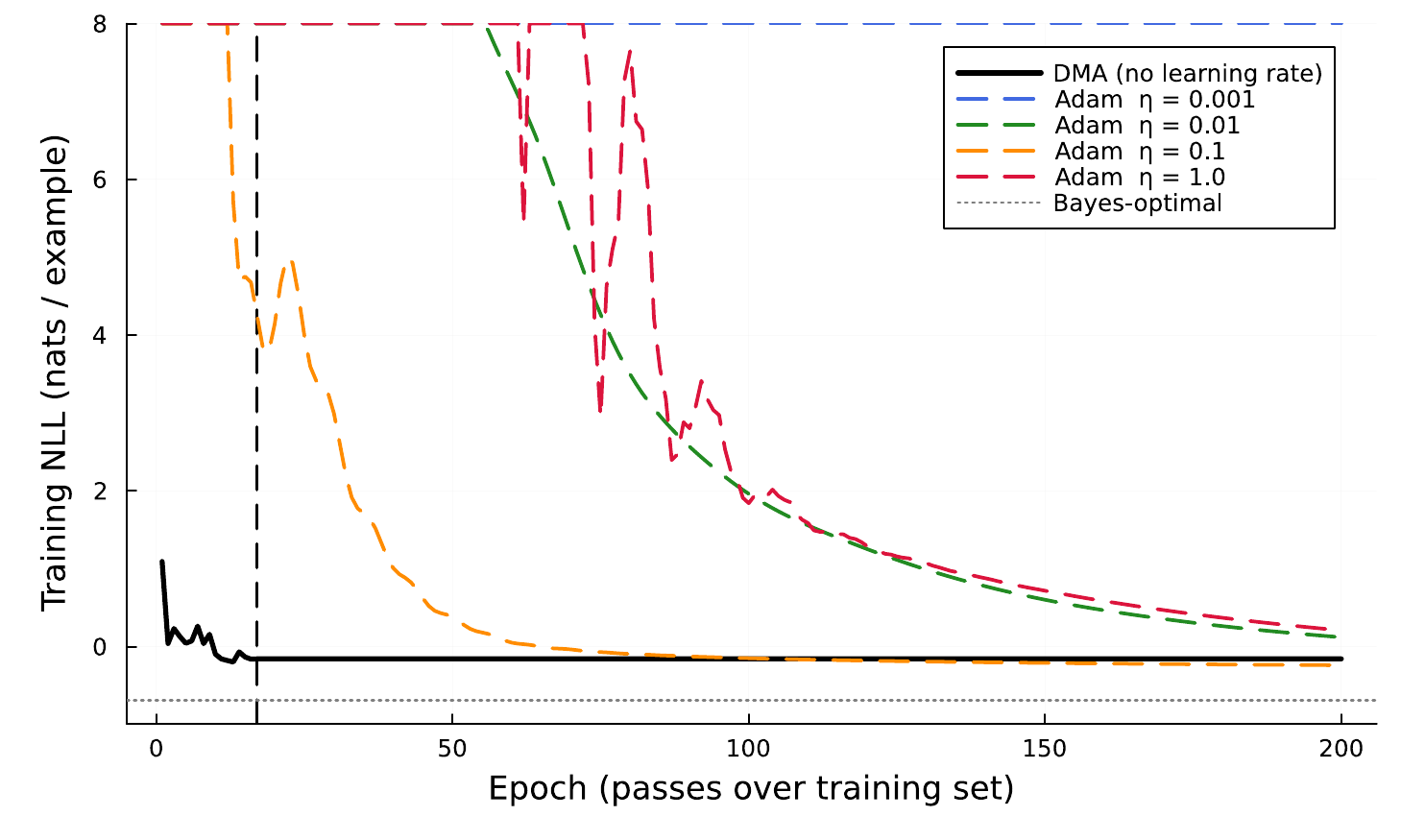}
  \end{subfigure}%
  \hfill
  \begin{subfigure}[t]{0.45\linewidth}
    \includegraphics[width=\linewidth]{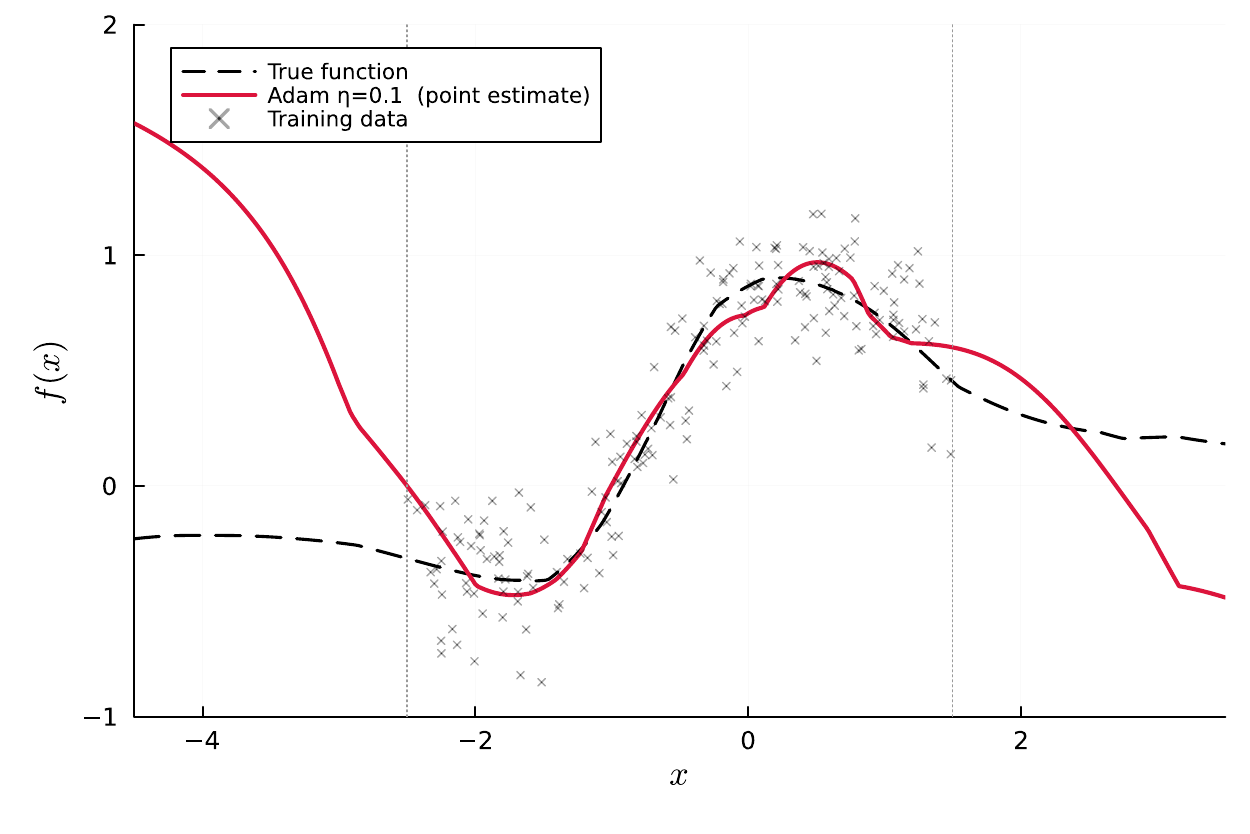}
  \end{subfigure}
  \caption{%
    \emph{Left}: Training negative log-likelihood (NLL\,$=-\frac{1}{N}\sum_i \log p(y_i\mid x_i)$,
    nats/example) versus epoch for DMA (solid black)
    and Adam at four learning rates (dashed).
    DMA's outer updates stop automatically at epoch 17; Adam's speed and final NLL depend
    critically on $\eta$. Values above $8$ are clipped to $8$ for display.
    \emph{Right}: Adam ($\eta=0.1$) point prediction outside the training range
    $[-2.5,\,1.5]$ (dotted verticals); the true function (dashed) can deviate
    arbitrarily from the point estimate, with no uncertainty quantification available.
    Compare with the widening DMA posterior in Figure~\ref{fig:bnn:regression}.
    DMA's per-epoch cost is $0.80\,\text{ms}$ versus $0.45\,\text{ms}$ for Adam
    on this architecture ($1.8\times$), but DMA's early stopping at epoch 17
    versus Adam's ${\sim}80$ gives an overall ${\sim}5\times$ reduction in
    total training time.
    \vspace{-0.3cm}}
  \label{fig:bnn:comparison}
\end{figure}

\paragraph{Discussion.}
DMA requires no learning-rate tuning: the stopping criterion fires automatically
when beliefs stop changing, requiring no learning-rate search.
Adam's final NLL and convergence speed depend critically on $\eta$; finding
the right rate requires a full training run per candidate.
Outside the training region Adam's uncertainty is fixed at $\beta$, because a
point estimate has no mechanism to express ignorance; DMA's predictive variance
$\mathrm{Var}[f(x)] + \beta^2$ grows structurally wherever the factor graph
receives no backward messages that sharpen the weight beliefs
(Corollary~\ref{cor:dirac-exact}).

\subsection{Comparison with AdamW}
\label{app:experiments:adamw}

AdamW \citep{LoshchilovHutter2019} decouples weight decay from the adaptive
gradient update, making it the closest gradient-based analogue to MAP inference
with a Gaussian prior: a weight decay $\lambda$ corresponds to a prior
$\mathcal{N}(0, 1/\lambda)$ per weight when the NLL is averaged over $N$
examples.
We compare DMA against AdamW on the same 83-weight 1D regression task as
Appendix~\ref{app:experiments:adam}, fixing the learning rate at $\eta=0.1$
(the best Adam rate) and sweeping weight decay
$\lambda \in \{0.01,\, 0.1,\, 1.0,\, 10.0\}$.
The DMA prior has $\sigma_0^2 \approx 1/d_{l-1}$
(He-style fan-in scaling; Section~\ref{app:experiments:setup}), so the
prior-matched $\lambda \approx d_{l-1} \approx 6$--$8$; the sweep brackets
this range.

\paragraph{Results.}
Figure~\ref{fig:adamw:comparison} (left) shows training NLL versus epoch for
an illustrative run.
Across all weight decay values AdamW converges to a similar training NLL as
plain Adam; weight decay regularises the weights but does not accelerate or
substantially change convergence speed.
Table~\ref{tab:adamw} reports median extrapolation NLL and interquartile range
over 20 independently drawn datasets and true functions, giving a statistically
robust picture.
DMA achieves a median extrapolation NLL of $0.80$ (IQR $1.95$), substantially
better than every AdamW configuration (best: median $4.05$, IQR $10.81$ for
$\eta=0.01$, $\lambda=1.0$).
The large IQRs for AdamW reveal high variance across problem instances:
individual seeds can land near the Bayes-optimal value when the weight-decay
prior happens to match the true function well, but there is no reliable way
to identify such seeds without exhaustive search.
The dotted curves in Figure~\ref{fig:adamw:comparison} further show that
$\eta=0.01$ converges slowly and $\eta=1.0$ diverges even at the
prior-matched $\lambda=1.0$, while $\lambda=0.01$ fails at $\eta=0.1$;
DMA achieves its result without any learning-rate or weight-decay search.
More importantly, AdamW remains a point estimate: it extrapolates without
uncertainty quantification, so the predictive band is fixed at $\pm 2\beta$
regardless of distance from the training region; calibrated widening intervals
are unavailable.

\begin{figure}[h]
  \centering
  \includegraphics[width=\linewidth]{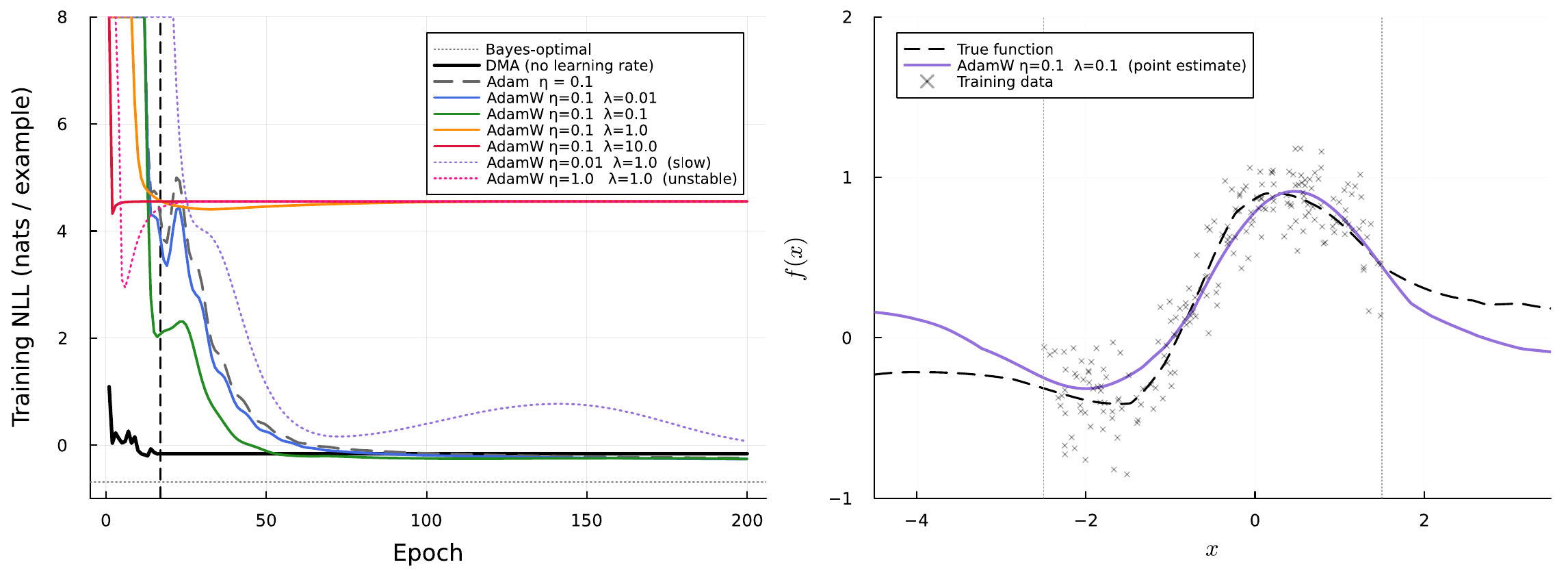}
  \vspace{-0.3cm}
  \caption{\emph{Left}: Training NLL vs.\ epoch for DMA (solid black), Adam
    $\eta=0.1$ (dashed grey), AdamW $\eta=0.1$ at four weight decay values
    (solid coloured), and two poorly-tuned AdamW configurations
    ($\eta=0.01$ and $\eta=1.0$, both $\lambda=1.0$, dotted) illustrating
    hyperparameter sensitivity.
    \emph{Right}: AdamW $\eta=0.1$, $\lambda=0.1$ (best weight decay on this seed) point
    prediction outside the training range $[-2.5,\,1.5]$ (dotted verticals); the true
    function (dashed) is tracked well on this seed but no uncertainty quantification
    is available.
    }
  \label{fig:adamw:comparison}
\end{figure}

\begin{table}[h]
  \centering
  \caption{Extrapolation NLL ($\downarrow$ better) on the 83-weight 1D
    regression task, averaged over 20 seeds (median [IQR]).
    Bayes-optimal $\approx -0.69$ nats.}
  \label{tab:adamw}
  \begin{tabular}{lccc}
    \toprule
    Method & Epochs & \multicolumn{2}{c}{Extrapolation NLL} \\
    \cmidrule(lr){3-4}
           &        & Median & IQR \\
    \midrule
    DMA (posterior predictive)        & $\mathbf{16}$  & $\mathbf{\phantom{0}0.80}$ & $1.95$ \\
    Adam $\eta=0.1$                   & $200$          & $12.09$ & $11.52$ \\
    AdamW $\eta=0.1$, $\lambda=0.01$  & $200$          & $\phantom{0}8.05$ & $10.73$ \\
    AdamW $\eta=0.1$, $\lambda=0.1$   & $200$          & $\phantom{0}5.52$ & $\phantom{0}9.86$ \\
    AdamW $\eta=0.1$, $\lambda=1.0$   & $200$          & $\phantom{0}4.07$ & $10.47$ \\
    AdamW $\eta=0.1$, $\lambda=10.0$  & $200$          & $\phantom{0}4.08$ & $\phantom{0}8.59$ \\
    AdamW $\eta=0.01$, $\lambda=1.0$  & $200$          & $\phantom{0}4.05$ & $10.81$ \\
    Bayes-optimal                     & --             & $-0.69$ & -- \\
    \bottomrule
  \end{tabular}
\end{table}

\subsection{Why EP Is Not a Viable Baseline}
\label{app:experiments:EP}

Expectation Propagation (EP)~\citep{Minka2001} is the closest algorithmic
ancestor of DMA and the method against which DMA is most naturally compared.
We therefore considered EP as a direct experimental baseline.
However, standard Gaussian EP is not reliably applicable to the BNN factor
graph considered here.

EP updates a global Gaussian approximation by locally matching moments and
then recovering the factor-to-variable site message by dividing the projected
marginal by the cavity message.
For a Gaussian approximation, this recovery requires the resulting site to
remain a proper distribution.
In particular, if the projected marginal is \emph{wider} than the cavity,
the recovered Gaussian site has negative precision and is therefore not a
valid Gaussian factor.
This failure mode is especially relevant for the nonlinear product and ReLU
factors in our network: both can project a marginal that is wider than the
incoming cavity, and both are identified as sources of negative-precision
sites in Appendix~\ref{sec:background} — indeed, it is one of the structural
pathologies that motivates DMA (Definition~\ref{def:dma}).

One can attempt to stabilise EP via damping, precision clipping, or related
heuristics, but these introduce additional algorithmic hyperparameters and
can materially change the behavior of the approximation.
Stabilised EP is therefore not a well-defined baseline without specifying a
particular heuristic and tuning protocol.
The need for such stabilisation in challenging approximate-inference settings
is well-documented~\citep{jylanki2011robust}.

We consequently do not report a single EP number as a direct baseline:
doing so would require selecting and tuning an implementation-specific
stabilisation scheme rather than comparing against standard EP itself.
Instead, we analyse EP's failure mode explicitly and show that DMA removes
the problematic cavity division altogether.
As discussed in Section~\ref{sec:dma:def}, DMA approximates the outgoing
message directly from the joint factor, so its Gaussian message construction
cannot produce the negative-precision site that arises in standard Gaussian EP.
We refer to the theoretical comparison in Table~\ref{tab:method-comparison}
(Appendix~\ref{sec:background}) and the discussion in
Section~\ref{sec:background:ep-vmp} for a precise characterisation of the
differences.

\subsection{Comparison with IVON}
\label{app:experiments:IVON}

IVON~\citep{shen2024variational} is a state-of-the-art variational inference
optimizer for Bayesian deep learning.
It maintains a diagonal Gaussian variational posterior $q(w) = \mathcal{N}(\mu,
\mathrm{diag}(\sigma^2))$ and updates $\mu$ via a natural-gradient step scaled
by an exponential moving average of squared gradients, with $\sigma^2$ set in
closed form from the curvature estimate.
It requires three hyperparameters: learning rate $\eta$, EMA coefficient
$\beta_2$, and curvature damping $\delta$.

We apply IVON to the same 83-weight network and training set used throughout
Section~\ref{sec:bnn} ($N{=}200$, $\beta{=}0.2$, no mini-batching).
To ensure fair comparison we do not fix hyperparameters by hand: we sweep
all three IVON hyperparameters over an $8\times4\times4=128$-configuration grid
($\eta\in\{10^{-3},3\times10^{-3},10^{-2},3\times10^{-2},10^{-1},3\times10^{-1},6\times10^{-1},1\}$,
$\beta_2\in\{0.9,0.99,0.999,0.9999\}$,
$\delta\in\{10^{-4},10^{-3},10^{-2},10^{-1},5\times10^{-1}\}$),
training each for 500 epochs.
Figure~\ref{fig:ivon:sweep} shows the minimum training NLL achieved across
the grid.
Only 26 of 160 configurations reach a final NLL below 1.0 at epoch~500;
all require $\delta\geq0.1$.

\paragraph{Convergence curves.}
Figure~\ref{fig:ivon:convergence} shows training NLL over 500 epochs (log
scale) for three representative learning rates $\eta \in \{0.001, 0.01,
0.1\}$ (each paired with the best $\beta_2$ and $\delta$ from the sweep),
alongside DMA.
DMA converges at epoch~17 without any learning rate. For IVON,
$\eta{=}0.001$ descends slowly but remains far from convergence after 500
epochs; $\eta{=}0.01$ converges to NLL\,$\approx\!0.26$ with $\delta{=}0.5$;
$\eta{=}0.1$ converges near the Bayes-optimal with persistent oscillations.
The log scale makes the three qualitatively different failure modes
simultaneously visible.

\paragraph{Robustness across problem instances.}
To test whether the best sweep configuration generalises, we fix
$\eta{=}0.6$, $\beta_2{=}0.9999$, $\delta{=}0.1$ and run 500 epochs on
20 independently drawn datasets and true functions (matching the protocol
of Appendix~\ref{app:experiments:laplace}).
IVON diverges (NaN posterior) on 14 of 20 seeds — a 70\% failure rate even
with the hyperparameters selected by exhaustive search.
On the 6 finite seeds the extrapolation NLL has median $2.11$ and
IQR $5.27$, reflecting high variance across instances.
DMA is always finite; its median extrapolation NLL over the same 20 seeds
is $0.79$ (IQR $1.28$), consistent with the AdamW and Laplace comparisons
(Appendices~\ref{app:experiments:adamw}--\ref{app:experiments:laplace}).
The comparison is therefore not a single-run gap but a reliability gap:
DMA produces a bounded posterior on every seed, while IVON fails on 14 of 20.

\paragraph{Discussion.}
This comparison illustrates two complementary advantages of DMA.
First, DMA eliminates a learning-rate axis entirely: the stopping criterion
fires automatically at epoch~17, with no hyperparameter search.
Second, even after an exhaustive 160-configuration sweep, IVON diverges on
70\% of problem instances with the best found configuration; on the 30\%
where it converges, extrapolation NLL is highly variable (median $2.11$,
IQR $5.27$) and worse than DMA's median of $0.79$ on the same seeds.
DMA requires no tuning and produces a bounded posterior on every seed.

\begin{figure}[h]
  \centering
  \includegraphics[width=0.65\linewidth]{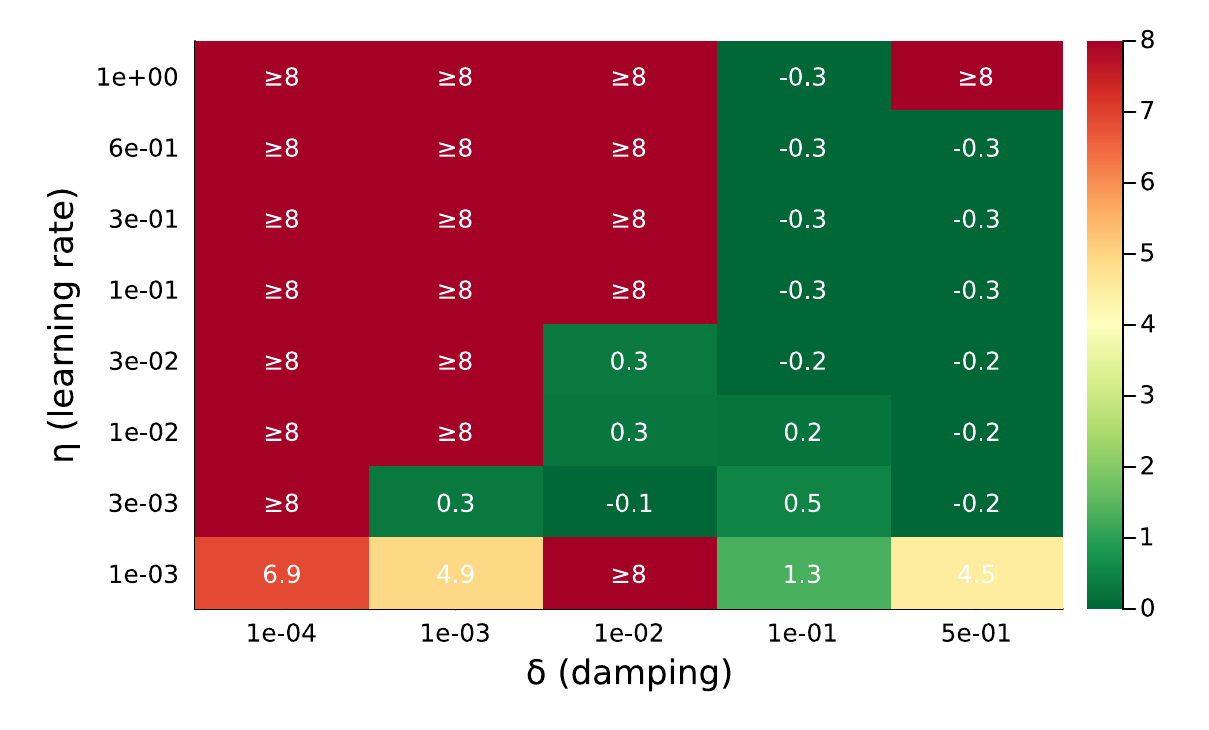}
  \vspace{-0.4cm}
  \caption{Hyperparameter sweep: minimum training NLL achieved over 500 epochs
    for each of 160 IVON configurations ($8\times4\times5$ grid over $\eta$,
    $\beta_2$, $\delta$) on the 83-weight BNN ($N{=}200$, $\beta{=}0.2$).
    Only 26 of 160 configurations reach a final NLL below 1.0, concentrated
    in the $\delta{=}0.1$ and $\delta{=}0.5$ columns.
    DMA training NLL at convergence: ${-}0.16$ (epoch~17, no hyperparameter search);
    best IVON: ${-}0.31$ (epoch~500, requires $\delta{=}0.1$ and extensive tuning).
    }
  \label{fig:ivon:sweep}
\end{figure}

\begin{figure}[h]
  \centering
  \includegraphics[width=0.75\linewidth]{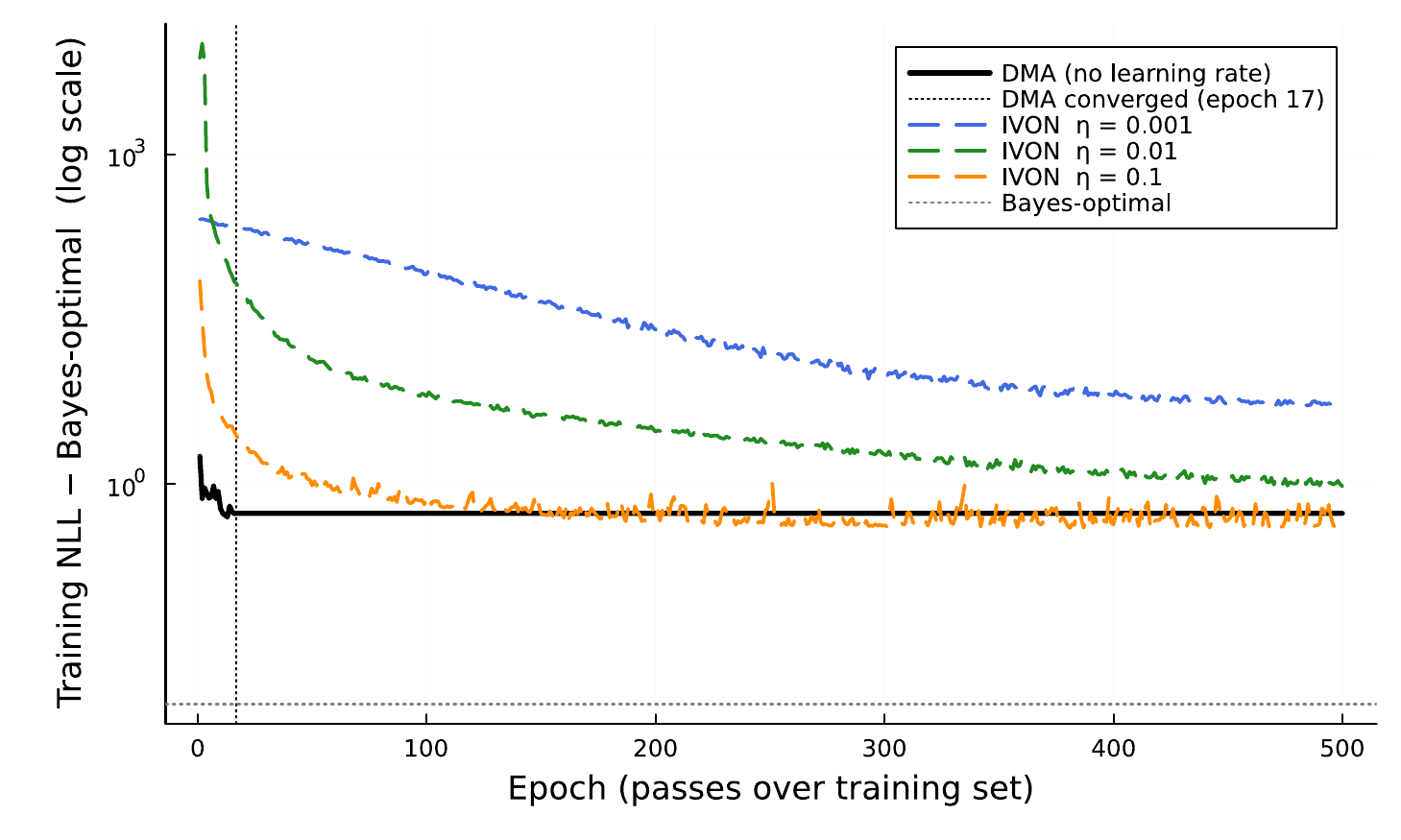}
  \caption{Training NLL minus Bayes-optimal (log scale) vs.\ epoch for IVON
    at three learning rates (best $\beta_2$/$\delta$ per $\eta$ from the sweep)
    and DMA, on the 83-weight BNN ($N{=}200$, $\beta{=}0.2$).
    DMA converges at epoch~17; $\eta{=}0.001$ descends slowly but does not
    converge within 500 epochs; $\eta{=}0.01$ converges to NLL\,$\approx\!0.26$ with
    $\delta{=}0.5$; $\eta{=}0.1$ oscillates near the Bayes-optimal line.
    }
  \label{fig:ivon:convergence}
\end{figure}

\begin{figure}[h]
  \centering
  \begin{subfigure}[t]{0.49\linewidth}
    \includegraphics[width=\linewidth]{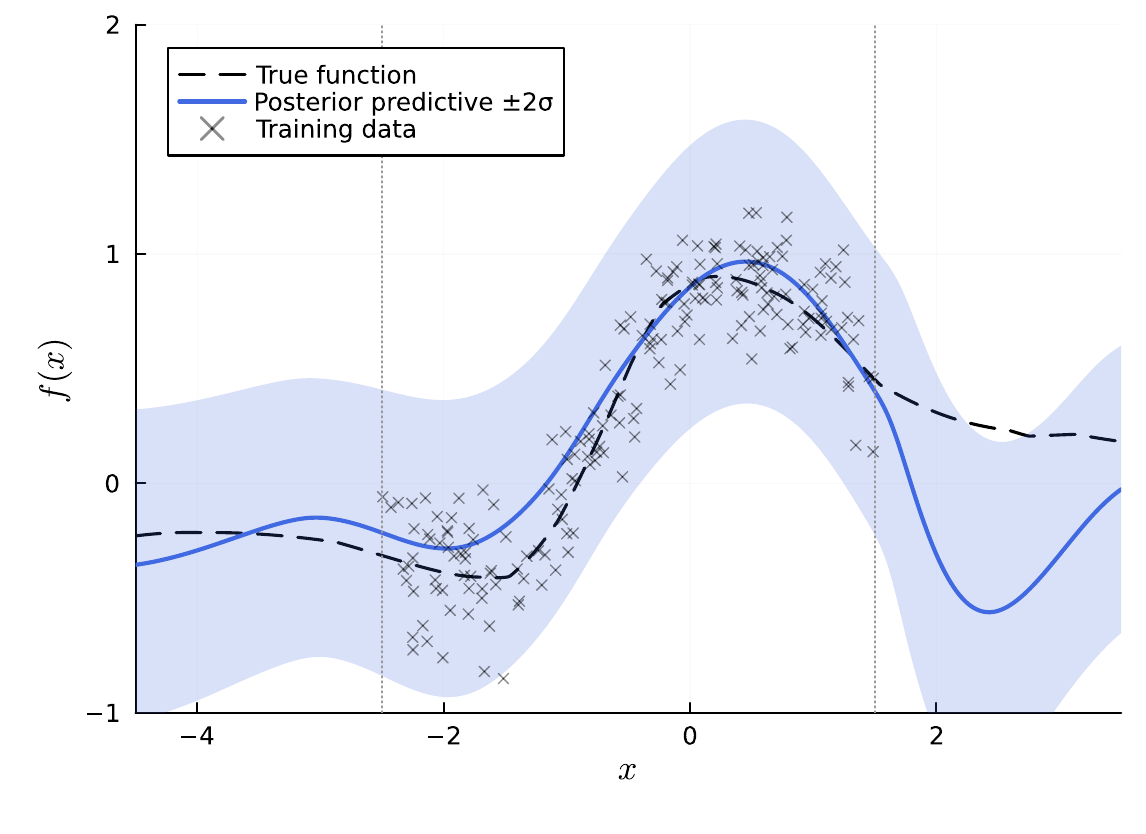}
  \end{subfigure}%
  \hfill
  \begin{subfigure}[t]{0.49\linewidth}
    \includegraphics[width=\linewidth]{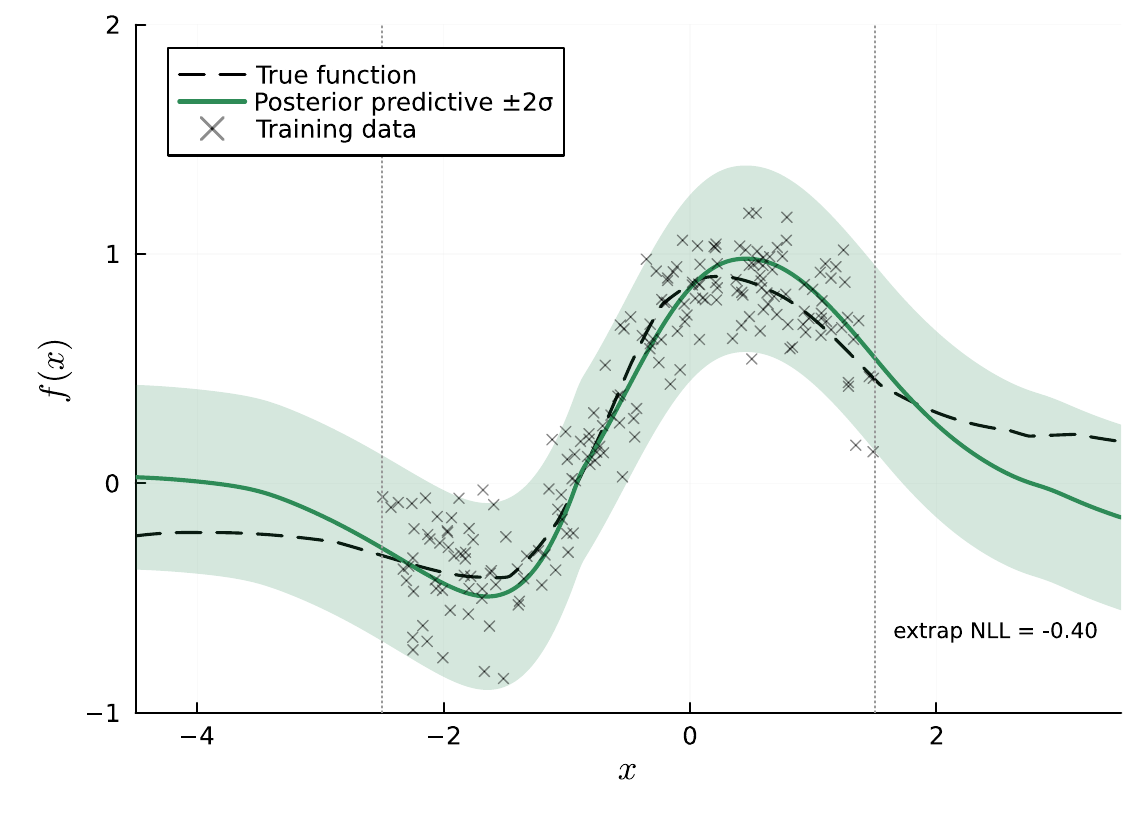}
  \end{subfigure}
  \caption{\emph{Left}: DMA posterior predictive (identical to
    Figure~\ref{fig:bnn:regression}).
    \emph{Right}: IVON predictive with the best sweep configuration
    ($\eta{=}0.6$, $\beta_2{=}0.9999$, $\delta{=}0.1$) after 2000 epochs.
    Training range $[-2.5,\,1.5]$ marked by dotted verticals.
    DMA extrapolation NLL: $0.54$; IVON on this seed: $-0.40$.
    Across 20 seeds IVON diverges on 14/20; median extrap NLL on finite
    seeds is $2.11$ (IQR $5.27$) vs.\ DMA median $0.79$ (IQR $1.28$).
    }
  \label{fig:ivon:predictive}
\end{figure}

\newpage
\subsection{Comparison with the Diagonal Laplace Approximation}
\label{app:experiments:laplace}

Unlike IVON, the diagonal Laplace approximation avoids iterative variational
inference altogether by separating the two subproblems it must solve.
MAP estimation is handled by standard Adam, which operates on deterministic
gradients without any stochastic weight sampling; the curvature estimate is then
computed once at the fixed MAP point by extracting the diagonal of the exact Hessian.
Because no Monte Carlo noise enters either step, the method does not suffer from
the gradient-variance instability that prevents IVON from converging in the
full-batch regime.
The price of this decoupling is that the Laplace posterior is anchored at the
MAP and cannot account for weight-space curvature far from that point, which
limits its extrapolation quality.
DMA avoids both limitations: it propagates a full distributional belief
through the factor graph without any MAP anchor and without any stochastic
sampling step.

\paragraph{Setup.}
The diagonal Laplace approximation \citep{MacKay1992} is the principal
alternative Bayesian baseline that, like DMA, maintains a full weight posterior.
We apply it to the same 1D regression task ($N=200$, $\beta=0.2$,
architecture as above) using the same He-init prior as DMA:
$\sigma_l^2 = 2/\text{fan\_in}_l$, giving $\sigma_1^2{=}0.25$,
$\sigma_2^2{\approx}0.33$, $\sigma_3^2{=}0.40$ for layers 1--3.
We first find the MAP weight vector $\mathbf{w}^*$ by running Adam ($\eta=0.1$,
200 epochs) on the joint negative log-likelihood
$-\log p(\mathcal{D}\mid\mathbf{w}) - \log p(\mathbf{w})$
with per-layer $L_2$ weight decay $\lambda_l = 1/(N\sigma_l^2)$.
The diagonal Laplace posterior is
$q(\mathbf{w}) = \prod_j \mathcal{N}(w_j;\,w^*_j,\,[\nabla^2(-\log p(\mathbf{w}^*,\mathcal{D}))]_{jj}^{-1})$,
where the diagonal is extracted from the exact Hessian computed via forward-mode
automatic differentiation over the 83-parameter network.
The predictive distribution is approximated by drawing $K=1000$ weight samples
from $q(\mathbf{w})$ and averaging forward passes.

\paragraph{Results.}
Figure~\ref{fig:bnn:laplace} compares DMA and diagonal Laplace predictive
distributions on a single representative seed.
Both methods track the true function inside the training region.
On this seed DMA achieves an extrapolation NLL of $0.54$ nats/example and
diagonal Laplace achieves $0.37$ nats/example
(Bayes-optimal: $\log\beta + \tfrac{1}{2}\log 2\pi \approx -0.69$).
A comprehensive 20-seed comparison including calibration is in
Table~\ref{tab:calibration} (Appendix~\ref{app:experiments:calibration});
with matched priors the two methods are broadly comparable on NLL
and calibration error, and the key differentiator is computational cost.

\begin{figure}[h]
  \centering
  \includegraphics[width=0.95\linewidth]{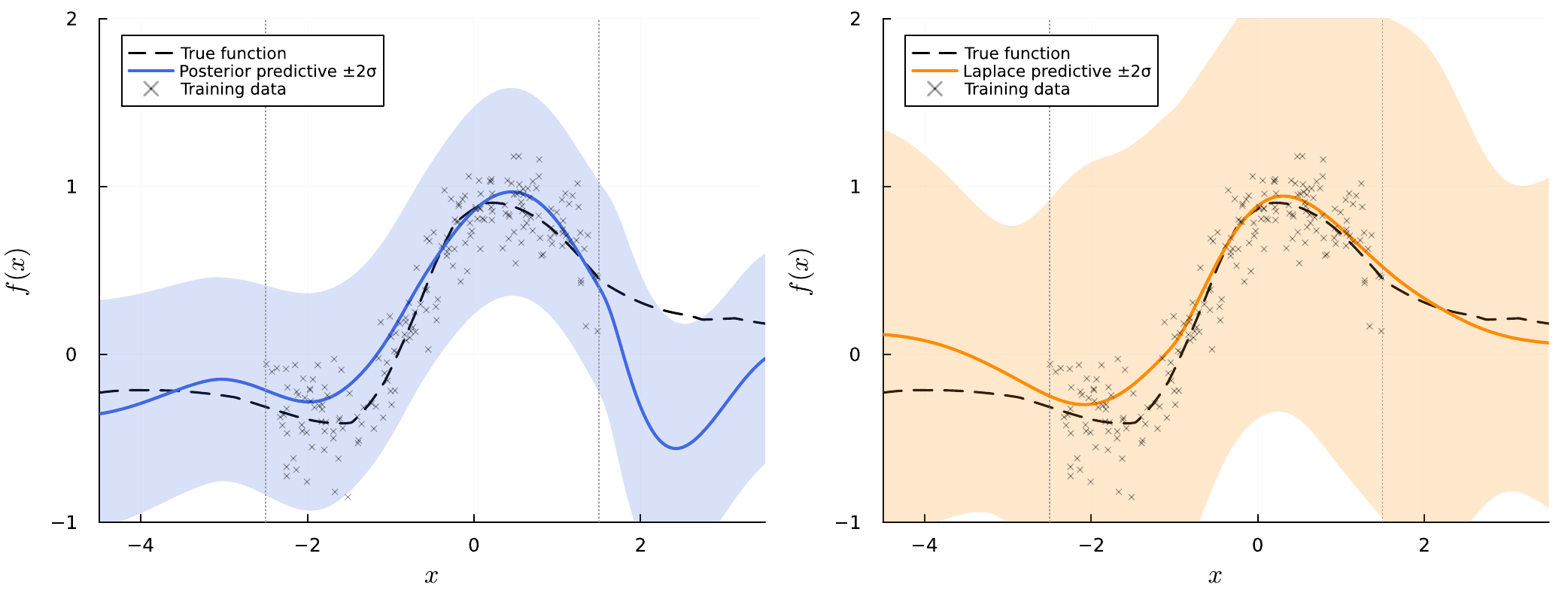}
  \caption{%
    \emph{Left}: DMA posterior predictive (same as Figure~\ref{fig:bnn:regression}).
    \emph{Right}: Diagonal Laplace predictive with He-init prior
    (MAP via Adam with per-layer $L_2$ decay,
    diagonal of exact Hessian, $K=1000$ MC samples).
    Training range $[-2.5,\,1.5]$ marked by dotted verticals.
    }
  \label{fig:bnn:laplace}
\end{figure}


\subsection{Calibration}
\label{app:experiments:calibration}

The predictive intervals shown in Figures~\ref{fig:bnn:regression}
and~\ref{fig:bnn:laplace} are visually wide in the extrapolation region for
DMA and narrow for Laplace, but visual inspection cannot distinguish genuine
calibration from a lucky choice of seed or test region.
We quantify calibration via coverage curves: for each confidence level
$\alpha\in[0,1]$, the \emph{coverage} is the empirical fraction of true values
that fall inside the $\alpha$-central predictive interval,
$\Pr\bigl(|y - \mu(x)| \le z_\alpha\,\sigma(x)\bigr)$,
where $z_\alpha = \Phi^{-1}\!\bigl((1+\alpha)/2\bigr)$.
A perfectly calibrated model traces the diagonal (coverage $= \alpha$).
We evaluate on $N_\text{cal}=500$ equally-spaced points in the
in-distribution region $[-2.5,\,1.5]$ and $500$ points in the extrapolation
region $[-4,\,-2.5]\cup[1.5,\,3]$; true values come from the same
data-generating function used for training.

\paragraph{Results.}
Figure~\ref{fig:bnn:calibration} shows the coverage curves.
We summarise calibration error as $\Delta = \overline{\text{coverage} - \alpha}$
(mean signed deviation from the diagonal;
$\Delta>0$ conservative, $\Delta<0$ overconfident).

\begin{figure}[h]
  \centering
  \includegraphics[width=0.5\linewidth]{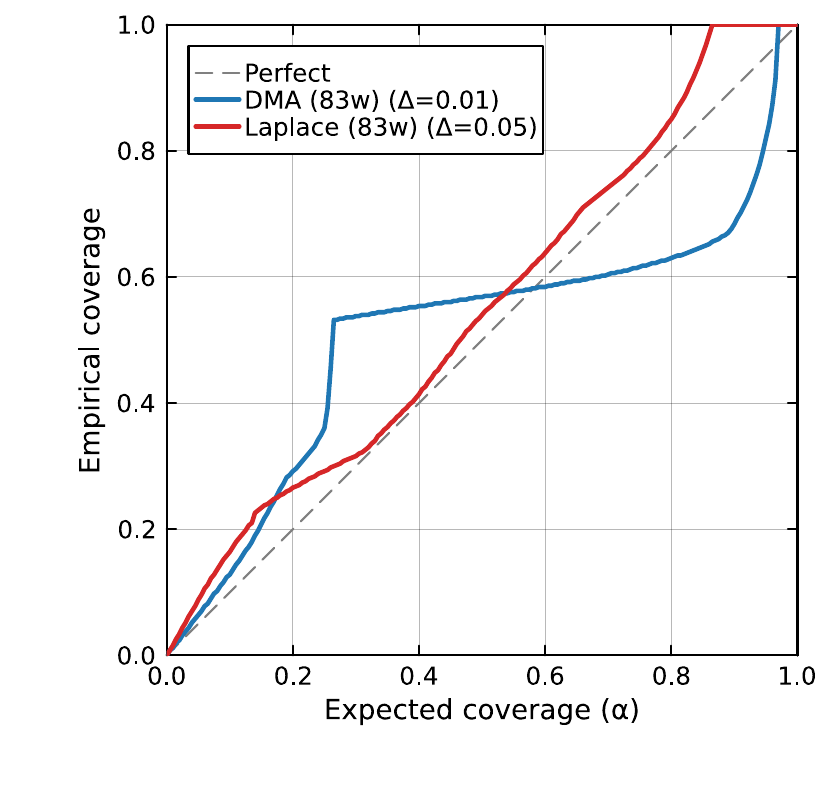}
  \vspace{-0.4cm}
  \caption{Extrapolation calibration curves for the 83-weight network,
    test points in $[-4,\,-2.5]\cup[1.5,\,3]$.
    $\Delta = \overline{\text{coverage}-\alpha}$:
    $0$ = perfect, $>0$ = conservative, $<0$ = overconfident.
    On this seed, DMA (blue, $\Delta=+0.01$) and diagonal Laplace (red, $\Delta=+0.05$);
    $\Delta$ is the mean signed deviation --- DMA's smaller $|\Delta|$ reflects
    cancellation of over- and under-coverage rather than a uniformly tighter fit
    to the diagonal. Over 20 seeds both methods are comparable (median $-0.09$ vs.\ $-0.10$).
    }
  \label{fig:bnn:calibration}
\end{figure}

Figure~\ref{fig:bnn:calibration} shows the extrapolation calibration curves for the
illustrative seed.
Over 20 seeds (Table~\ref{tab:calibration}), DMA and diagonal Laplace (both He-prior)
achieve nearly identical median calibration error ($\Delta=-0.09$ vs $-0.10$) and
comparable extrapolation NLL ($0.82$ vs $0.77$): with matched priors the two
methods perform similarly on these small-scale metrics.
Calibration at the 1932-weight scale is reported in
Appendix~\ref{app:large:calibration}.

\begin{table}[h!]
  \centering
  \caption{Extrapolation NLL and calibration error $\Delta$ on the 83-weight 1D regression task,
    over 20 independently drawn datasets and true functions (median [IQR]).
    $\Delta>0$: conservative; $\Delta<0$: overconfident; $\Delta=0$: perfect.
    }
  \label{tab:calibration}
  \begin{tabular}{lrr}
    \toprule
    Method & Extrap NLL & $\Delta$ (extrap) \\
    \midrule
    DMA                       & $0.82\ [1.96]$ & $-0.09\ [0.44]$ \\
    Diagonal Laplace          & $0.77\ [1.91]$ & $-0.10\ [0.35]$ \\
    Bayes-optimal             & $-0.69$        & $0$ \\
    \bottomrule
  \end{tabular}
\end{table}

\paragraph{Discussion.}
In the considered setup, diagonal Laplace achieves competitive extrapolation
NLL and calibration at the 83-weight scale.
The fundamental limitation of diagonal Laplace is computational: with $P$ the
number of parameters, the exact Hessian costs $\mathcal{O}(N P^2)$ time and
$\mathcal{O}(P^2)$ memory.
For the 83-parameter toy network this is trivial; for a network with
$P=10^6$ parameters it requires $\mathcal{O}(10^{12})$ operations and terabytes
of memory, making the method impractical at any realistic scale.
DMA's cost is $\mathcal{O}(N P \cdot\text{sweeps})$ --- linear in $P$ --- and
it propagates a full distributional belief through the factor graph without any
MAP anchor or stochastic sampling step.



\section{Model Mismatch: DMA vs.\ Adam}
\label{app:mismatch}

\subsection{Setup}
\label{app:mismatch:setup}

This experiment tests DMA under \emph{model mismatch}: the data-generating
network and the inference network have different architectures, so no setting
of the model weights can exactly recover the true function.

\paragraph{Data network.}
The ground-truth function is a draw from a wider network with the same
feature map $\varphi$ as the main experiment
(Appendix~\ref{app:experiments:setup}) but two hidden layers of widths
$d = 12$ and $d = 10$ (versus the model's $d = 6$ and $d = 5$), each with
leaky-ReLU activations ($\alpha = 0.4$ and $\alpha = 0.8$ respectively) and
Gaussian priors.
Weights are drawn from the prior in the same way as the correctly-specified
experiment (He-style initialisation; same random seed offset).

\paragraph{Model network and training.}
Both DMA and Adam use the identical two-hidden-layer model network from
Appendix~\ref{app:experiments:setup} ($d = 6, 5$; $\beta = 0.2$).
$N = 200$ observations are drawn uniformly from $[-5.0,\,5.0]$ with the
same Gaussian noise ($\beta = 0.2$).
DMA runs for up to 200 epochs with 10 mini-batches of 20 examples and
stopping tolerance $0.1$; Adam (learning rate $\eta = 0.1$) runs for a
fixed 200 epochs.
In 4 of 20 seeds the belief updates continued to oscillate between two
high-quality modes, so the change-based stopping criterion did not trigger
within the 200-epoch budget; these runs nevertheless achieved good solutions
and are included in the reported statistics.

\paragraph{Extrapolation evaluation.}
Extrapolation NLL is measured on 60 test points sampled uniformly from the
two flanking regions $[-6.5,\,-5.0]$ and $[5.0,\,6.5]$, against the
noiseless true-function values.
Note that training NLL is measured against noisy observations (Bayes-optimal
floor $\approx -0.19$ nats at $\beta = 0.2$), while extrapolation NLL is
against noiseless targets (Bayes-optimal floor $\approx -0.69$ nats); the
two quantities are on different scales by construction and should not be
compared directly.

\subsection{Results}
\label{app:mismatch:results}

\begin{figure}[h!]
  \centering
  \includegraphics[width=\linewidth]{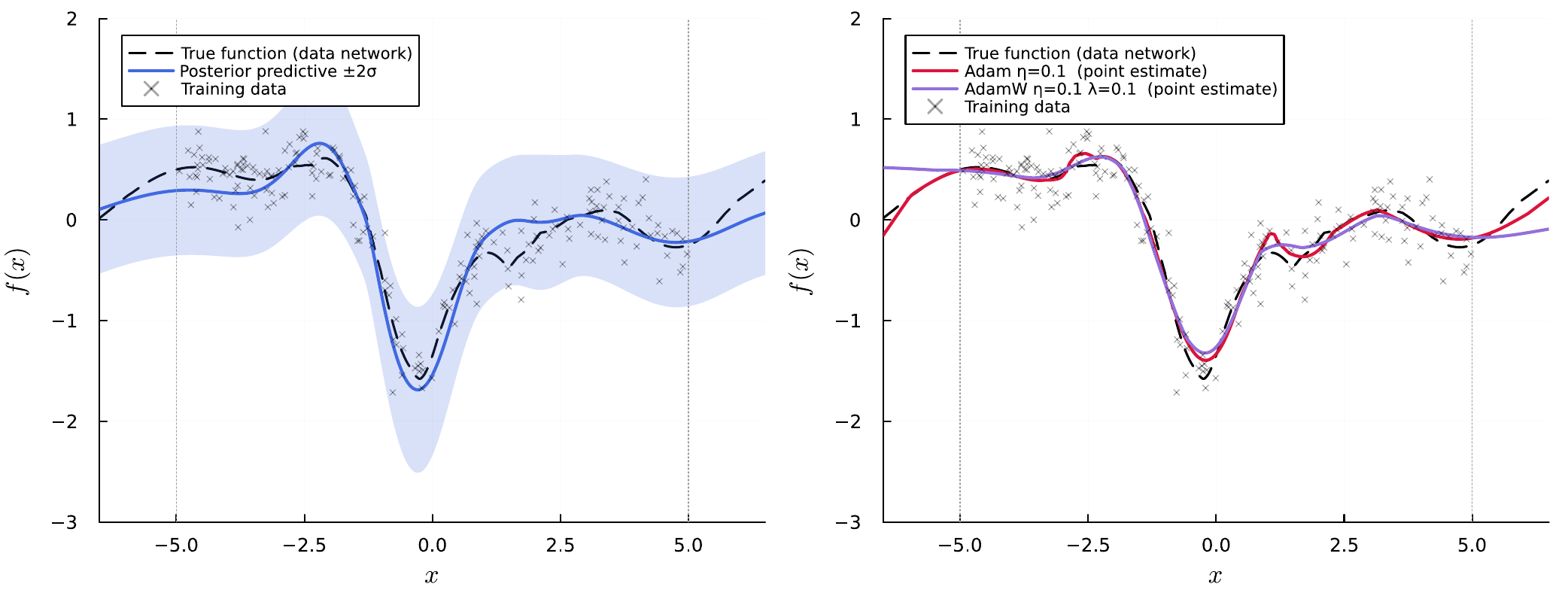}
  \caption{%
    Model mismatch experiment ($N=200$, training range $[-5,5]$, $\beta=0.2$).
    The data-generating network has two wider hidden layers ($d=12,10$);
    both methods learn with the two-hidden-layer model network ($d=6,5$).
    \emph{Left}: DMA posterior predictive mean (solid) and $\pm 2\sigma$
    intervals (shaded) versus the true function (dashed).
    Training points are shown as crosses; dotted vertical lines mark the
    training boundaries $x = \pm 5$.
    \emph{Right}: Adam ($\eta=0.1$, red) and AdamW ($\eta=0.1$, $\lambda=0.1$, purple)
    point predictions; neither carries uncertainty quantification.
    }
  \label{fig:mismatch}
\end{figure}

\paragraph{Results (Table~\ref{tab:mismatch20}).}
Table~\ref{tab:mismatch20} reports extrapolation NLLs over 20 independent
random seeds (varying data realisation, DMA initialisation, and optimiser
initialisation).
We include AdamW \citep{LoshchilovHutter2019} at $\lambda=0.1$ (the weight
decay that minimises median extrapolation NLL over the sweep
$\lambda\in\{0.01,0.1,1.0,10.0\}$ at $\eta=0.1$).
DMA achieves a median extrapolation NLL of $0.01$ (IQR $0.44$) versus
$0.50$ (IQR $2.23$) for Adam.
The well-regularised AdamW achieves a better median NLL of $-0.36$
(IQR $0.73$), comparable to the Bayes-optimal $-0.69$, because the
$\ell_2$ prior acts as implicit regularisation that prevents extreme
extrapolation.
Crucially, both Adam and AdamW are point estimates: their predictive
band is fixed at $\pm 2\beta$ regardless of distance from the training
region.
DMA's IQR ($0.44$) is the tightest of all three methods, and unlike the
optimisers it provides calibrated widening intervals outside the training
range (Figure~\ref{fig:mismatch}), which is the core purpose of Bayesian
inference under mismatch.

\begin{table}[h!]
  \centering
  \caption{Extrapolation NLL over 20 independent seeds ($\downarrow$ better;
           IQR = interquartile range).
           Bayes-optimal $\approx -0.69$ nats.
           AdamW uses $\eta=0.1$, $\lambda=0.1$ (best of
           $\lambda\in\{0.01,0.1,1.0,10.0\}$ by median).}
  \label{tab:mismatch20}
  \begin{tabular}{lccc}
    \toprule
    Method & Median & Mean & IQR \\
    \midrule
    DMA (posterior predictive)   & $\phantom{-}0.01$ & $\phantom{-}0.29$ & $0.44$ \\
    Adam ($\eta = 0.1$)          & $\phantom{-}0.50$ & $\phantom{-}2.21$ & $2.23$ \\
    AdamW ($\eta=0.1$, $\lambda=0.1$) & $-0.36$      & $-0.11$           & $0.73$ \\
    \midrule
    Bayes-optimal                & $-0.69$           & --                & --     \\
    \bottomrule
  \end{tabular}
\end{table}

\paragraph{Discussion.}
The contrast between DMA and the gradient-based methods is not primarily about
median NLL under mismatch: a well-regularised AdamW can match or exceed DMA's
point-estimate accuracy because $\ell_2$ regularisation acts as a Gaussian
prior and prevents the worst extrapolation failures.
The key distinction is \emph{what the methods provide}: DMA propagates residual
weight uncertainty into the predictive distribution, yielding intervals that
widen structurally outside the training range; Adam and AdamW output a point
estimate with a fixed $\pm 2\beta$ ribbon that conveys no information about
epistemic uncertainty.
The median versus IQR comparison captures this reliability difference:
DMA has the tightest IQR ($0.44$) of the three methods, while Adam's large
IQR ($2.23$) and high mean ($2.21$) reveal catastrophic failures on seeds
where the point estimate extrapolates in the wrong direction.
AdamW suppresses some of these failures through regularisation (IQR $0.73$),
but it requires knowing the right $\lambda$, and it provides no mechanism to
signal when extrapolation is unreliable.


\section{Larger-Scale BNN Experiment}
\label{app:large}

The experiment in Section~\ref{sec:bnn:experiment} and
Appendix~\ref{app:experiments} uses a small architecture (83 weights,
$N=200$) to keep the presentation tractable.
This appendix demonstrates that DMA scales to a substantially larger
network without any algorithmic changes.

\paragraph{Baseline choice.}
At 1932 weights the three Bayesian baselines used for the small network
cease to be applicable, leaving Adam as the only practical comparison.

\emph{Diagonal Laplace} requires a diagonal Hessian evaluation at the MAP
point.
Computing it via forward-mode AD over the gradient costs
$\mathcal{O}(N_W^2 \times N)$ scalar operations — roughly $720\,\text{M}$
for $N_W{=}1900$ and $N{=}1500$, already slow — while the full Hessian
used for the 83-weight case (via \texttt{ForwardDiff.hessian}) scales as
$\mathcal{O}(N_W^3 \times N)$, requiring $\approx 10^{12}$ operations and
$\sim\!29\,\text{MB}$ for the full matrix.
Beyond the cost, the diagonal approximation discards all inter-layer weight
correlations that are structurally important in deeper networks, and the MAP
loss surface at this scale contains many saddle directions where the diagonal
Hessian is negative — requiring ad-hoc clamping that grows less defensible
as the network widens.

\emph{IVON}~\citep{shen2024variational} requires a single-sample MC
gradient per step to maintain the curvature EMA.
As shown in Appendix~\ref{app:experiments:IVON}, even for the 83-weight
network a 54-configuration hyperparameter sweep found only one configuration
that converges, and only after 2000 epochs.
At 1932 weights the MC gradient is higher-variance still, and no evidence
suggests the instability improves with scale.

\emph{Expectation Propagation} is excluded for the same reasons given in
Appendix~\ref{app:experiments:EP}: cavity division yields negative-precision
messages on product and ReLU factors, and the failure rate grows with network
depth and width.

DMA avoids all three failure modes: it replaces the Hessian with
linear-cost message passing ($\mathcal{O}(N_W \times N)$ per sweep), removes
the MC sampling step entirely, and eliminates cavity division by construction.
Adam is therefore the sole comparison below.

\subsection{Setup}
\label{app:large:setup}

\paragraph{Architecture.}
We keep the same 1D input and standardised five-component feature map
$\varphi\colon\mathbb{R}\to\mathbb{R}^5$
($x$, $e^{-(x+1)^2}$, $e^{-x^2}$, $e^{-(x-1)^2}$, $\sin x$,
standardised over $[-5,5]$ and augmented with a constant bias to give a
six-dimensional learnable input), but replace the two hidden layers of
widths $(6,5)$ with four hidden layers of widths $(6,12,48,24)$ and
expand the output from one real-valued channel to four.
The layer structure is
\[
  6 \to 6 \to 12 \to 48 \to 24 \to 4
\]C
with leaky-ReLU activations ($\alpha = 0.5,\,0.5,\,0.8,\,0.1$ per layer)
and a four-dimensional real-valued output ($\beta = 0.1$ per channel).
Total learnable weights:
$6{\times}6 + 6{\times}12 + 12{\times}48 + 48{\times}24 + 24{\times}4
 = 36 + 72 + 576 + 1152 + 96 = 1932$,
a factor of $\mathbf{23}\times$ more than the baseline 83-weight network.

\paragraph{Data.}
$N=1500$ training inputs are drawn uniformly from $[-4,4]$ and the four
output channels are generated from the same network (correctly specified
model); random seed fixed for reproducibility.

\paragraph{Training.}
100 mini-batches (15 examples per batch), at most 100 epochs, tolerance $0.1$.
No hyperparameter tuning beyond the defaults in
Appendix~\ref{app:experiments:setup}.

\subsection{Results}
\label{app:large:results}

\begin{table}[h]
  \centering
  \caption{DMA training summary: small baseline vs.\ large network.
    NLL is the per-example per-output training log-likelihood
    $\frac{1}{NK}\sum_{i,k}\log p(y_{ik}\mid x_i)$ at the final epoch
    (lower / more negative is better).
    Per-epoch time excludes the first epoch (JIT warm-up).}
  \label{tab:large:summary}
  \begin{tabular}{lrrrrrr}
    \toprule
    Architecture & Weights & $N$ & Out & Epochs & NLL & Per-epoch \\
    \midrule
    $8{\to}6{\to}5{\to}1$                 &  83 &  200 & 1 & 17 & $-0.16$ & $0.8\,\text{ms}$  \\
    $6{\to}6{\to}12{\to}48{\to}24{\to}4$  & 1932 & 1500 & 4 &  3 & $-0.73$ & $67\,\text{ms}$ \\
    \bottomrule
  \end{tabular}
\end{table}

\paragraph{Results.}
Training converges at epoch 3 with NLL $-0.73$ nats/ex per output,
compared to $-0.16$ nats/ex for the small baseline and a Bayes-optimal
of $-1.38$ nats/ex (at $\beta=0.1$).
Figures~\ref{fig:large:fit:dma} and~\ref{fig:large:fit:adam} show the
posterior predictive for DMA and the best-converged Adam run ($\eta=0.01$,
100 epochs) across all four output channels.
DMA's uncertainty bands widen outside the training range; Adam's fixed
$\pm 2\beta$ band does not adapt.
Figure~\ref{fig:large:weights} shows the Hinton diagram of posterior weight
means, and Figure~\ref{fig:large:comparison} shows the NLL convergence
trajectories.
No numerical failures occur throughout DMA training, confirming
Corollary~\ref{cor:no-neg-prec} at this scale.

\begin{figure}[h]
  \centering
  \includegraphics[width=\linewidth]{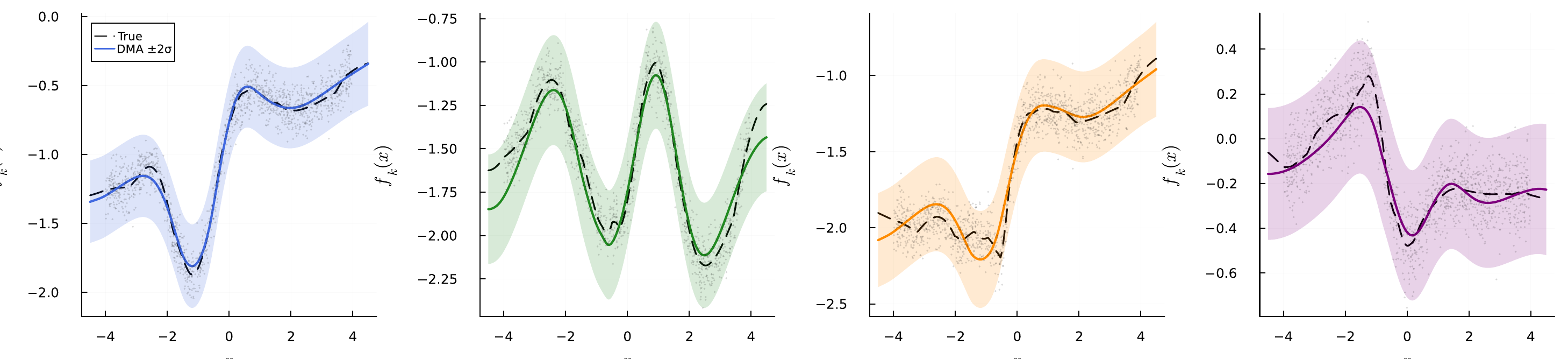}
  \caption{DMA posterior predictive ($6{\to}6{\to}12{\to}48{\to}24{\to}4$,
    $1932$ weights, $N=1500$, 3 epochs, $0.72\,\text{s}$ total).
    Each panel shows one of the four output channels: posterior predictive
    mean (solid) with $\pm 2\sigma$ bands against the true function (dashed);
    training points as faint dots.
    }
  \label{fig:large:fit:dma}
\end{figure}

\begin{figure}[h]
  \centering
  \includegraphics[width=\linewidth]{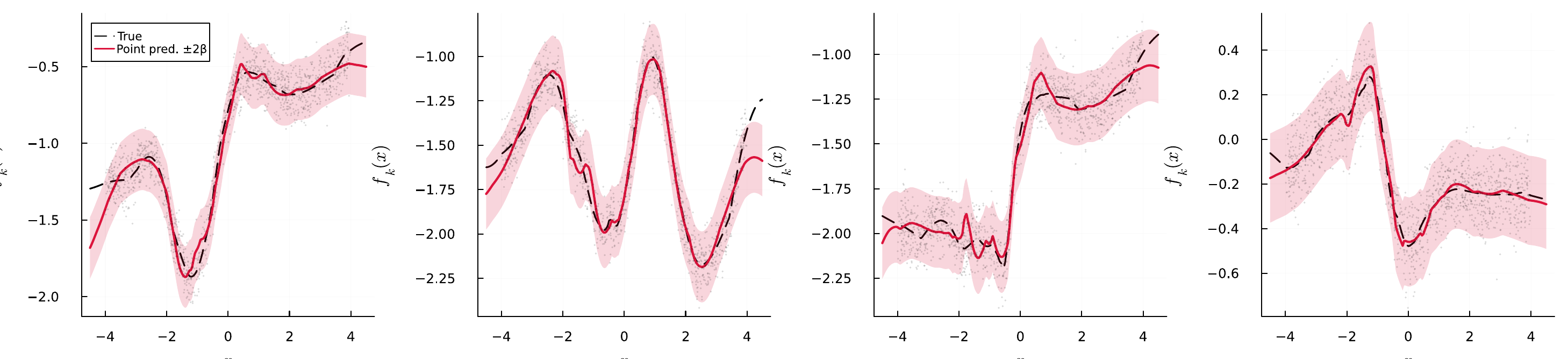}
  \caption{Adam ($\eta=0.01$, 100 epochs, $1.7\,\text{s}$ total) point
    predictions with fixed $\pm 2\beta$ bands on the same four output channels.
    The confidence interval is constant across the input range because a
    point estimate carries no epistemic uncertainty.
    }
  \label{fig:large:fit:adam}
\end{figure}

\begin{figure}[h]
  \centering
  \includegraphics[width=0.85\linewidth]{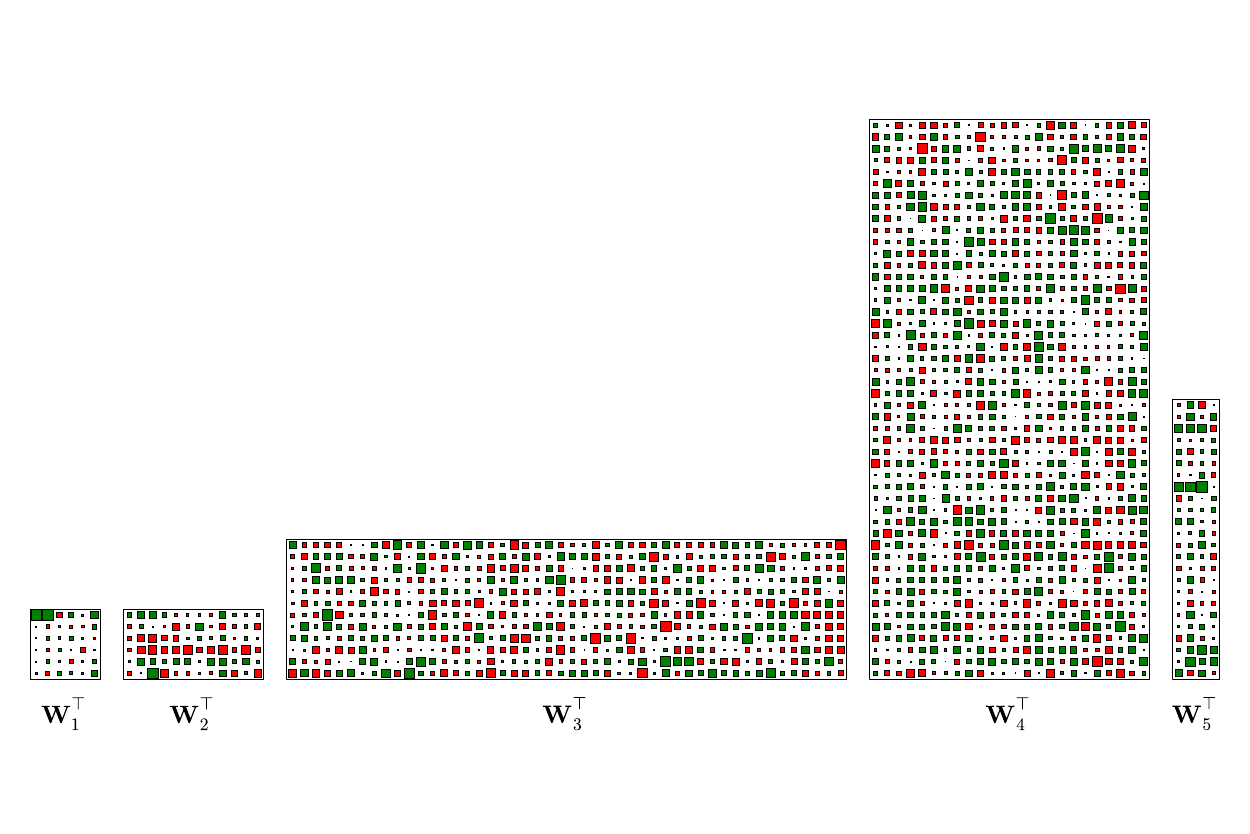}
    \vspace{-1.1cm}
  \caption{Hinton diagram of DMA posterior weight means after 3 epochs.
    Each block corresponds to one weight matrix; square size encodes
    $|\mu_w|$, colour encodes sign.
    }
  \label{fig:large:weights}
\end{figure}

\begin{figure}[h]
  \centering
  \includegraphics[width=0.65\linewidth]{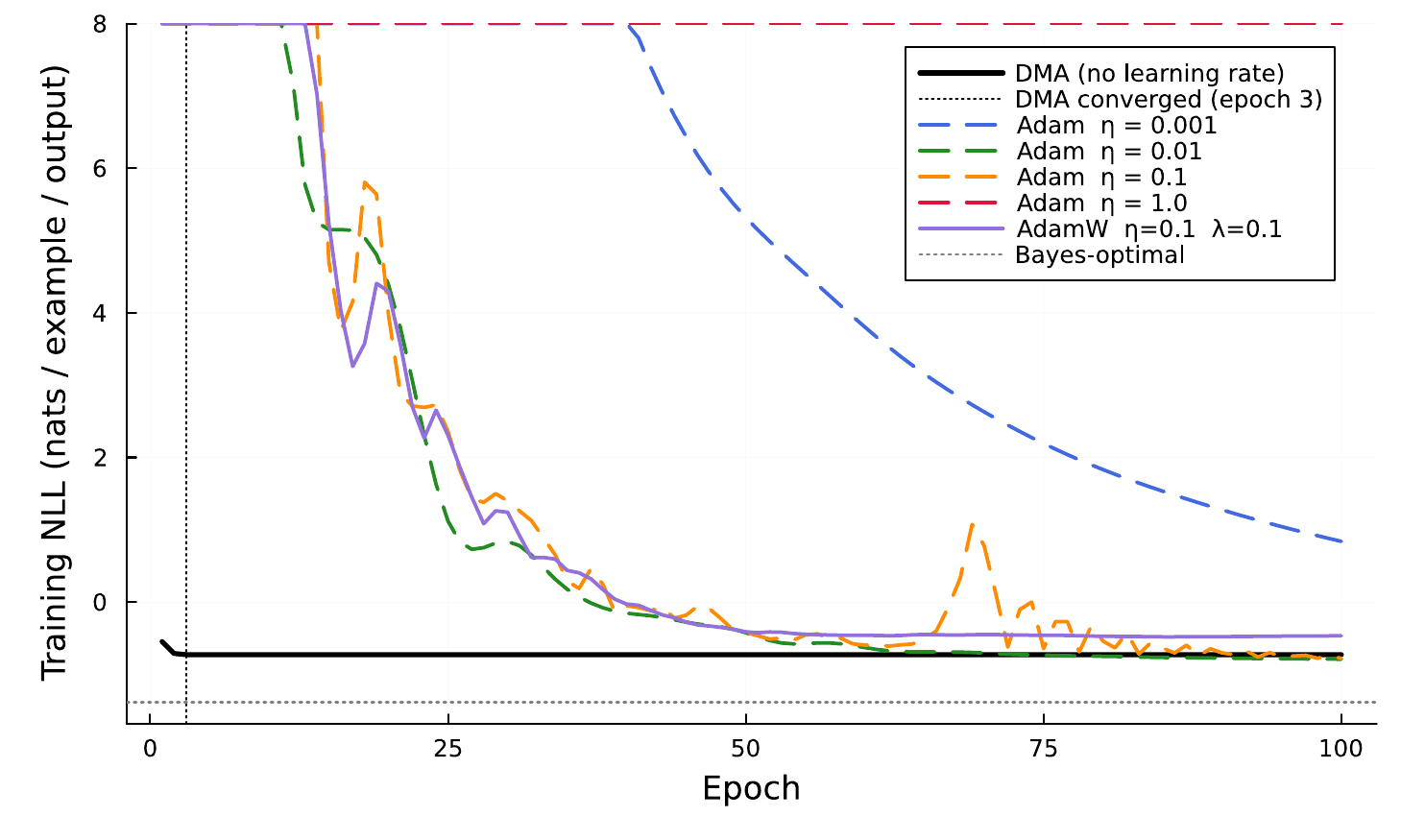}
    \vspace{-0.4cm}
  \caption{Training NLL (per example per output) vs.\ epoch.
    DMA (solid black) converges at epoch 3 with no learning-rate tuning.
    Adam with $\eta=0.01$ reaches NLL $-0.79$ after 100 epochs; $\eta=0.001$
    has not converged (NLL $+0.84$); $\eta=1.0$ diverges. Values above $8$ are clipped to $8$ for display.
    AdamW $\eta=0.1$, $\lambda=0.1$ (solid purple) reaches NLL $-0.47$:
    weight decay regularises the trajectory but the chosen $\eta$ does not
    reach the same final NLL as the best-tuned Adam.
    }
  \label{fig:large:comparison}
\end{figure}

\paragraph{Runtime.}
DMA trains in $0.72\,\text{s}$ (3 epochs, $67\,\text{ms}$/epoch
steady-state); Adam with $\eta=0.01$ takes $1.7\,\text{s}$ for 100 epochs
($17\,\text{ms}$/epoch); AdamW ($\eta=0.1$, $\lambda=0.1$) takes $2.4\,\text{s}$
for 100 epochs ($24\,\text{ms}$/epoch), slightly slower than Adam due to the
additional weight-decay step.
The DMA per-epoch ratio of ${\approx}4\times$ over Adam reflects heavier
message-passing bookkeeping relative to a plain forward--backward pass;
because DMA converges in far fewer epochs the total wall-clock time is
less than half that of Adam or AdamW at this scale.
The DMA per-epoch cost of $67\,\text{ms}$ is $84\times$ that of the small
network ($0.8\,\text{ms}$), consistent with the $23\times$ weight and
$7.5\times$ data scale-up.

\subsection{Calibration}
\label{app:large:calibration}

Figure~\ref{fig:large:calibration} shows the extrapolation calibration
curve for DMA on the 1932-weight network (test points in
$[-6,\,-4]\cup[4,\,6]$, i.e.\ beyond the training support $[-4,\,4]$).
The calibration error $\Delta=+0.026$ is close to the $\Delta=+0.01$
obtained on the 83-weight network (Appendix~\ref{app:experiments:calibration}),
despite a $23\times$ increase in parameters and $7.5\times$ increase in
training data.
Diagonal Laplace is omitted here because it does not scale to this network
size (Appendix~\ref{app:large}, introductory paragraph).

\begin{figure}[h]
  \centering
  \includegraphics[width=0.45\linewidth]{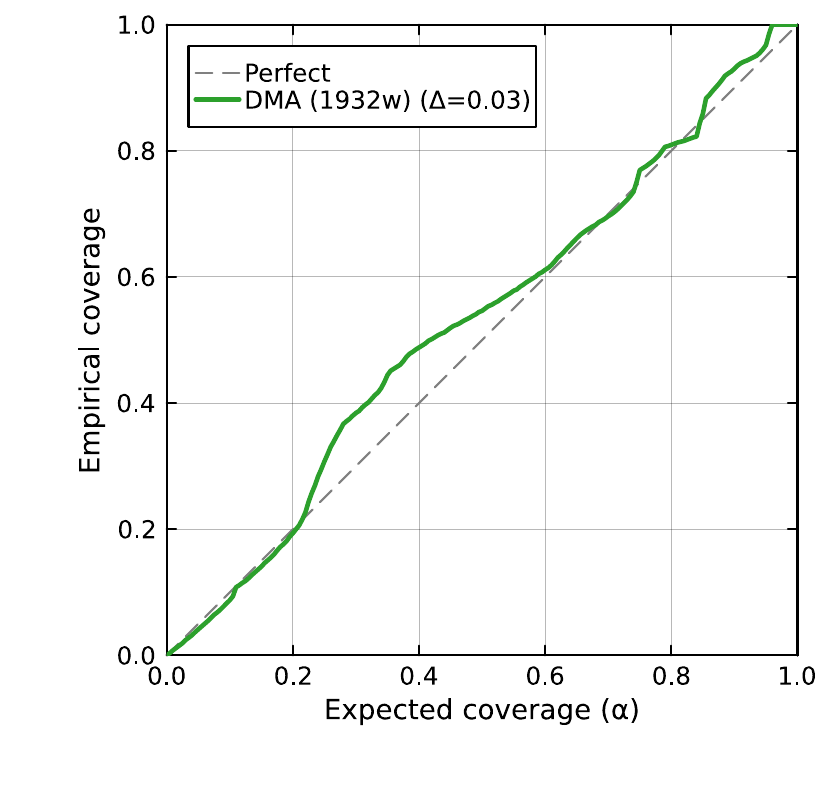}
    \vspace{-0.6cm}
  \caption{Extrapolation calibration curve for DMA on the 1932-weight network
    ($N=1500$), averaged across all four output channels.
    Test points in $[-6,\,-4]\cup[4,\,6]$.
    $\Delta=+0.026$ (slightly conservative); the 83-weight result has a
    20-seed median $\Delta=-0.09$ (Appendix~\ref{app:experiments:calibration}).
    }
  \label{fig:large:calibration}
\end{figure}

\begin{remark}
  The tolerance-based stopping criterion used here fires when the
  relative NLL change falls below $0.1$.
  Naturally, a maximum number of epochs can also be set. In this context, 
  as oscillating likelihoods can occur, a patience-based alternative (i.e. halt when the NLL has not improved
  over the best seen in the last $P$ epochs) can also be used.
\end{remark}

\end{document}